\documentclass[11pt]{article} 
\usepackage{tikz}
\usepackage{blindtext}
\usepackage{xcolor}
\usepackage{siunitx}

\usepackage{amsmath,amsfonts,amssymb,amsthm,mathtools,mathrsfs}
\usepackage{thmtools}
\usepackage{thm-restate}

\usepackage{algorithm}
\usepackage{algorithmic}

\usepackage{bm}
\usepackage{graphicx}
\usepackage{multirow,booktabs,longtable,authblk}
\usepackage{float}
\usepackage{subcaption}
\usepackage{enumerate}
\usepackage{fancyhdr}
\usepackage{hhline}

\usepackage[top=2.5cm, bottom=2.5cm, left=3cm, right=3cm]{geometry}
\usepackage[numbers,sort&compress]{natbib}
\usepackage{hyperref}
\usepackage[numbers,sort&compress]{natbib} 

\newtheorem{assumption}{Assumption}

\newtheorem{definition}{Definition}

\newtheorem{lemma}{Lemma}
\newtheorem{proposition}{Proposition}
\newtheorem{remark}{Remark}

\allowdisplaybreaks[4]

\title{A Quantitative Approximation Framework for Flow Distillation in Diffusion Models\(^\dag\)\footnotetext{\dag~All authors contributed equally to this work and are listed alphabetically. The corresponding author is Hanfei Zhou. The work of Weiguo Gao is partially supported by the National Key R\&D Program of China (Grant No. 2021YFA1003305). The work of Lei Shi is partially supported by the National Natural Science Foundation of China (Grant No. 12571099).}}

\author[1,2]{Weiguo Gao}
\author[1,2]{Ming Li}
\author[1,2]{Lei Shi}
\author[1]{Hanfei Zhou}

\affil[1]{School of Mathematical Sciences, \linebreak
Fudan University, Shanghai, 200433, China 
}
\affil[2]{
Shanghai Key Laboratory of Contemporary Applied Mathematics, \linebreak
Fudan University, Shanghai, 200433, China \linebreak
Email: wggao@fudan.edu.cn, mingli23@m.fudan.edu.cn, \linebreak
leishi@fudan.edu.cn, zhouhf23@m.fudan.edu.cn
}
\date{}
\begin{document}
\maketitle

\begin{abstract}
We develop a quantitative framework for diffusion distillation by viewing few step sampling as approximation through compositions of learned flow maps. For trajectory distillation of the probability flow ODE, we show that low noise multimodal regimes separate score approximability from dynamical stability: the score remains efficiently approximable, while small local errors may be strongly amplified by stiff flow dynamics. In a Gaussian mixture Ornstein--Uhlenbeck model, we prove time uniform \(L^p(p_t)\) score approximation by ReLU and ReQU networks with explicit polylogarithmic complexity, and derive a computable Lipschitz bound \(L(t)\) for the flow velocity. The stability factor \(\exp\bigl(\int_s^t L(u)\mathrm du\bigr)\) can grow exponentially as noise decreases and mixture separation increases. Comparing this certificate with a certified local Lipschitz budget for one step students identifies regimes of direct distillation difficulty, without implying an approximation lower bound. We also show that deep residual compositions control global transport error through propagated local errors, and that equalizing cumulative stability yields an optimal nonuniform segmentation. With eight segments, this grid reduces final mean relative MSE by up to \(51.9\%\) versus uniform grids.
\end{abstract}

{\textbf{Keywords and phrases:} Diffusion Model; Trajectory Distillation; Probability Flow ODE; Approximation Theory; Gaussian Mixture Model.}

\section{Introduction}
\label{sec:introduction}

This paper studies diffusion distillation with deep neural networks from the perspective of approximation theory. The goal is to compress a long-horizon generative transport into a small number of learned maps while preserving sample quality. Diffusion models have become a leading paradigm for generative modeling, enabling high-fidelity synthesis across modalities and finding increasing use in scientific and engineering applications~\cite{gao2026terminally,ho2020denoising,song2021sde}. In their continuous-time formulation~\cite{song2021sde}, a forward stochastic process transports a complex data distribution to a simple Gaussian prior and induces a family of marginals \(\{p_t\}_{t\in[0,T]}\). Generation then amounts to reversing this transport through either the reverse-time stochastic differential equation (SDE) or the deterministic probability flow ordinary differential equation (ODE), which share the same marginal laws. This formulation reduces learning to estimating the score \(\nabla_{\bm x}\log p_t(\bm x)\), or equivalently a velocity field, and reduces sampling to numerical integration of the resulting dynamics. Thus, generation quality depends both on the accuracy of the learned vector field and on the way approximation errors propagate along the long-horizon flow.

The need for long-horizon numerical integration makes inference with diffusion models expensive, and has motivated substantial work on accelerating sampling. Early approaches mainly focused on fast solvers that reuse a pretrained model and reduce the number of function evaluations through improved discretizations or higher-order ODE schemes, including DDIM~\cite{song2020denoising}, DPM-Solver~\cite{lu2022dpmsolver}, UniPC~\cite{zhao2023unipc}, and EDM-style sampling designs~\cite{karras2022edm}. These methods can substantially reduce computation, but they still require multiple model evaluations and their sample quality may degrade when the step budget becomes very small. More recently, distillation has become a widely used alternative for accelerating diffusion sampling. In particular, \emph{trajectory distillation} compresses a pretrained sampler by learning a small number of student updates that imitate the teacher's multistep reverse-time denoising trajectory, as in progressive distillation~\cite{salimans2022progressive}, consistency models~\cite{song2023consistency}, and data-free variants such as BOOT~\cite{gu2023boot}. Other distillation strategies, including distribution matching~\cite{zhu2025simple,yin2024one,yin2024improved} and adversarial distillation~\cite{sauer2024adversarial}, instead emphasize global distributional alignment or perceptual objectives and need not reproduce the teacher dynamics. In parallel, another line of work trains generic one-step or few-step generators from scratch, such as mean-flow models~\cite{geng2025mean} and shortcut diffusion models~\cite{frans2025one}. We focus on trajectory distillation because it retains the teacher as a reference transport and turns few-step sampling into an error-propagation problem: local discrepancies can be amplified by the Gr\"onwall factor associated with the Jacobian of the reference flow, especially in low-noise, multimodal regimes where the dynamics become stiff.

Despite rapid empirical progress, the theoretical understanding of diffusion distillation remains limited. In particular, it is still unclear how to quantify the error incurred when a teacher's long-horizon transport is replaced by only a small number of learned maps, and how these errors accumulate under composition. A theoretical viewpoint is therefore useful for clarifying the limitations of few-step students and for guiding principled algorithm and time-grid design. We now formalize this perspective in terms of probability-flow maps.

Consider a forward SDE on \(\mathbb R^d\) over \(t\in[0,T]\),
\begin{equation*}
\mathrm d \bm X_t = \bm f_t(\bm X_t)\mathrm dt + \bm g_t(\bm X_t)\mathrm d\bm B_t,
\quad
\bm X_0 \sim p_0,
\end{equation*}
and let \(p_t\) denote the marginal distribution of \(\bm X_t\) for \(t\in[0,T]\). Under standard assumptions ensuring existence of the reverse-time dynamics, there is an associated probability flow ODE on \([0,T]\) whose time-\(t\) marginal coincides with \(p_t\) for all \(t\in[0,T]\)~\cite{song2021sde}. In the variance-preserving (VP) setting, this ODE can be written as
\begin{equation*}
\dot{\bm x}_t = \bm v(\bm x_t,t),\quad t\in[0,T],
\quad
\bm v(\bm x,t) = -\beta_t \bigl(\bm x + \nabla_{\bm x}\log p_t(\bm x)\bigr).
\end{equation*}
For any \(0\le s < t \le T\), let \(\Phi_{s\leftarrow t}\) denote the corresponding flow map, meaning that if \(\bm x_\cdot\) solves the ODE then \(\Phi_{s\leftarrow t}(\bm x_t)=\bm x_s\). A common approach to distillation is to introduce a coarse time grid \(T=t_0>t_1>\dots>t_n=0\) and learn a small collection of student maps \(\{\Psi_{t_k\leftarrow t_{k-1}}\}_{k=1}^n\) that approximate the corresponding teacher flow segments \(\{\Phi_{t_k\leftarrow t_{k-1}}\}_{k=1}^n\). The resulting distilled generator is the composed map
\begin{equation*}
\Psi_{0\leftarrow T}
\coloneqq
\Psi_{t_n\leftarrow t_{n-1}}\circ\cdots\circ\Psi_{t_1\leftarrow t_0}
\approx
\Phi_{t_n\leftarrow t_{n-1}}\circ\cdots\circ\Phi_{t_1\leftarrow t_0}
=\Phi_{0\leftarrow T}.
\end{equation*}
This decomposition leads to the following analysis strategy. We first quantify the network complexity required to approximate the time-dependent score, or equivalently the probability-flow velocity, associated with the marginals \(\{p_t\}_{t\in[0,T]}\), and track how this complexity depends on the geometry of the mixture distribution. We then combine these approximation estimates with stability bounds for the induced flow to obtain efficient neural-network approximations of finite-time flow maps \(\Phi_{s\leftarrow t}\), including explicit depth and width scalings in terms of the target accuracy and the stability quantities controlling error amplification. This viewpoint yields a segment-wise measure of distillation difficulty and leads to a non-uniform time discretization that balances this difficulty over the full horizon.

To keep the analysis tractable without removing the mechanism responsible for multimodal stiffness, we specialize to an isotropic Gaussian mixture model (GMM) initialization evolved by an Ornstein--Uhlenbeck (OU) forward process. This specialization does not trivialize the distillation problem; rather, it gives a closed-form setting in which the two relevant sources of difficulty can be isolated. In this GMM--OU setting, each marginal \(p_t\) remains a Gaussian mixture, and hence both the score \(\nabla_{\bm x}\log p_t(\bm x)\) and the Hessian \(\nabla_{\bm x}^2\log p_t(\bm x)\) have explicit formulas. These formulas yield computable bounds on the Jacobian of the probability-flow velocity field, which in turn give flow map stability estimates through Gr\"onwall arguments and quantify the amplification of perturbations over time. The same closed-form structure separates two effects. At each fixed time, score approximation remains a nontrivial neural approximation problem whose complexity depends on the mixture geometry. Across time, local approximation errors are propagated by the flow and may accumulate into large trajectory errors when the integrated stability bound is large. This separation underlies the subsequent theory and motivates the stability-balanced discretization strategy used in multistage distillation.

\paragraph{Contributions.}
The main contributions of this paper are summarized as follows.
\begin{itemize}
\item We prove a constructive \(L^p(p_t)\) approximation theorem for the GMM score by ReLU--ReQU networks, uniformly over the marginal family \(\{p_t\}_{t\in[0,T]}\). In particular, for any \(0<\varepsilon<1/2\), we construct \(\hat{\bm s}_\varepsilon(\cdot,t)\in\mathcal F(Q,G,S,B)\) such that
\begin{equation*}
\bigl(\mathbb E_{\bm X_t\sim p_t}\bigl[\lVert \hat{\bm s}_\varepsilon(\bm X_t,t)-\nabla_{\bm x}\log p_t(\bm X_t)\rVert^p\bigr]\bigr)^{1/p}
\le C\varepsilon,
\end{equation*}
with depth and width scaling as \(Q\le C_1\bigl(\log K+(1+\log N_{\varepsilon,\sigma})^2\bigr)\) and \(G\le C_2K\bigl(d+(1+\log N_{\varepsilon,\sigma})^3\bigr)\), where \(N_{\varepsilon,\sigma}=2+d+K+\varepsilon^{-1}+\sigma+\sigma^{-1}\).

\item Using the closed-form Hessian of the marginal density, we derive an explicit and computable bound \(L(t)\) for the spatial Lipschitz constant of the probability-flow velocity. This yields the flow map stability estimate
\begin{equation*}
\lVert\Phi_{s\leftarrow t}(\bm x)-\Phi_{s\leftarrow t}(\bm y)\rVert_2
\le
\exp\Bigl(\int_s^t L(\tau)\mathrm d\tau\Bigr)\lVert\bm x-\bm y\rVert_2.
\end{equation*}
As made explicit in~\autoref{prop:flow_stability}, the stability exponent depends on both the length of the time interval and the multimodal structure of the marginal distribution. This gives a quantitative way to compare the difficulty of different flow segments and explains why low-noise multimodal regimes are unfavorable for long-interval distillation.

\item We identify a stability-certificate mismatch criterion for one-step distillation. In low-noise multimodal regimes, the computed tail Gr\"onwall factor can exceed a certified local Lipschitz budget for a one-step student with the prescribed architecture. This is a comparison of two a priori certificates, not an \(L^p\)-approximation lower bound. Motivated by this comparison, we prove that the dynamical target \(\Phi_{0\leftarrow T}\) can be approximated by a clipped deep residual composition \(\mathcal R_{n,R}(\mathcal F)\) that follows the semigroup structure of the exact flow. In particular, for every fixed horizon \(T\), \(n=\widetilde{\mathcal O}_{T,\mathrm{model}}(\varepsilon^{-1})\) residual blocks suffice, the per-block complexity matches the compact fixed-time score approximation scale, and the global error is controlled up to the stability amplification factor.

\item We convert the Jacobian-based stability bound into a non-uniform time discretization rule by equalizing the segment-wise stability exponents \(\Lambda_k=\int_{t_k}^{t_{k-1}}L(u)\mathrm du\). Equivalently, with \(A(t)=\int_0^t L(u)\mathrm du\), the decreasing generation grid is chosen by the uniform partition \(A(t_k)=\frac{n-k}{n}A(T)\). Among all partitions with \(n\) segments, this construction minimizes the largest segmentwise exponent \(\max_k\Lambda_k\), and hence the largest segmentwise Gr\"onwall certificate. The resulting stability-balanced grid refines the time discretization where the certified amplification is largest and provides an explicit, model-driven strategy for multistage distillation.
\end{itemize}

\paragraph{Organization.}
The remainder of the paper is organized as follows. \autoref{sec:theoretical_setup} introduces the GMM--OU setting and derives closed-form formulas for the score, the Hessian, and the stability bound of the probability-flow ODE. \autoref{sec:approximation_theory_for_score_learning} studies score approximation by ReLU--ReQU networks and quantifies how the required network complexity depends on the mixture parameters and the target accuracy. \autoref{sec:distillation_and_stability_guided_time_discretization} uses the Jacobian-based stability bound to identify a stability-certificate mismatch criterion for one-step distillation, proves an efficient approximation result for finite-time flow maps by deep residual compositions, and develops a stability-balanced time discretization rule. \autoref{sec:experiments} provides empirical illustrations, and the appendices contain proofs and supporting technical results. Detailed proofs of the main theorems and auxiliary lemmas are provided in the~\hyperref[allapp]{Appendix}.

\section{Related Work}
\label{sec:related}

\paragraph{Theoretical studies of diffusion models.}
One line of theory studies diffusion models under general distributional assumptions and establishes approximation, statistical, and computational guarantees. Approximation results show that deep networks can approximate score functions while avoiding the curse of dimensionality under suitable structural assumptions~\cite{de2022convergence,oko2023diffusion,tang2024adaptivity,yakovlev2025simultaneous}. Statistical analyses further show that the sample complexity can depend on the intrinsic dimension of the data distribution rather than the ambient dimension~\cite{chen2023score,chen2023sampling,lee2022convergence}. Other works connect score matching with neural tangent kernels to obtain generalization bounds~\cite{han2024neural}, or provide evaluation-complexity guarantees for higher-order training-free sampling schemes~\cite{li2025faster}.

A second line of work studies diffusion models in analytically tractable regimes, with Gaussian mixtures serving as a canonical model class for deriving sharper guarantees. In this setting, prior work has studied parameter identifiability and recovery~\cite{gatmiry2024learning,shah2023learning}, solver robustness~\cite{jia2024structured,liang2025unraveling,guo2023gaussian}, and mechanisms for exact inversion or polynomial approximation~\cite{zhang2024exact,wang2026error}. The same mixture framework has also been used to characterize distributional dynamics and sampling limits, including critical windows for feature emergence~\cite{li2024critical}, sample-complexity estimates~\cite{cui2023analysis}, and discretization bounds~\cite{wang2024diffusion}. It has further enabled analyses of training objectives, including limitations of guidance~\cite{chidambaram2024does}, memorization trade-offs~\cite{buchanan2025edge,gao2024flow,zhou2026smoothing}, and velocity field matching~\cite{vuong2025we}. These works provide important insights into diffusion dynamics, but they do not quantify the composition error that arises when a continuous-time flow is replaced by a short sequence of learned maps, which is the central operation in trajectory distillation.

\paragraph{Flow-map and trajectory distillation.}
Recent work has begun to learn maps directly between arbitrary noise levels. The work in~\cite{sabour2025align} introduces continuous-time flow-map distillation and studies how learned maps scale from one-step to multistep generation. The method in~\cite{tong2025flowmap} proposes data-free flow-map distillation and includes a correction mechanism for compounding trajectory errors. These methods emphasize scalable learning objectives and empirical generation quality. Our contribution is complementary: in a closed-form GMM--OU setting, we derive constructive network-complexity bounds, a computable certificate-level analysis of dynamical amplification under composition, and a minimax rule for balancing the largest segmentwise stability certificate.

Our work provides a quantitative approximation analysis of composition in the GMM--OU setting. We formulate distillation as flow-map approximation under composition and derive bounds that separate segment-level approximation error from dynamical amplification. Although the forward dynamics are linear, the multimodal initialization induces a nonlinear score field in which posterior mixture weights can change rapidly, capturing stiffness and perturbation sensitivity in low-noise regimes. Closed-form expressions for the score and its Jacobian yield computable bounds on the time-dependent Jacobian of the probability-flow ODE. This analysis identifies regimes in which the standard stability certificate becomes unfavorable for few-step students even when the score is accurately approximated at fixed times, and it motivates a grid that balances the largest certified segmentwise amplification.

\paragraph{Stability, stiffness, and Lipschitz constraints.}
A related line of work studies stability and smoothness properties of diffusion dynamics, since stiff vector fields and large Lipschitz constants can hinder both training and sampling~\cite{zhang2024stability,yang2023lipschitz}. This regularity viewpoint has led to end-time scaling and Lipschitz-guided schedule design for mitigating singular behavior and mode collapse near the origin~\cite{yang2023lipschitz,chen2025lipschitz}. Within Gaussian mixture settings, researchers have also derived component-independent Lipschitz bounds and designed moment-matching solvers with rigorous sampling stability guarantees~\cite{liang2025unraveling,guo2023gaussian}.

Our theory connects this stability viewpoint directly to multistage distillation. We show that the difficulty of a flow segment is determined not only by fixed-time score approximation, but also by the stiffness of the induced flow and its sensitivity to perturbations. This reveals regimes in which the score field is comparatively easy to approximate while the composed transport remains highly unstable. Based on the resulting time-dependent Jacobian bound, we derive a stability-balanced discretization rule that refines the time grid where the certificate is large and minimizes the largest segmentwise certified amplification. This certificate-level property does not by itself imply minimization of the exact end-to-end error.

\section{Theoretical Setup}
\label{sec:theoretical_setup}

This section follows the modeling framework in~\cite{de2022convergence} and provides the technical setup for the later analysis. We construct an analytically tractable setting in which the main quantities needed below can be written in closed form. Specifically, we consider a VP diffusion whose marginals \(\{p_t\}_{t\in[0,T]}\) are generated by an OU process~\cite{song2021sde}, initialized from an isotropic GMM. In this setting, we derive closed-form expressions for the score \(\nabla_{\bm x}\log p_t\) and the Hessian \(\nabla_{\bm x}^2\log p_t\), and then use these formulas to obtain explicit Lipschitz and flow-stability bounds for the associated probability-flow ODE.

\subsection{OU forward process and GMM initialization}
\label{subsec:ou_forward_process_and_gmm_initialization}

We adopt the standard VP forward noising process used in score-based diffusion models~\cite{ho2020denoising,song2021sde}, which can be written as an OU process on \(\mathbb R^d\):
\begin{equation}
\label{eq:ou_sde}
\mathrm d\bm X_t=-\beta_t \bm X_t\mathrm dt+\sqrt{2\beta_t}\mathrm d\bm B_t,\quad t\in[0,T].
\end{equation}
Here \(\bm B_t\) is a standard \(d\)-dimensional Brownian motion and \(\beta_t\geq0\) is a possibly time-dependent noise schedule. Define
\begin{equation*}
m_t \coloneqq \exp\Bigl(-\int_0^t \beta_s\mathrm ds\Bigr),\quad \sigma_t^2 \coloneqq 1-m_t^2 = 1-\exp\Bigl(-2\int_0^t \beta_s\mathrm ds\Bigr).
\end{equation*}
The solution of~\eqref{eq:ou_sde} then admits the explicit representation
\begin{equation*}
\bm X_t=m_t\bm X_0+\sigma_t\bm\varepsilon,\quad \bm\varepsilon\sim\mathcal N(\bm 0,\bm I_d),\quad \bm\varepsilon\ \text{independent of }\bm X_0.
\end{equation*}
Equivalently, the conditional law of \(\bm X_t\) given \(\bm X_0\) is
\begin{equation}
\label{eq:ou_conditional}
\bm X_t\mid\bm X_0\sim\mathcal N\bigl(m_t\bm X_0,\sigma_t^2\bm I_d\bigr).
\end{equation}
To keep the forward marginals analytically tractable while retaining multimodality, we initialize the OU process from an isotropic Gaussian mixture, as specified in~\autoref{ass:gmm_ou_init}. Since the OU transition kernel is Gaussian, each initial mixture component remains Gaussian under the forward evolution, and the marginal \(p_t\) stays within the Gaussian-mixture family. This closed-form structure allows us to derive explicit formulas for the score \(\nabla_{\bm x}\log p_t\) and the Hessian \(\nabla_{\bm x}^2\log p_t\), which are the main ingredients in the flow-stability bounds developed below.

For the time-conditioned approximation theorem in~\autoref{sec:approximation_theory_for_score_learning}, we use the following regularity condition on the noise schedule. The closed-form identities in this section only require the quantities above to be well-defined.

\begin{assumption}
\label{ass:suzuki_schedule}
There are fixed constants \(0<\underline\beta\le\overline\beta<\infty\) and \(C_\beta\ge1\) such that
\begin{equation*}
\underline\beta\le\beta_t\le\overline\beta,
\qquad
\sup_{r\ge1}\sup_{t\in[0,T]}\bigl|\partial_t^r\beta_t\bigr|\le C_\beta.
\end{equation*}
This is the fixed-bound version of the schedule regularity in Assumption~2.5 in~\cite{oko2023diffusion}. Only the restriction to the compact interval \([0,T]\) is used below, and the proof of Lemma~B.1 in the same work applies verbatim on this interval with constants depending on \(\underline\beta,\overline\beta,C_\beta\), and \(T\). In particular, every constant schedule \(\beta_t\equiv\beta>0\) is included.
\end{assumption}

\begin{assumption}
\label{ass:gmm_ou_init}
We assume that the initial distribution \(p_0(\bm x)\) is a \(d\)-dimensional isotropic GMM, i.e.,
\begin{equation}
\label{eq:gmm_p0}
p_0(\bm x)=\sum_{k=1}^K \pi_k\mathcal N\bigl(\bm x;\bm \mu_k,\sigma^2 \bm I_d\bigr),\quad\text{where }
\sigma>0,
\quad
\pi_k>0,
\quad
\sum_{k=1}^K \pi_k=1.
\end{equation}
We write \(\pi_{\min}\coloneqq\min_{1\le k\le K}\pi_k\). Components with zero mixture weight may be removed before applying the results below.
\end{assumption}

By a slight abuse of notation, we use \(\mathcal{N}(\bm x;\bm \mu,\bm \Sigma)\) to denote the corresponding Gaussian density at \(\bm x\). Under~\autoref{ass:gmm_ou_init} and the transition kernel in~\eqref{eq:ou_conditional}, the marginal \(p_t(\bm x)\) remains a GMM for any \(t\in[0,T]\). Define
\begin{equation*}
s_t^2 \coloneqq m_t^2\sigma^2+\sigma_t^2,
\quad
\bm m_k(t)\coloneqq m_t\bm \mu_k.
\end{equation*}
Then
\begin{equation}
\label{eq:gmm_pt}
p_t(\bm x)
=
\sum_{k=1}^K \pi_k\mathcal N\bigl(\bm x;\bm m_k(t),s_t^2 \bm I_d\bigr).
\end{equation}
The same derivation extends to anisotropic mixtures by replacing \(\sigma^2 \bm I_d\) with \(\bm \Sigma_k\). For clarity, we focus on the isotropic setting in~\eqref{eq:gmm_p0}. We next provide closed-form expressions for the GMM score \(\nabla_{\bm x}\log p_t\) and Hessian \(\nabla_{\bm x}^2\log p_t\) in~\autoref{prop:gmm_score_hess_closed}.

\begin{proposition}
\label{prop:gmm_score_hess_closed}
For any \(t\in[0,T]\), let \(p_t\) be an isotropic GMM
\begin{equation*}
p_t(\bm x)=\sum_{k=1}^K \pi_k\phi_k(\bm x,t),
\quad
\phi_k(\bm x,t)\coloneqq\mathcal N\bigl(\bm x;\bm m_k(t),s_t^2 \bm I_d\bigr),
\end{equation*}
where \(\pi_k>0\) and \(\sum_{k=1}^K \pi_k=1\). Define the posterior mixture weights
\begin{equation*}
\gamma_k(\bm x,t)
\coloneqq
\frac{\pi_k\phi_k(\bm x,t)}{\sum_{j=1}^K \pi_j\phi_j(\bm x,t)},
\quad
\sum_{k=1}^K \gamma_k(\bm x,t)=1.
\end{equation*}
Define the conditional mean and covariance of the component centers under the weights \(\gamma_k(\bm x,t)\):
\begin{equation*}
\begin{aligned}
\bar{\bm m}(\bm x,t)&\coloneqq\sum_{k=1}^K \gamma_k(\bm x,t)\bm m_k(t),\\
\Sigma_\gamma(\bm x,t)
&\coloneqq\sum_{k=1}^K \gamma_k(\bm x,t)\bigl(\bm m_k(t)-\bar{\bm m}(\bm x,t)\bigr)\bigl(\bm m_k(t)-\bar{\bm m}(\bm x,t)\bigr)^\top.
\end{aligned}
\end{equation*}
Then the score function and Hessian satisfy
\begin{equation}
\label{eq:score_hess_display}
\nabla_{\bm x} \log p_t(\bm x)
=-\frac{1}{s_t^2}\bigl(\bm x-\bar{\bm m}(\bm x,t)\bigr),\quad
\nabla_{\bm x}^2 \log p_t(\bm x)
=-\frac{1}{s_t^2}\bm I_d+\frac{1}{s_t^4}\Sigma_\gamma(\bm x,t).
\end{equation}
\end{proposition}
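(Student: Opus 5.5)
The plan is a direct differentiation of \(\log p_t\), organized around the gradient of each Gaussian component. For fixed \(t\), each \(\phi_k(\cdot,t)\) is smooth and positive. Its gradient is
\begin{equation*}
\nabla_{\bm x}\phi_k(\bm x,t)=-\frac{1}{s_t^2}\bigl(\bm x-\bm m_k(t)\bigr)\phi_k(\bm x,t).
\end{equation*}
Since \(p_t>0\), we have \(\nabla_{\bm x}\log p_t=\nabla_{\bm x}p_t/p_t\). Summing the component gradients with weights \(\pi_k\) and dividing by \(p_t\) gives
\begin{equation*}
\nabla_{\bm x}\log p_t=-\frac{1}{s_t^2}\sum_k\gamma_k\bigl(\bm x-\bm m_k(t)\bigr)=-\frac{1}{s_t^2}\bigl(\bm x-\bar{\bm m}(\bm x,t)\bigr),
\end{equation*}
where the last equality uses \(\sum_k\gamma_k=1\). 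This proves the first identity in~\eqref{eq:score_hess_display}.

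For the Hessian, I will differentiate the score formula once more. Differentiating \(-\frac{1}{s_t^2}(\bm x-\bar{\bm m})\) gives
\begin{equation*}
\nabla_{\bm x}^2\log p_t=-\frac{1}{s_t^2}\bm I_d+\frac{1}{s_t^2}\nabla_{\bm x}\bar{\bm m},
\end{equation*}
where \(\nabla_{\bm x}\bar{\bm m}\) is the Jacobian matrix with rows indexed by the components of \(\bar{\bm m}\). The key step is computing \(\nabla_{\bm x}\gamma_k\). I will write \(\log\gamma_k=\log\pi_k+\log\phi_k-\log p_t\), so that
\begin{equation*}
\nabla_{\bm x}\log\gamma_k=\nabla_{\bm x}\log\phi_k-\nabla_{\bm x}\log p_t=-\frac{1}{s_t^2}(\bm x-\bm m_k)+\frac{1}{s_t^2}(\bm x-\bar{\bm m})=\frac{1}{s_t^2}\bigl(\bm m_k-\bar{\bm m}\bigr).
\end{equation*}
The \(\bm x\) terms cancel here, which is the point of computing the log-gradient. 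It follows that \(\nabla_{\bm x}\gamma_k=\frac{1}{s_t^2}\gamma_k(\bm m_k-\bar{\bm m})\), viewed as a column vector.

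Next, since the \(\bm m_k(t)\) do not depend on \(\bm x\),
\begin{equation*}
\nabla_{\bm x}\bar{\bm m}=\sum_k\bm m_k\bigl(\nabla_{\bm x}\gamma_k\bigr)^\top=\frac{1}{s_t^2}\sum_k\gamma_k\bm m_k(\bm m_k-\bar{\bm m})^\top.
\end{equation*}
Because \(\sum_k\gamma_k(\bm m_k-\bar{\bm m})=\bm 0\), I can subtract \(\bar{\bm m}\sum_k\gamma_k(\bm m_k-\bar{\bm m})^\top=\bm 0\). This turns the sum into \(\sum_k\gamma_k(\bm m_k-\bar{\bm m})(\bm m_k-\bar{\bm m})^\top=\Sigma_\gamma\). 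Substituting gives \(\nabla_{\bm x}^2\log p_t=-\frac{1}{s_t^2}\bm I_d+\frac{1}{s_t^4}\Sigma_\gamma\), which is the second identity.

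There is no real obstacle, since the argument is pure calculus. The only point needing care is the centering step that turns the Jacobian of \(\bar{\bm m}\) into the symmetric covariance \(\Sigma_\gamma\). Along the way I will keep the transposes consistent. I will also note that \(s_t^2\ge\min(\sigma^2,1)>0\), because \(s_t^2=m_t^2\sigma^2+1-m_t^2\) is a convex combination of \(\sigma^2\) and \(1\). This justifies all the divisions for every \(t\in[0,T]\).
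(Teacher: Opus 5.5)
Your proposal is correct and follows the paper's proof essentially step for step. Both arguments write the score as the posterior-weighted average of the component log-gradients and derive \(\nabla_{\bm x}\gamma_k=s_t^{-2}\gamma_k(\bm m_k-\bar{\bm m})\) from the log-gradient identity. Both then use the centering \(\sum_k\gamma_k(\bm m_k-\bar{\bm m})=\bm 0\) to turn the Jacobian of \(\bar{\bm m}\) into \(s_t^{-2}\Sigma_\gamma\). Your remark that \(s_t^2\ge\min\{\sigma^2,1\}>0\) is a harmless extra justification that the paper leaves implicit.
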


\begin{proof}
Fix \(t\in[0,T]\) and \(\bm x\in\mathbb R^d\). For each component density
\(\phi_k(\bm x,t)=\mathcal N\bigl(\bm x;\bm m_k(t),s_t^2\bm I_d\bigr)\), we have
\begin{equation}
\label{eq:grad_log_phi_revised}
\nabla_{\bm x}\log \phi_k(\bm x,t)
=
-\frac{1}{s_t^2}\bigl(\bm x-\bm m_k(t)\bigr).
\end{equation}
Since \(p_t(\bm x)=\sum_{k=1}^K \pi_k\phi_k(\bm x,t)\), differentiating \(\log p_t(\bm x)\) gives
\begin{equation*}
\begin{aligned}
\nabla_{\bm x}\log p_t(\bm x)
&=
\frac{1}{p_t(\bm x)}\sum_{k=1}^K \pi_k\nabla_{\bm x}\phi_k(\bm x,t) =\sum_{k=1}^K \frac{\pi_k\phi_k(\bm x,t)}{\sum_{j=1}^K \pi_j\phi_j(\bm x,t)}\nabla_{\bm x}\log \phi_k(\bm x,t)\\
&=
\sum_{k=1}^K \gamma_k(\bm x,t)\nabla_{\bm x}\log \phi_k(\bm x,t)= -\frac{1}{s_t^2}\sum_{k=1}^K \gamma_k(\bm x,t)\bigl(\bm x-\bm m_k(t)\bigr)\\
&=
-\frac{1}{s_t^2}\Bigl(\bm x-\sum_{k=1}^K \gamma_k(\bm x,t)\bm m_k(t)\Bigr)=-\frac{1}{s_t^2}\bigl(\bm x-\bar{\bm m}(\bm x,t)\bigr),
\end{aligned}
\end{equation*}
which proves the score identity in~\eqref{eq:score_hess_display}.

We next compute the Hessian. Since \(\gamma_k(\bm x,t)=\pi_k\phi_k(\bm x,t)/p_t(\bm x)\), taking a log-gradient yields
\begin{equation}
\label{eq:grad_log_gamma_revised}
\nabla_{\bm x}\log \gamma_k(\bm x,t)
=
\nabla_{\bm x}\log \phi_k(\bm x,t)-\nabla_{\bm x}\log p_t(\bm x).
\end{equation}
Multiplying~\eqref{eq:grad_log_gamma_revised} by \(\gamma_k(\bm x,t)\) and substituting~\eqref{eq:grad_log_phi_revised} together with the score formula above gives
\begin{equation*}
\nabla_{\bm x}\gamma_k(\bm x,t)
=
\gamma_k(\bm x,t)\bigl(\nabla_{\bm x}\log \phi_k(\bm x,t)-\nabla_{\bm x}\log p_t(\bm x)\bigr)
=
\frac{\gamma_k(\bm x,t)}{s_t^2}\bigl(\bm m_k(t)-\bar{\bm m}(\bm x,t)\bigr).
\end{equation*}
Using \(\bar{\bm m}(\bm x,t)=\sum_{k=1}^K \gamma_k(\bm x,t)\bm m_k(t)\) and the fact that \(\bm m_k(t)\) does not depend on \(\bm x\), we obtain the Jacobian
\begin{equation*}
\begin{aligned}
\nabla_{\bm x}\bar{\bm m}(\bm x,t)
&=
\sum_{k=1}^K \bm m_k(t)\nabla_{\bm x}\gamma_k(\bm x,t)^\top
=\frac{1}{s_t^2}\sum_{k=1}^K \gamma_k(\bm x,t)\bm m_k(t)\bigl(\bm m_k(t)-\bar{\bm m}(\bm x,t)\bigr)^\top\\
&=
\frac{1}{s_t^2}\sum_{k=1}^K \gamma_k(\bm x,t)\bigl(\bm m_k(t)-\bar{\bm m}(\bm x,t)\bigr)\bigl(\bm m_k(t)-\bar{\bm m}(\bm x,t)\bigr)^\top\\
&=
\frac{1}{s_t^2}\Sigma_\gamma(\bm x,t),
\end{aligned}
\end{equation*}
where we used \(\sum_{k=1}^K \gamma_k(\bm x,t)\bigl(\bm m_k(t)-\bar{\bm m}(\bm x,t)\bigr)=\bm 0\) to pass to the third line.
Finally, we differentiate
\(\nabla_{\bm x}\log p_t(\bm x)=-s_t^{-2}\bigl(\bm x-\bar{\bm m}(\bm x,t)\bigr)\)
and obtain
\begin{equation*}
\begin{aligned}
\nabla_{\bm x}^2\log p_t(\bm x)
&=
-\frac{1}{s_t^2}\Bigl(\bm I_d-\nabla_{\bm x}\bar{\bm m}(\bm x,t)\Bigr) = -\frac{1}{s_t^2}\bm I_d+\frac{1}{s_t^4}\Sigma_\gamma(\bm x,t),
\end{aligned}
\end{equation*}
which is the Hessian identity in~\eqref{eq:score_hess_display}.
\end{proof}

\begin{remark}
The posterior mean
\[
\bar{\bm m}(\bm x,t)
=
\sum_{k=1}^K\gamma_k(\bm x,t)\bm m_k(t)
\]
has the form of a Nadaraya--Watson estimator with Gaussian kernels centered at the contracted means \(\bm m_k(t)\). The score itself is the scaled residual
\[
\nabla_{\bm x}\log p_t(\bm x)
=
s_t^{-2}\bigl(\bar{\bm m}(\bm x,t)-\bm x\bigr).
\]
Thus, in this mixture setting, the nonlinear part of the score is determined by a kernel regression over the component means~\cite{lyu2025resolving}.
\end{remark}

\subsection{Stability of the probability flow ODE}
\label{subsec:stability_of_the_probability_flow_ode}

We next study the stability of the deterministic transport map induced by the OU forward process. To this end, we work with the probability flow ODE~\cite{song2020denoising,song2021sde}, a deterministic dynamical system whose time-\(t\) marginal agrees with the SDE marginal \(p_t\). It can be derived by rewriting the diffusion term in the Fokker--Planck equation as a transport term through \(\Delta p_t=\nabla\cdot\bigl(p_t\nabla\log p_t\bigr)\), which yields a continuity equation with velocity field \(\bm v(\bm x,t)\).

For the OU process in~\eqref{eq:ou_sde}, the probability flow ODE is
\begin{equation}
\label{eq:pf_ode}
\dot{\bm x}_t=\bm v(\bm x_t,t),
\quad
\text{where }\bm v(\bm x,t)\coloneqq-\beta_t\bigl(\bm x+\nabla_{\bm x}\log p_t(\bm x)\bigr).
\end{equation}
When \(\nabla_{\bm x}\log p_t(\bm x)\) is the exact score of the SDE marginal, the solution of~\eqref{eq:pf_ode} induces the same family of marginals \(\{p_t\}_{t\in[0,T]}\) as~\eqref{eq:ou_sde}. We denote the associated flow map by \(\Phi_{s\leftarrow t}\), meaning that if \(\bm x_\cdot\) solves~\eqref{eq:pf_ode}, then \(\Phi_{s\leftarrow t}(\bm x_t)=\bm x_s\). The probability flow ODE~\eqref{eq:pf_ode} is also directly connected to deterministic diffusion samplers. In particular, denoising diffusion implicit models (DDIM)~\cite{song2020denoising} can be viewed as a discretization of~\eqref{eq:pf_ode} in the small-step limit. Therefore, the stability properties of \(\Phi_{s\leftarrow t}\) provide a continuous-time perspective on how deterministic samplers propagate perturbations.

From~\eqref{eq:pf_ode} and the closed-form Hessian in~\eqref{eq:score_hess_display}, the Jacobian of the velocity field with respect to \(\bm x\) is
\begin{equation}
\label{eq:dv_formula}
\nabla_{\bm x} \bm v(\bm x,t)
=
-\beta_t\bigl(\bm I_d+\nabla_{\bm x}^2\log p_t(\bm x)\bigr)
=
-\beta_t\biggl[\Bigl(1-\frac{1}{s_t^2}\Bigr)\bm I_d+\frac{1}{s_t^4}\Sigma_\gamma(\bm x,t)\biggr].
\end{equation}
To bound the spatial Lipschitz constant of \(\bm v(\bm x,t)\), define the maximal pairwise separation of the component means by
\begin{equation*}
\mathrm{diam}(t)\coloneqq\max_{i,j}\lVert \bm m_i(t)-\bm m_j(t)\rVert_2=m_t\cdot\mathrm{diam}(0).
\end{equation*}
Since \(\Sigma_\gamma(\bm x,t)\) is the covariance matrix of a discrete distribution supported on \(\{\bm m_k(t)\}_{k=1}^K\), whose diameter is \(\mathrm{diam}(t)\), the scalar Popoviciu inequality gives, for every unit vector \(\bm u\),
\begin{equation*}
\bm u^\top\Sigma_\gamma(\bm x,t)\bm u
=\operatorname{Var}_\gamma(\bm u^\top\bm m_k(t))
\leq \frac{\mathrm{diam}(t)^2}{4}.
\end{equation*}
Taking the supremum over \(\lVert\bm u\rVert_2=1\) yields
\begin{equation*}
\lVert \Sigma_\gamma(\bm x,t)\rVert_{2}\leq \frac{\mathrm{diam}(t)^2}{4}.
\end{equation*}
This gives the following spatial Lipschitz and flow stability bounds.

\begin{proposition}
\label{prop:flow_stability}
Define \(L(t)\) by
\begin{equation}
\label{eq:L_def}
L(t)\coloneqq\beta_t\biggl(\Bigl|1-\frac{1}{s_t^2}\Bigr|+\frac{\mathrm{diam}(t)^2}{4s_t^4}\biggr).
\end{equation}
Then for any \(\bm x\in\mathbb R^d\) and almost every \(t\in[0,T]\),
\begin{equation}
\label{eq:lip_v}
\lVert\nabla_{\bm x} \bm v(\bm x,t)\rVert_{2}\leq L(t).
\end{equation}
Moreover, for any \(0\leq s<t\leq T\) and any \(\bm x,\bm y\in\mathbb R^d\), the flow map \(\Phi_{s\leftarrow t}\) satisfies
\begin{equation}
\label{eq:flow_lip}
\lVert\Phi_{s\leftarrow t}(\bm x)-\Phi_{s\leftarrow t}(\bm y)\rVert_2
\le
\exp\Bigl(\int_s^t L(\tau)\mathrm d\tau\Bigr)\lVert\bm x-\bm y\rVert_2.
\end{equation}
\end{proposition}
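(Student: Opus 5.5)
The bound \eqref{eq:lip_v} comes straight from the Jacobian formula \eqref{eq:dv_formula} and the Popoviciu estimate already derived. For fixed \(t\) and \(\bm x\), the matrix \(\nabla_{\bm x}\bm v(\bm x,t)\) equals \(-\beta_t\) times the sum of \(\bigl(1-s_t^{-2}\bigr)\bm I_d\) and \(s_t^{-4}\Sigma_\gamma(\bm x,t)\). The triangle inequality for the spectral norm, together with \(\beta_t\ge0\), gives
\[
\lVert\nabla_{\bm x}\bm v(\bm x,t)\rVert_2\le \beta_t\Bigl(\Bigl|1-\tfrac{1}{s_t^2}\Bigr|+\tfrac{1}{s_t^4}\lVert\Sigma_\gamma(\bm x,t)\rVert_2\Bigr).
\]
Inserting \(\lVert\Sigma_\gamma\rVert_2\le \mathrm{diam}(t)^2/4\) yields \(L(t)\). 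We need \(s_t^2>0\), which holds for every \(t\) because \(s_t^2=m_t^2\sigma^2+\sigma_t^2\) with \(\sigma>0\) and \(m_t>0\). The qualifier ``almost every \(t\)'' only allows \(\beta_t\) to be defined up to null sets. Under \autoref{ass:suzuki_schedule} the bound in fact holds for every \(t\).

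For \eqref{eq:flow_lip}, we first check that the flow is well defined. The velocity \(\bm v(\cdot,t)\) is \(C^\infty\) in \(\bm x\), since the \(\gamma_k\) are smooth softmax functions. The bound \(L(t)\) is integrable on \([0,T]\), since \(\beta_t\) is bounded and \(s_t\) is bounded below by \(\min(\sigma^2,1)^{1/2}\) or a similar constant. Hence \(\bm v\) is globally Lipschitz in \(\bm x\) with an integrable constant, and Carathéodory theory gives unique global solutions both forward and backward in time. So \(\Phi_{s\leftarrow t}\) is defined on all of \(\mathbb R^d\).

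Take two solutions \(\bm x_\tau\) and \(\bm y_\tau\) with \(\bm x_t=\bm x\) and \(\bm y_t=\bm y\), run backward from \(t\) to \(s\). Set \(\bm e_\tau=\bm x_\tau-\bm y_\tau\). By the mean value theorem in integral form along the segment between \(\bm x_\tau\) and \(\bm y_\tau\), we have \(\lVert\bm v(\bm x_\tau,\tau)-\bm v(\bm y_\tau,\tau)\rVert_2\le L(\tau)\lVert\bm e_\tau\rVert_2\). Hence
\[
\Bigl|\tfrac{\mathrm d}{\mathrm d\tau}\lVert\bm e_\tau\rVert_2\Bigr|\le L(\tau)\lVert\bm e_\tau\rVert_2
\]
almost everywhere. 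To avoid differentiability issues at \(\bm e_\tau=0\), we work with \(\lVert\bm e_\tau\rVert_2^2\), whose derivative is bounded in absolute value by \(2L(\tau)\lVert\bm e_\tau\rVert_2^2\). Alternatively, we use the integral form \(\lVert\bm e_\tau\rVert_2\le\lVert\bm e_t\rVert_2+\int_\tau^t L(u)\lVert\bm e_u\rVert_2\,\mathrm du\) for \(\tau\in[s,t]\).

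Grönwall's inequality, applied in the reversed time variable \(r=t-\tau\), then gives \(\lVert\bm e_s\rVert_2\le \exp\bigl(\int_s^t L(u)\mathrm du\bigr)\lVert\bm x-\bm y\rVert_2\), which is \eqref{eq:flow_lip}. The main point needing care is this backward-in-time application of Grönwall, together with the well-posedness discussion. The norm bound itself is routine.
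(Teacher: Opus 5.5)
Your proof is correct and follows the paper's route. You bound \(\lVert\nabla_{\bm x}\bm v\rVert_2\) by applying the triangle inequality to \eqref{eq:dv_formula} and inserting the Popoviciu bound \(\lVert\Sigma_\gamma\rVert_2\le\mathrm{diam}(t)^2/4\), then apply Gr\"onwall's inequality to the difference of two solutions run backward from \(t\) to \(s\); the details you add (well-posedness of the flow, \(s_t^2\ge\min\{1,\sigma^2\}\), and the squared-norm or integral-form treatment of the backward Gr\"onwall step) are left implicit in the paper but are all sound.
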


\begin{proof}
By~\eqref{eq:dv_formula} and the triangle inequality for operator norms,
\begin{equation*}
\lVert\nabla_{\bm x} \bm v(\bm x,t)\rVert_{2}
\le
\beta_t\biggl(\Bigl|1-\frac{1}{s_t^2}\Bigr|+\frac{1}{s_t^4}\lVert\Sigma_\gamma(\bm x,t)\rVert_{2}\biggr)
\leq L(t),
\end{equation*}
which proves~\eqref{eq:lip_v}. For~\eqref{eq:flow_lip}, applying Gr\"onwall's inequality to the difference of two solutions of~\eqref{eq:pf_ode} gives the stated flow stability estimate.
\end{proof}

\begin{remark}
The bound \(L(t)\) in~\eqref{eq:L_def} contains two terms. The term \(\beta_t|1-1/s_t^2|\) is already present in the Gaussian case, while the term \(\beta_t\mathrm{diam}(t)^2/(4s_t^4)\) is caused by the separation of mixture components. The latter term vanishes for a single Gaussian component and grows when the component variance scale \(s_t^2\) is small or when the mixture means are widely separated. Thus, for strongly multimodal distributions at low noise, the velocity field can be highly sensitive to perturbations, leading to large flow amplification in~\eqref{eq:flow_lip}.
\end{remark}

\begin{remark}
Inequality~\eqref{eq:flow_lip} shows that approximating the full flow map over a long time horizon can accumulate a large amplification factor, thereby magnifying local approximation or numerical errors. This motivates using a time grid to partition the generation process and approximating the flow map on each subinterval separately, as in piecewise distillation methods. Such a strategy helps control error propagation across the composed transport.
\end{remark}

\section{Approximation Theory for Score Learning}
\label{sec:approximation_theory_for_score_learning}

This section studies the network complexity of approximating score functions. For each fixed time \(t\), the target map is \(\bm x\mapsto \nabla_{\bm x}\log p_t(\bm x)\), and we ask how large a neural network must be to approximate this map to a prescribed accuracy. We work with a ReLU--ReQU hypothesis class \(\mathcal F\), which is convenient for implementing the algebraic operations and normalized exponential weights appearing in the Gaussian-mixture score. The main result shows that, in the GMM--OU setting introduced in~\autoref{sec:theoretical_setup}, the score can be approximated uniformly over \(t\in[0,T]\) with explicit quadratic and cubic logarithmic factors in the target accuracy and the model scales for the depth and width, respectively.

\subsection{Hypothesis class}
\label{subsec:hypothesis_class_and_optimal_uniform_approximation_error}

We first define the network architecture used for score approximation. We use feedforward networks with mixed ReLU and ReQU activations. Approximation properties of such architectures have been studied extensively~\cite{belomestny2023simultaneous,li2019better,yang2023nearly,zhou2025expressive}. ReLU units allow exact implementations of truncation and maximum-type operations, while ReQU units provide exact constructions of basic polynomial operations such as squares and products. This mixed architecture is therefore well suited to the algebraic and normalization operations in the GMM score.

Define the activation functions
\begin{equation*}
\sigma_1(z)\coloneqq\mathrm{ReLU}(z)=\max\{z,0\}=(z)_+,
\quad
\sigma_2(z)\coloneqq\mathrm{ReQU}(z)=(z)_+^2.
\end{equation*}
For any bias vector \(\bm b=(b_1,\dots,b_r)^\top\in\mathbb R^r\) and input \(\bm y=(y_1,\dots,y_r)^\top\in\mathbb R^r\), define the coordinatewise shifted activation operator
\begin{equation*}
\bm\sigma_{\ell,\bm b}(\bm y)
\coloneqq
\bigl(\sigma_\ell(y_1+b_1),\dots,\sigma_\ell(y_r+b_r)\bigr)^\top
\colon
\mathbb R^r\to\mathbb R^r,
\quad
\ell\in\{1,2\}.
\end{equation*}
For depth \(L\in\mathbb N\), width sequence \(\{w_i\}_{i=0}^{L+1}\), weight matrices \(\bm W_i\in\mathbb R^{w_{i+1}\times w_i}\), bias vectors \(\bm b_i\in\mathbb R^{w_{i+1}}\), and activation indices \(\ell_i\in\{1,2\}\), define
\begin{equation}
\label{eq:relu_requ_net}
\bm f\colon\mathbb R^{w_0}\to\mathbb R^{w_{L+1}},
\quad
\bm x\mapsto \bm f(\bm x)
\coloneqq
\bm W_L\bm\sigma_{\ell_L,\bm b_L}\bm W_{L-1}
\bm\sigma_{\ell_{L-1},\bm b_{L-1}}
\cdots
\bm W_1\bm\sigma_{\ell_1,\bm b_1}\bm W_0\bm x.
\end{equation}

Throughout, for a matrix \(\bm W=(W_{ij})\), we use the entrywise maximum norm \(\lVert\bm W\rVert_\infty\coloneqq\max_{i,j}|W_{ij}|\); for vectors, \(\lVert\cdot\rVert_\infty\) is the usual maximum norm. Thus, the parameter bound below controls every individual weight and bias.

To control network complexity, for structural parameters \((Q,G,S,B)\in[0,\infty]^4\), define \(\mathcal F(Q,G,S,B)\) as the class of functions \(\bm f\) of the form~\eqref{eq:relu_requ_net} satisfying
\begin{equation}
\label{eq:hypothesis_space_relu_requ}
\begin{aligned}
\mathcal F(Q,G,S,B)
=
\bigl\{
\bm f\colon\mathbb R^{w_0}\to\mathbb R^{w_{L+1}}
\bigm\vert{}
&L\le Q,\quad \max\{w_1, w_2, \cdots, w_L\}\le G,\\
&\displaystyle
\sum_{i=0}^L\lVert\bm W_i\rVert_0+
\sum_{i=1}^L\lVert\bm b_i\rVert_0\le S,\\
&\sup_{0\le k\le L}\lVert\bm W_k\rVert_\infty
\vee
\sup_{1\le k\le L}\lVert\bm b_k\rVert_\infty
\le B
\bigr\}.
\end{aligned}
\end{equation}
The class is not required to be globally bounded in output. This is necessary because the exact GMM score grows linearly as \(\lVert\bm x\rVert\to\infty\). When an output envelope is needed on a compact set \(D\), we use the notation
\begin{equation*}
\mathcal F_D(Q,G,S,B,F)
\coloneqq
\{\bm f\in\mathcal F(Q,G,S,B):\ \lVert\bm f\rVert_{L^\infty(D)}\le F\}.
\end{equation*}
When there is no ambiguity, we write \(\mathcal F\coloneqq\mathcal F(Q,G,S,B)\).

\subsection{Main approximation theorem for the GMM score}
\label{subsec:main_approximation_theorem_for_score_matching}

In the isotropic GMM--OU setting, the score has the explicit form
\begin{equation*}
\nabla_{\bm x}\log p_t(\bm x)
=
-\frac{1}{s_t^2}
\Bigl(
\bm x-\sum_{k=1}^K\gamma_k(\bm x,t)\bm m_k(t)
\Bigr),
\quad
\bm m_k(t)=m_t\bm\mu_k.
\end{equation*}
Moreover, the posterior mixture weights \(\gamma_k(\bm x,t)\) are obtained by normalizing exponential quadratic weights. More precisely, define the unnormalized log-weights
\begin{equation}
\label{eq:logit_def}
a_k(\bm x,t)
\coloneqq
\log\pi_k
-
\frac{\lVert\bm x-\bm m_k(t)\rVert^2}{2s_t^2},\quad 
\gamma_k(\bm x,t)
=
\frac{\exp\bigl(a_k(\bm x,t)\bigr)}
{\sum_{j=1}^K\exp\bigl(a_j(\bm x,t)\bigr)}.
\end{equation}
Because all mixture components have the same covariance, the quadratic term \(-\lVert\bm x\rVert_2^2/(2s_t^2)\) is common to all logits and cancels under softmax normalization. Hence, the posterior weights can equivalently be written as
\begin{equation}
\label{eq:affine_logit_def}
\gamma_k(\bm x,t)
=
\frac{\exp(b_k(\bm x,t))}{\sum_{j=1}^K\exp(b_j(\bm x,t))},
\qquad
b_k(\bm x,t)
\coloneqq
\log\pi_k
+
\frac{m_t}{s_t^2}\bm\mu_k^\top\bm x
-
\frac{m_t^2}{2s_t^2}\lVert\bm\mu_k\rVert_2^2.
\end{equation}
This affine-logit representation removes an unnecessary quadratic dependence on \(\bm x\) from the approximation argument. The following theorem gives a constructive approximation of the resulting score.

\begin{restatable}{theorem}{scoreLpApprox}
\label{thm:score_Lp_approx}
Assume~\autoref{ass:suzuki_schedule} and~\autoref{ass:gmm_ou_init}. Let \(p_t\) be given by~\eqref{eq:gmm_pt}, and let \(p\in[1,\infty)\). For every \(0<\varepsilon<1/2\), there exists a ReLU--ReQU network \(\widehat{\bm s}_\varepsilon\colon\mathbb R^d\times[0,T]\to\mathbb R^d\) such that, uniformly in \(t\in[0,T]\),
\begin{equation}
\label{eq:score_Lp_target}
(
\mathbb E_{\bm X_t\sim p_t}
[
\lVert
\widehat{\bm s}_\varepsilon(\bm X_t,t)
-
\nabla_{\bm x}\log p_t(\bm X_t)
\rVert_2^p
]
)^{1/p}
\le C\varepsilon.
\end{equation}
Define
\begin{equation*}
N_{\varepsilon,\sigma}
\coloneqq
2+d+K+\varepsilon^{-1}+\sigma+\sigma^{-1}.
\end{equation*}
The constant \(C\) depends only on \(p,T\), the schedule constants in~\autoref{ass:suzuki_schedule}, \(\pi_{\min}^{-1}\), and \(\mu_{\max}\coloneqq\max_k\lVert\bm\mu_k\rVert_2\); it has no additional dependence on \(d,K,\varepsilon\), or \(\sigma\).

The network may be chosen in \(\mathcal F(Q,G,S,B)\) with
\begin{equation*}
\begin{aligned}
Q&\le C_1\Bigl(\log K+\bigl(1+\log N_{\varepsilon,\sigma}\bigr)^2\Bigr),\\
G&\le C_2K\Bigl(d+\bigl(1+\log N_{\varepsilon,\sigma}\bigr)^3\Bigr),\\
S&\le C_3\Bigl[
Kd+(d+K)\Bigl(\log K+\bigl(1+\log N_{\varepsilon,\sigma}\bigr)^4\Bigr)
\Bigr],\\
B&\le C_4N_{\varepsilon,\sigma}^{11+2/p}
\le C_4N_{\varepsilon,\sigma}^{13},
\end{aligned}
\end{equation*}
where \(C_1,\ldots,C_4\) have the same allowed dependencies as \(C\) and no additional dependence on \(d,K,\varepsilon\), or \(\sigma\). The exponent \(11+2/p\) is obtained by tracking the scalar coefficient and softmax modules; the coarser exponent \(13\) is uniform over all \(p\in[1,\infty)\).
\end{restatable}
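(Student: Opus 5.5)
We build the network by directly emulating the closed-form score
\[
\nabla_{\bm x}\log p_t(\bm x)=s_t^{-2}\Bigl(m_t\sum_k\gamma_k(\bm x,t)\bm\mu_k-\bm x\Bigr),
\]
with $\gamma_k$ the softmax of the affine logits $b_k$ in~\eqref{eq:affine_logit_def}. The construction has four ReLU--ReQU modules.

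\textbf{(i) Scalar time modules.} These approximate $t\mapsto m_t$, $m_t/s_t^2$, $m_t^2/s_t^2$ and $1/s_t^2$ on $[0,T]$. Under~\autoref{ass:suzuki_schedule}, $m_t$ is smooth with derivative bounds controlled as in Lemma~B.1 of~\cite{oko2023diffusion}. Moreover $s_t^2=1-m_t^2(1-\sigma^2)$ lies in $[\min(1,\sigma^2),\max(1,\sigma^2)]$. So $1/s_t^2$ is obtained from a polynomial approximation of $m_t^2$ followed by a reciprocal module on an interval whose endpoints depend on $\sigma^{\pm1}$. ReQU gives exact products and squares. The reciprocal is computed by Newton iteration or a Chebyshev expansion with $\mathcal O(\log N_{\varepsilon,\sigma})$ stages of depth $\mathcal O(\log N_{\varepsilon,\sigma})$. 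This yields the squared logarithm in $Q$, and the weight magnitude is polynomial in $N_{\varepsilon,\sigma}$.

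\textbf{(ii) Logits.} Each $b_k(\bm x,t)$ is formed as a product of a time coefficient with the linear form $\bm\mu_k^\top\bm x$, computed with ReQU products. This costs $Kd$ nonzero weights for the inner products.

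\textbf{(iii) Softmax.} We subtract $\max_j b_j$, computed by a ReLU tournament of depth $\mathcal O(\log K)$, so that all shifted logits lie in $(-\infty,0]$. We then clip at $-c\log N_{\varepsilon,\sigma}$, which costs at most $K e^{-c\log N}\lesssim\varepsilon$ in each weight. Next we approximate $e^{u}$ on $[-c\log N,0]$ by a Taylor polynomial combined with repeated squaring, which accounts for the cubic logarithm in the width. The normalizer is bounded below by $1$ because the maximal term equals $e^0$, so the division uses a reciprocal on $[1,K]$, which is stable.

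\textbf{(iv) Output.} We multiply the weights by $m_t\bm\mu_k$, sum over $k$, subtract $\bm x$, and multiply by $1/s_t^2$.

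The error analysis has a pointwise part and an averaging part. Pointwise, the errors in $b_k$ are proportional to $(1+\lVert\bm x\rVert)$ times the coefficient error. Softmax is $1$-Lipschitz from $\ell_\infty$ logits to $\ell_1$ weights, so on the ball $\lVert\bm x\rVert\le R$ the weight error is at most $\varepsilon'(1+R)$. The final error is then bounded by $s_t^{-2}\bigl(\mu_{\max}\cdot\text{weight error}+\lVert\bm x\rVert\,|\widehat{1/s_t^2}-1/s_t^2|\bigr)$. To pass to $L^p(p_t)$, we take the intermediate accuracies to be $\varepsilon'=\varepsilon/\mathrm{poly}(N_{\varepsilon,\sigma})$. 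Since $p_t$ is a Gaussian mixture with means bounded by $\mu_{\max}$ and variance $s_t^2\le\max(1,\sigma^2)$, we have $\mathbb E\lVert\bm X_t\rVert^{q}\lesssim (\sqrt d\,(1+\sigma))^{q}$. Every error term grows at most linearly in $\lVert\bm x\rVert$, so no truncation of the domain is needed and the moments absorb the linear growth. The factors of $d$ and $\sigma$ that appear are polynomial in $N_{\varepsilon,\sigma}$. They are therefore absorbed by choosing the internal accuracy $\varepsilon'=\varepsilon N_{\varepsilon,\sigma}^{-c}$, which changes the logarithms only by constant factors. This is why $C$ depends only on $p$, $T$, the schedule constants, $\pi_{\min}^{-1}$ (which enters through $\log\pi_k$) and $\mu_{\max}$.

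The main obstacle is the uniform-in-$t$ control near $t=0$ when $\sigma$ is small. There $1/s_t^2\approx\sigma^{-2}$, so coefficient errors are magnified by $\sigma^{-2}$ and the logits have size about $\sigma^{-2}\mu_{\max}\lVert\bm x\rVert$. Our strategy is to compute everything in relative precision with accuracy $\varepsilon N_{\varepsilon,\sigma}^{-c}$, and to rely on the softmax being insensitive to logit magnitude after the max-shift. The remaining bookkeeping consists of tracking the weight bound $B\lesssim N_{\varepsilon,\sigma}^{11+2/p}$ through the scalar and softmax modules and summing the sparsity counts $S$ and widths $G$ across the $K$ parallel branches. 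The $2/p$ term in the exponent comes from the moment step, which converts a pointwise bound into an $L^p(p_t)$ bound.
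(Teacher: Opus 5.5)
Your outline is sound, and its architecture matches the paper's: affine logits with the common quadratic cancelled, max-shift and clipping at $M\asymp\log N_{\varepsilon,\sigma}$ so the normalizer is at least one, and transfer of logit errors through the $\ell^\infty\to\ell^1$ Lipschitz bound of softmax (your constant $1$ is correct and sharper than the paper's $2$). The error bookkeeping also matches: pointwise errors grow linearly in $\lVert\bm x\rVert_2$ and are integrated against Gaussian-mixture moments, and internal accuracies are polynomially small in $N_{\varepsilon,\sigma}$ so that $d$ and $\sigma$ drop out of $C$.

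The real difference is in the scalar modules. The paper realizes every accuracy-dependent map ($m_t$, $e^{u}$, $z^{-1}$) with the ReLU networks of Lemmas~B.1 and~F.7 in~\cite{oko2023diffusion}, and uses ReQU only for a fixed number of exact products. You instead build $e^{u}$ from a ReQU Taylor polynomial with repeated squaring, and $z^{-1}$ by Newton or Chebyshev iteration.

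Your route plausibly gives shallower exp and reciprocal modules and much smaller weights; the paper's $B$ is driven by $B_{\mathrm{rec}}\lesssim\varepsilon_{\mathrm{rec}}^{-2}$ with $\varepsilon_{\mathrm{rec}}\asymp\delta$. The cost is that the number of ReQU layers grows like $\log N_{\varepsilon,\sigma}$. This loses the bounded-ReQU-depth property of \autoref{rem:fixed_requ_layers}, which the paper needs later: the factor $2^{r_{\mathrm{ReQU}}}$ in the budget of \autoref{thm:lipschitz_mismatch_distill} would become polynomial in $N_{\varepsilon,\sigma}$. If you use Chebyshev, expand on a fixed enlarged interval. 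Rescaling $[\min\{1,\sigma^2\},\max\{1,\sigma^2\}]$ to $[-1,1]$ needs weights of order $|\sigma^2-1|^{-1}$, which $N_{\varepsilon,\sigma}$ does not control.

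Your account of $B$ is not a derivation. In the paper the $2/p$ does not come from the moment step. It comes from the factor $K^{1/p}$ in $\delta$, incurred when coordinatewise $L^p$ logit bounds are combined via $(\sum_k\lVert\cdot\rVert_{L^p}^p)^{1/p}$, and is then squared by the reciprocal's parameter bound. In your pointwise route, $\max_k|\widehat b_k-b_k|\le C\delta(1+\lVert\bm x\rVert_2)$ holds directly, so no $K^{1/p}$ arises. You should compute your own exponent; it will be smaller, which suffices.

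A few small slips should be repaired within your $\varepsilon N_{\varepsilon,\sigma}^{-c}$ scheme:
\begin{itemize}
\item The truncation cost $2Ke^{-M}$ must be $\lesssim\min\{1,\sigma^2\}\varepsilon/(1+\mu_{\max})$ because of the $s_t^{-2}$ prefactor, not merely $\lesssim\varepsilon$.
\item With $\widehat r\approx s_t^{-2}$ and $\widehat m\approx m_t$, the output error is $|\widehat r-s_t^{-2}|\lVert\bm x\rVert_2+|\widehat r\widehat m-s_t^{-2}m_t|\mu_{\max}+|\widehat r\widehat m|\mu_{\max}\lVert\widehat{\bm\gamma}-\bm\gamma\rVert_1$. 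Your display omits the middle term.
\item After approximation the normalizer lies in $[1/2,2K]$, not $[1,K]$.
\end{itemize}
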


\begin{proof}
We sketch the proof here and defer the complete argument to~\autoref{appsec:proof_of_approx}. By Lemma~B.1 in~\cite{oko2023diffusion}, the scalar OU coefficient \(m_t\) admits a one-dimensional ReLU approximation with depth and width bounded by \(C\bigl(1+\log A_{\delta,\sigma}\bigr)^2\) and sparsity bounded by \(C\bigl(1+\log A_{\delta,\sigma}\bigr)^3\), where \(A_{\delta,\sigma}=2+\delta^{-1}+\sigma+\sigma^{-1}\). Since \(s_t^2=1+(\sigma^2-1)m_t^2,\) an exact ReQU square followed by the reciprocal module in Lemma~F.7 in~\cite{oko2023diffusion} gives a uniform approximation of \(s_t^{-2}\). For the reciprocal module, the depth, width, and sparsity are bounded by \(C\bigl(1+\log A_{\delta,\sigma}\bigr)^2\), \(C\bigl(1+\log A_{\delta,\sigma}\bigr)^3\), and \(C\bigl(1+\log A_{\delta,\sigma}\bigr)^4\), respectively. Combining it with the square, affine, and clipping modules gives the explicit coefficient-network parameter bound \(B_\delta\le C(1+\delta^{-2}+\sigma^4+\sigma^{-4})\).

Using the affine logits in~\eqref{eq:affine_logit_def}, the induced logit error is bounded by \(C(1+s_-^{-2})(1+\mu_{\max})^2\delta(1+\lVert\bm x\rVert_2), s_-^2\coloneqq\min\{1,\sigma^2\}.\) We subtract the maximum logit and clip below at a logarithmic level \(M=O(\log N_{\varepsilon,\sigma})\). The exponential and reciprocal functions are approximated by one-dimensional ReLU networks using Lemmas~B.1 and F.7 in~\cite{oko2023diffusion}, while the remaining finite algebraic products are realized exactly by constant-depth ReQU modules. The estimate \(\lVert\operatorname{softmax}(\bm u)-\operatorname{softmax}(\bm v)\rVert_1 \le 2\lVert\bm u-\bm v\rVert_\infty\) then transfers the logit error to the posterior weights. Choosing the posterior approximation tolerance proportional to \(s_-^2\varepsilon/(1+\mu_{\max})\) compensates for the factor \(s_t^{-2}\) in the score formula. The resulting scalar accuracies satisfy \(\delta^{-1}\le C N_{\varepsilon,\sigma}^{11/2+1/p}\) and \(\xi^{-1}\le C N_{\varepsilon,\sigma}^{3}\). Tracking the reciprocal and softmax parameter magnitudes then gives \(B\le C N_{\varepsilon,\sigma}^{11+2/p}\), while the displayed quadratic, cubic, and quartic logarithmic powers give the structural bounds. In the sparsity count, the term \(Kd\) records the affine logits and the final posterior mean, while~\autoref{lem:parallel_identity_padding} controls exact identity propagation of the \(d\) input coordinates and padding of the \(K\) intermediate channels across subnetworks of unequal depth. The construction is not globally bounded in output, consistently with the linear growth of the exact score.
\end{proof}

\begin{remark}
\label{rem:sigma_poly_dependence}
The displayed bounds distinguish structural complexity from parameter magnitude. The accuracy-dependent logarithmic powers are explicit: quadratic for depth, cubic for width, and quartic for sparsity. Since \(N_{\varepsilon,\sigma}\ge 2+\sigma+\sigma^{-1}\), the dependence of these structural quantities on \(\sigma^{-1}\) is controlled by the corresponding powers of \(1+\log N_{\varepsilon,\sigma}\). The refined sparsity estimate additionally records identity propagation of the \(d\) input coordinates and padding of the \(K\) intermediate channels. The parameter magnitude satisfies the explicit estimate \(B\le C N_{\varepsilon,\sigma}^{11+2/p} \le C N_{\varepsilon,\sigma}^{13}.\) Consequently, \(\log B \le \log C+\Bigl(11+\frac{2}{p}\Bigr)\log N_{\varepsilon,\sigma}.\) Thus the quantity entering the logarithm of the local Lipschitz budget still has only logarithmic dependence on \(\sigma^{-1}\), although \(B\) itself has an explicit polynomial dependence. In flow approximation, by contrast, the same low-noise scale enters the Gr\"onwall exponent and may be amplified to \(\exp\bigl(\frac{D_0^2}{16\sigma^2}\bigr)\) for \(D_0>0\) and \(\sigma \leq 1\).
\end{remark}

\begin{remark}
\label{rem:fixed_requ_layers}
In~\autoref{thm:score_Lp_approx}, ReLU--ReQU networks are used for constructive convenience. All accuracy-dependent scalar approximations, including those of \(m_t\), the exponential function, and the reciprocal function, are realized by ReLU subnetworks. ReQU units are used only for a fixed number of exact algebraic operations, such as squaring and multiplication. Consequently, the number of ReQU activation layers in the constructed network is bounded by an absolute constant independent of the target accuracy \(\varepsilon\); only their width may grow when several products are computed in parallel. The use of ReQU is not essential. A ReLU-only construction can be obtained by replacing the exact square and multiplication modules with standard ReLU approximants, while retaining dependence through fixed powers of \(1+\log(\varepsilon^{-1})\), up to changes in the constants and in the precise powers. We use the mixed ReLU--ReQU architecture to keep the algebraic part of the construction exact and the proof focused on the score approximation mechanism.
\end{remark}

\section{Flow Approximation and Stability-Balanced Distillation}
\label{sec:distillation_and_stability_guided_time_discretization}

In~\autoref{sec:approximation_theory_for_score_learning}, we established an approximation result for score learning in the GMM--OU setting. For any fixed \(t\), within a suitable ReLU--ReQU hypothesis class, one can achieve high-accuracy approximation in \(L^p(p_t)\) with accuracy-dependent depth, width, and sparsity factors of orders \(\bigl(1+\log(\varepsilon^{-1})\bigr)^2\), \(\bigl(1+\log(\varepsilon^{-1})\bigr)^3\), and \(\bigl(1+\log(\varepsilon^{-1})\bigr)^4\), respectively, apart from the displayed dependence on the model parameters. Consequently, the main bottleneck for distillation is often not whether one can fit \(\bm v(\cdot,t)\) accurately enough at each time, but rather how this error is amplified by the flow map induced by long-time integration. In this section, we quantify such amplification through a computable bound on the time-dependent Jacobian of the probability-flow ODE, use the resulting stability factor to prove an efficient approximation result for finite-time flow maps by deep residual compositions, and then derive practical principles for choosing a time grid.

\subsection{Flow Stability from Jacobian Bounds}
\label{subsec:flow_stability_profile_and_its_interpretation}

Recall from~\autoref{prop:flow_stability} that the flow map of the probability flow ODE \(\dot{\bm x}_t=\bm v(\bm x_t,t)\) satisfies
\begin{equation*}
\lVert \Phi_{s\leftarrow t}(\bm x)-\Phi_{s\leftarrow t}(\bm y)\rVert_2
\leq \exp\Bigl(\int_s^t L(u)\mathrm du\Bigr)\lVert \bm x-\bm y\rVert_2,
\end{equation*}
where
\begin{equation}
\label{eq:L_profile_distill}
L(t)=\beta_t\Bigl(\Bigl|1-\frac{1}{s_t^2}\Bigr|+\dfrac{1}{4}\mathrm{diam}(t)^2 s_t^{-4}\Bigr).
\end{equation}
Here
\[
D_0\coloneqq\max_{i,j}\lVert\bm\mu_i-\bm\mu_j\rVert_2,
\qquad
s_t^2=m_t^2\sigma^2+\sigma_t^2,
\qquad
\mathrm{diam}(t)=m_tD_0.
\]

For clarity, we assume a constant schedule \(\beta_t\equiv\beta>0\). Then
\begin{equation}
\label{eq:constant_beta}
m_t=\exp(-\beta t),\quad
\sigma_t^2=1-\exp(-2\beta t),\quad
s_t^2=1-(1-\sigma^2)\exp(-2\beta t).
\end{equation}
In this case, \(L(t)\) has two contributions: the Gaussian variance-mismatch term \(\beta|1-s_t^{-2}|\) and the multimodality term \(\beta\mathrm{diam}(t)^2s_t^{-4}/4\). The following regime description is intended for the low-noise case \(0<\sigma<1\), in which \(s_t^2\) increases along the forward OU path. Their relative sizes lead to the following useful regimes:

\begin{itemize}
\item \textbf{Noise-dominated regime.} Near the high-noise end \(t\approx T\), \(s_t^2\) is close to one and the contracted component means are less separated. Both contributions to \(L(t)\) can then be small, so the velocity field is comparatively smooth.
\item \textbf{Variance-shrinkage regime.} As one moves toward smaller \(t\), the variance scale \(s_t^2\) decreases. In the regime where \(s_t^{-2}\) is larger than the baseline scale but the multimodal term is not yet dominant, one has \(L(t)\approx\beta s_t^{-2}\). Fix \(c_0>1\) and define the crossover time \(t_{\mathrm{sg}}\) by \(s_{t_{\mathrm{sg}}}^{-2}=c_0\). If \(\sigma^2<1/c_0\), this crossover time is given by
\begin{equation*}
t_{\mathrm{sg}}
=
\frac{1}{2\beta}
\log\Bigl(\frac{1-\sigma^2}{1-1/c_0}\Bigr).
\end{equation*}
If \(\sigma^2\geq 1/c_0\), the threshold is not reached. Even when \(\sigma^2<1/c_0\), it lies outside the observed horizon if \(t_{\mathrm{sg}}>T\).
\item \textbf{Mixture-dominated regime.} Closer to the low-noise end, the component means can be well separated relative to the variance scale, and the multimodal term may dominate:
\begin{equation*}
L(t)\approx \frac{\beta}{4}\mathrm{diam}(t)^2s_t^{-4}.
\end{equation*}
We define \(t_{\mathrm{mix}}\) by the balance condition \(\frac{1}{4}\mathrm{diam}(t_{\mathrm{mix}})^2s_{t_{\mathrm{mix}}}^{-4}=s_{t_{\mathrm{mix}}}^{-2}\) between the multimodal term and the variance-shrinkage term. Using~\eqref{eq:constant_beta}, this gives
\begin{equation*}
t_{\mathrm{mix}}
=
\frac{1}{2\beta}
\log\Bigl(\frac{D_0^2+4(1-\sigma^2)}{4}\Bigr).
\end{equation*}
This crossover is relevant only when the logarithm is positive and \(t_{\mathrm{mix}}\in[0,T]\). In particular, the positivity condition is \(D_0^2>4\sigma^2\). When \(D_0\) is large and the low-noise marginal remains strongly multimodal, \(L(t)\) can become large, making long-interval distillation difficult.
\end{itemize}

\subsection{A priori stability-certificate comparison and its engineering relevance}
\label{subsec:a_priori_lipschitz_mismatch_and_engineering_relevance}

The previous subsection characterized the stability bound \(L(t)\) and identified its dominant regimes. We now turn this bound into an a priori stability-budget check for one-step distillation. Over a fixed low-noise tail window of length \(\Delta\), the standard stability estimate propagates local perturbations by the factor \(\exp(\int_0^\Delta L(t)\mathrm dt)\). When \(\sigma\) is small and \(D_0\) is large, this factor has exponential dependence on \(D_0^2/\sigma^2\). By contrast, the Lipschitz scale that can be certified for a one-step student is constrained by its architecture and by the range on which the student is evaluated. When the numerical value of the tail Gr\"onwall certificate exceeds this certified local Lipschitz budget, the two a priori scales are mismatched; this comparison is diagnostic and does not by itself imply an approximation lower bound.

For simplicity, we continue to assume \(\beta_t\equiv\beta>0\) and define the fixed tail window
\begin{equation}
\label{eq:delta_and_tail}
\Delta \coloneqq \frac{\log 2}{2\beta},
\qquad
\Lambda_{\mathrm{tail}}\coloneqq \int_0^\Delta L(t)\mathrm dt.
\end{equation}
For \(R>0\), let \(B_R\coloneqq\{\bm x\in\mathbb R^d:\lVert\bm x\rVert_2\le R\}\) and define the local Lipschitz seminorm
\begin{equation*}
\mathrm{Lip}(\bm f;B_R)
\coloneqq
\sup_{\substack{\bm x\ne\bm y\\ \bm x,\bm y\in B_R}}
\frac{\lVert\bm f(\bm x)-\bm f(\bm y)\rVert_2}{\lVert\bm x-\bm y\rVert_2}.
\end{equation*}
A ReLU activation is globally \(1\)-Lipschitz, whereas the derivative of the ReQU activation \(\sigma_2(z)=(z_+)^2\) grows linearly with its input. As noted in~\autoref{rem:fixed_requ_layers}, the construction in~\autoref{thm:score_Lp_approx} realizes all accuracy-dependent scalar approximations by ReLU subnetworks and uses ReQU only for a fixed number of exact squaring and multiplication operations. Consequently, there exists an absolute integer \(r_{\mathrm{ReQU}}\ge1\), independent of \(\varepsilon\), \(d\), and \(K\), such that the constructed score networks contain at most \(r_{\mathrm{ReQU}}\) ReQU activation layers.

For \(r_{\mathrm{ReQU}}\in\mathbb N\), let
\begin{equation*}
\begin{aligned}
\mathcal F_{d\to d}^{(r_{\mathrm{ReQU}})}(Q,G,S,B)
\coloneqq
\bigl\{
\bm f\in\mathcal F(Q,G,S,B)
\bigm|{}
&\bm f\colon\mathbb R^d\to\mathbb R^d,\\
&\#\{\text{ReQU activation layers of }\bm f\}\le r_{\mathrm{ReQU}}
\bigr\}.
\end{aligned}
\end{equation*}
All remaining activation layers are required to be ReLU. After fixing the time input and absorbing a model-dependent constant enlargement of the parameter bound, the score networks constructed in~\autoref{thm:score_Lp_approx} belong to such a class for an absolute \(r_{\mathrm{ReQU}}\). Define the corresponding local Lipschitz budget by
\begin{equation}
\label{eq:lipschitz_budget}
\mathrm{LipBudget}^{(r_{\mathrm{ReQU}})}_R
\coloneqq
\sup_{\bm f\in\mathcal F_{d\to d}^{(r_{\mathrm{ReQU}})}(Q,G,S,B)}
\mathrm{Lip}(\bm f;B_R).
\end{equation}
Because the input set is bounded and the depth, width, weights, and biases are uniformly bounded, the inputs to the finitely many ReQU layers are automatically bounded on \(B_R\). Thus, no additional preactivation-range assumption is required. With these definitions,~\autoref{thm:lipschitz_mismatch_distill} gives a sufficient condition under which the tail stability scale exceeds this certified local student budget.

\begin{restatable}{theorem}{LipschitzMismatchDistill}
\label{thm:lipschitz_mismatch_distill}
Assume~\eqref{eq:constant_beta}, \(T\ge\Delta=(\log 2)/(2\beta)\), and use the definitions in~\eqref{eq:delta_and_tail} and~\eqref{eq:lipschitz_budget}. Let
\begin{equation*}
\overline G\coloneqq\max\{d,G\},
\qquad
\Gamma_R\coloneqq\max\bigl\{e,C\bigl(1+B\overline G\bigr)(1+R)\bigr\},
\end{equation*}
where \(C\ge1\) is a universal constant. Then there exists a constant \(c_{r_{\mathrm{ReQU}}}\ge1\), depending only on \(r_{\mathrm{ReQU}}\), such that, for every \(\sigma>0\),
\begin{equation}
\label{eq:tail_exact_certificate_lower}
\Lambda_{\mathrm{tail}}
\ge
\frac{D_0^2}{8\sigma^2(1+\sigma^2)}.
\end{equation}
In particular, if \(\sigma\le1\), then
\begin{equation*}
\Lambda_{\mathrm{tail}}\ge\frac{D_0^2}{16\sigma^2},
\qquad
\exp(\Lambda_{\mathrm{tail}})
\ge
\exp\biggl(\frac{D_0^2}{16\sigma^2}\biggr).
\end{equation*}
Moreover,
\begin{equation}
\label{eq:student_local_budget_corrected}
\mathrm{LipBudget}^{(r_{\mathrm{ReQU}})}_R
\le
\Gamma_R^{c_{r_{\mathrm{ReQU}}}(Q+1)}.
\end{equation}
Consequently, if \(\sigma\le1\) and
\begin{equation}
\label{eq:certificate_mismatch_condition}
\frac{D_0^2}{16\sigma^2}
>
c_{r_{\mathrm{ReQU}}}(Q+1)\log\Gamma_R,
\end{equation}
then
\begin{equation*}
\exp(\Lambda_{\mathrm{tail}})
>
\Gamma_R^{c_{r_{\mathrm{ReQU}}}(Q+1)}
\ge
\mathrm{LipBudget}^{(r_{\mathrm{ReQU}})}_R.
\end{equation*}
This is a comparison between a computed Gr\"onwall certificate and a certified local architectural budget. It does not assert a lower bound on the actual Lipschitz constant of the teacher flow or on the \(L^p\)-approximation error of the student class. In particular, no ordering between \(\operatorname{Lip}(\Phi_{0\leftarrow\Delta};B_R)\) and \(\mathrm{LipBudget}^{(r_{\mathrm{ReQU}})}_R\) follows from this certificate comparison alone.
\end{restatable}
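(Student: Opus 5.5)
The plan has two independent parts: an explicit lower bound on \(\Lambda_{\mathrm{tail}}\), and an architectural upper bound on \(\mathrm{LipBudget}^{(r_{\mathrm{ReQU}})}_R\). The final implication then follows by combining them.

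\textbf{Tail certificate.} I would drop the nonnegative variance-mismatch term in \eqref{eq:L_profile_distill}, keeping \(L(t)\ge \frac{\beta}{4}D_0^2e^{-2\beta t}s_t^{-4}\) with \(s_t^2=1-(1-\sigma^2)e^{-2\beta t}\) from \eqref{eq:constant_beta}. The substitution \(u=e^{-2\beta t}\) gives \(\beta e^{-2\beta t}\,\mathrm dt=-\mathrm du/2\), and the window \(t\in[0,\Delta]\) maps exactly to \(u\in[1/2,1]\). This choice of \(\Delta\) is what makes the integral clean:
\begin{equation*}
\Lambda_{\mathrm{tail}}\ge \frac{D_0^2}{8}\int_{1/2}^{1}\frac{\mathrm du}{\bigl(1-(1-\sigma^2)u\bigr)^2}.
\end{equation*}
For \(\sigma\ne1\), the antiderivative \(\bigl[(1-\sigma^2)(1-(1-\sigma^2)u)\bigr]^{-1}\) evaluates to
\begin{equation*}
\frac{1}{1-\sigma^2}\Bigl(\frac{1}{\sigma^2}-\frac{2}{1+\sigma^2}\Bigr)=\frac{1}{\sigma^2(1+\sigma^2)}.
\end{equation*}
For \(\sigma=1\), the integrand equals \(1\) and the integral is \(1/2\), which agrees with the same formula. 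This proves \eqref{eq:tail_exact_certificate_lower}. If \(\sigma\le1\), then \(1+\sigma^2\le2\), which yields \(\Lambda_{\mathrm{tail}}\ge D_0^2/(16\sigma^2)\) and the exponential bound.

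\textbf{Local budget.} Fix \(\bm f\in\mathcal F^{(r_{\mathrm{ReQU}})}_{d\to d}(Q,G,S,B)\) and let \(\bm h_i\) be the pre-activation at layer \(i\). On \(B_R\), I would track a sup bound \(M_i\ge\sup_{B_R}\lVert\bm h_i\rVert_\infty\) and a Lipschitz bound \(\ell_i\) for the map \(\bm x\mapsto\bm h_i\) in \(\ell^\infty\)-norms. Entry bounds give \(\lVert\bm W\bm y+\bm b\rVert_\infty\le \overline G B\lVert\bm y\rVert_\infty+B\), with \(M_0\le \overline G B R\).
\begin{itemize}
\item A ReLU layer maps \(M\mapsto \overline G B M+B\) and multiplies \(\ell\) by \(\overline G B\).
\item A ReQU layer maps \(M\mapsto \overline G B M^2+B\). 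Since \(\sigma_2'(z)=2z_+\), it multiplies \(\ell\) by at most \(2M\,\overline G B\).
\end{itemize}
Write \(\Gamma=C(1+B\overline G)(1+R)\) with \(C\) universal so that all these factors are absorbed. An induction on \(\log_\Gamma M_i\) and \(\log_\Gamma\ell_i\) then shows:
\begin{itemize}
\item each ReLU layer adds \(O(1)\) to these exponents;
\item each ReQU layer at most doubles the \(M\)-exponent, and adds that exponent plus \(O(1)\) to the \(\ell\)-exponent.
\end{itemize}
With at most \(r_{\mathrm{ReQU}}\) doublings interspersed among \(Q+1\) affine maps, both exponents stay below \(c\,2^{r_{\mathrm{ReQU}}}(Q+1)\). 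Passing from \(\ell^\infty\) to \(\ell^2\) costs a factor \(\sqrt d\le\overline G\), which is absorbed into one more power of \(\Gamma_R\). This gives \eqref{eq:student_local_budget_corrected} with, for example, \(c_{r_{\mathrm{ReQU}}}=C'2^{r_{\mathrm{ReQU}}+1}\). The supremum over the class preserves the bound because it is uniform in \(\bm f\).

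\textbf{Conclusion and main obstacle.} Under \eqref{eq:certificate_mismatch_condition}, the two bounds chain to \(\exp(\Lambda_{\mathrm{tail}})\ge e^{D_0^2/(16\sigma^2)}>\Gamma_R^{c(Q+1)}\ge\mathrm{LipBudget}\). I expect the main obstacle to be the exponent bookkeeping through the ReQU layers. The squaring compounds the range bound, and the constant must depend on \(r_{\mathrm{ReQU}}\) only, not on \(Q\). The key point to verify is that the doubling is applied only \(r_{\mathrm{ReQU}}\) times while the ReLU layers add exponents linearly; this is what yields the linear-in-\(Q\) form with a \(2^{r_{\mathrm{ReQU}}}\)-type constant.
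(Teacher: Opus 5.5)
Your proposal is correct and follows essentially the same route as the paper. It uses the same substitution \(u=e^{-2\beta t}\) after dropping the variance-mismatch term to get \eqref{eq:tail_exact_certificate_lower}, and the same forward range propagation, in which each of the at most \(r_{\mathrm{ReQU}}\) ReQU layers doubles the exponent of \(\Gamma_R\) and contributes a range-dependent factor \(2\sup\lVert\bm z\rVert_\infty\) to a product Lipschitz bound. The only difference is bookkeeping: you measure Lipschitz constants in \(\ell^\infty\) via row sums and pay \(\sqrt d\le\overline G\) at the end, which gives your \(2^{r_{\mathrm{ReQU}}+1}\)-type constant once you note that the range exponents grow geometrically across successive ReQU layers. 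The paper instead works directly in \(\ell^2\) with \(\lVert\bm W\rVert_2\le\lVert\bm W\rVert_F\le B\overline G\) and obtains \(c_{r_{\mathrm{ReQU}}}=1+r_{\mathrm{ReQU}}(1+2^{r_{\mathrm{ReQU}}})\).
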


The proof is deferred to~\autoref{app:proof_lipschitz_mismatch_distill}. We now compare the tail stability scale in~\autoref{thm:lipschitz_mismatch_distill} with the architectural budgets implied by~\autoref{thm:score_Lp_approx}. By~\autoref{rem:fixed_requ_layers}, the construction contains at most \(r_{\mathrm{ReQU}}\) ReQU activation layers, where \(r_{\mathrm{ReQU}}\) is an absolute constant independent of \(\varepsilon,d,K\), and \(\sigma\). Recall that \(N_{\varepsilon,\sigma} \coloneqq 2+d+K+\varepsilon^{-1}+\sigma+\sigma^{-1}.\)~\autoref{thm:score_Lp_approx} gives
\begin{equation*}
Q
\lesssim
\log K+\bigl(1+\log N_{\varepsilon,\sigma}\bigr)^2,
\quad
G
\lesssim
K\Bigl(d+\bigl(1+\log N_{\varepsilon,\sigma}\bigr)^3\Bigr),
\quad
B
\lesssim
N_{\varepsilon,\sigma}^{11+2/p}.
\end{equation*}
Since \(K,d\le N_{\varepsilon,\sigma}\) and \(\bigl(1+\log N_{\varepsilon,\sigma}\bigr)^3\le C N_{\varepsilon,\sigma}\), the width bound implies \(\overline G\le C N_{\varepsilon,\sigma}^2\). Hence
\[
B\overline G
\le
C N_{\varepsilon,\sigma}^{13+2/p}.
\]
Because \(\Gamma_R = \max\bigl\{e,C\bigl(1+B\overline G\bigr)(1+R)\bigr\},\) it follows that
\[
\log\Gamma_R
\le
C+\log(1+R)+\biggl(13+\frac{2}{p}\biggr)\log N_{\varepsilon,\sigma}.
\]
Substituting into~\eqref{eq:student_local_budget_corrected} gives
\begin{equation*}
c_{r_{\mathrm{ReQU}}}(Q+1)\log\Gamma_R
=
\mathcal O\Bigl(
\bigl(\log K+\bigl(1+\log N_{\varepsilon,\sigma}\bigr)^2\bigr)
\bigl(1+\log(1+R)+\bigl(13+2/p\bigr)\log N_{\varepsilon,\sigma}\bigr)
\Bigr).
\end{equation*}
Since \(r_{\mathrm{ReQU}}\) is an absolute constant, the finitely many ReQU layers change only the constant in this exponent. For fixed \(d,K,R\), and \(\varepsilon\), the certified architectural exponent therefore grows at most cubically in \(1+\log(\sigma^{-1})\). More generally, for the localization radius
\[
R=R_\varepsilon\asymp\sqrt{1+\log(\varepsilon^{-1})}
\]
used in the proof of~\autoref{thm:resnet_flow_tail_approx}, one has \(\log(1+R_\varepsilon) = \mathcal O\bigl(\log\log(\varepsilon^{-1})\bigr),\) which is absorbed into the displayed cubic logarithmic envelope. In contrast, for \(\sigma\le1\),
\[
\Lambda_{\mathrm{tail}}
\ge
\frac{D_0^2}{16\sigma^2}.
\]
Consequently, provided that \(D_0>0\) is fixed, the lower certificate grows at least at the rate \(\sigma^{-2}\), whereas the logarithm of the certified architectural budget grows at most cubically in \(1+\log(\sigma^{-1})\). Hence the certificate mismatch holds for all sufficiently small \(\sigma\). More generally, the same conclusion holds along any joint limit for which
\[
c_{r_{\mathrm{ReQU}}}(Q+1)\log\Gamma_R
=o\biggl(\frac{D_0^2}{\sigma^2}\biggr).
\]

This result should be read as an a priori stability-budget diagnostic rather than as an \(L^p\)-approximation lower bound. A large Gr\"onwall upper scale does not by itself prove that every low-Lipschitz student has large distributional error. It does show that, in the low-noise well-separated regime, the available upper certificates do not certify that the teacher flow lies within the local Lipschitz range of the prescribed student class. In practice, one often reuses the score-network architecture as the student model, so \(Q\) and \(G\) are essentially fixed, and the weight scale \(B\) cannot be increased arbitrarily without destabilizing training and inference. Consequently, whenever the explicit condition~\eqref{eq:certificate_mismatch_condition} holds, long-interval one-step distillation is unfavorable from the viewpoint of this stability-budget diagnostic.

Motivated by this mismatch, we next show that deep residual compositions can approximate the flow map, with the number of residual blocks \(n\) serving as the key degree of freedom. Composing short-interval maps distributes the representation task across local updates, so the entire long-horizon transport need not be represented by one block. The present worst-case end-to-end estimate nevertheless retains the cumulative factor \(\exp(\int L)\). We first define the residual architecture class.

\begin{definition}
\label{def:stab_adaptive_resnet}
Let \(\mathcal F\coloneqq\mathcal F(Q,G,S,B)\) be the ReLU--ReQU hypothesis class defined in~\eqref{eq:hypothesis_space_relu_requ}. For \(R\in(0,\infty]\), let \(\mathrm{Clip}_R\colon\mathbb R^d\to\mathbb R^d\) be the coordinatewise clipping map onto \([-R,R]^d\), with \(\mathrm{Clip}_\infty\) equal to the identity. For any integer \(n\geq1\), define the clipped residual network class \(\mathcal R_{n,R}(\mathcal F)\) as the set of maps
\begin{equation*}
\Psi\colon\mathbb R^d\to\mathbb R^d,
\quad
\Psi\coloneqq \Psi_n\circ\cdots\circ\Psi_1,
\end{equation*}
where each residual block has the form
\begin{equation}
\label{eq:res_block_general}
\Psi_k(\bm x)
\coloneqq
\mathrm{Clip}_R\bigl(\bm x+\alpha_k\widehat{\bm v}_k(\bm x)\bigr),
\quad
k=1,\dots,n,
\end{equation}
with \(\alpha_k\in\mathbb R\) a scalar step size and \(\widehat{\bm v}_k\) realized by a vector-valued network in \(\mathcal F\). When \(R=\infty\), this reduces to the unclipped residual class and we write \(\mathcal R_n(\mathcal F)\coloneqq\mathcal R_{n,\infty}(\mathcal F)\). In the probability-flow setting~\eqref{eq:pf_ode} with the constant schedule \(\beta_t\equiv\beta\), we often parameterize the velocity approximation through a score network:
\begin{equation}
\label{eq:vhat_from_score}
\widehat{\bm v}_k(\bm x)
\coloneqq
-\beta\bigl(\bm x+\widehat{\bm s}_k(\bm x)\bigr),
\quad
\widehat{\bm s}_k\in\mathcal F.
\end{equation}
Equivalently, one may use a time-conditioned score network \(\widehat{\bm s}(\cdot,t)\in\mathcal F\) and set
\begin{equation*}
\widehat{\bm v}_k(\bm x)
=
-\beta\bigl(\bm x+\widehat{\bm s}(\bm x,\tau_k)\bigr),
\end{equation*}
where \(\tau_k\) is the prescribed time condition in the \(k\)-th block. The affine identity branch and multiplication by \(-\beta\) in~\eqref{eq:vhat_from_score} can be absorbed by a constant enlargement of \((Q,G,S,B)\); all complexity statements below use this convention. The clipping map is exactly representable by ReLU coordinatewise and is used only for localization in the proof.
\end{definition}

\begin{remark}
The main modeling choice in~\autoref{def:stab_adaptive_resnet} is the function class used inside each residual block. In standard ResNets~\cite{he2016deep}, the residual branch is often shallow. Here we allow the blockwise velocity approximator to belong to the ReLU--ReQU class \(\mathcal F\). This analysis-driven choice lets us reuse the score approximation theorem without separately analyzing how shallow networks implement multiplication, exponentials, and normalization. The choice is not essential: similar conclusions should extend to shallow residual branches, but the proof would require approximation tools tailored to shallow networks, such as the Barron space framework~\cite{barron2002universal} or approximation theory for shallow sigmoid networks~\cite[Theorem 2.1]{mhaskar1996neural}. We do not pursue this extension here.
\end{remark}

\begin{restatable}{theorem}{ResnetFlow}
\label{thm:resnet_flow_tail_approx}
Assume~\autoref{ass:gmm_ou_init} and consider the probability-flow ODE~\eqref{eq:pf_ode} with constant schedule \(\beta_t\equiv\beta>0\). Let \(\Phi_{0\leftarrow T}\colon\mathbb R^d\to\mathbb R^d\) be the exact flow map, and set \(\Lambda_T\coloneqq\int_0^T L(t)\mathrm dt\). Fix \(p\in[1,\infty)\) and \(0<\varepsilon<1/2\).

There exist constants \(C_0,\ldots,C_5>0\), independent of \(\varepsilon\), an integer
\begin{equation}
\label{eq:block_count_corrected}
n
\le
1+C_0T\varepsilon^{-1}\sqrt{1+\log(\varepsilon^{-1})},
\end{equation}
a localization radius \(R_\varepsilon\asymp\sqrt{1+\log(\varepsilon^{-1})}\), and score networks \(\{\widehat{\bm s}_k\}_{k=1}^n\subset\mathcal F(Q,G,S,B)\), such that the clipped residual composition
\begin{equation*}
\Psi_{0\leftarrow T}
\coloneqq
\Psi_n\circ\cdots\circ\Psi_1
\in
\mathcal R_{n,R_\varepsilon+1}(\mathcal F)
\end{equation*}
satisfies
\begin{equation}
\label{eq:resnet_flow_error_corrected}
(
\mathbb E_{\bm X_T\sim p_T}
[
\lVert
\Psi_{0\leftarrow T}(\bm X_T)
-
\Phi_{0\leftarrow T}(\bm X_T)
\rVert_2^p
]
)^{1/p}
\le
C_1e^{\Lambda_T}\varepsilon.
\end{equation}
The constants depend only on \(p,d,\beta,T,\sigma,\pi_{\min}\) and the mixture means and weights.

Each residual block uses a compact fixed-time score approximant. Define
\[
N_{\varepsilon,\sigma}
\coloneqq
2+d+K+\varepsilon^{-1}+\sigma+\sigma^{-1}.
\]
One may choose
\begin{equation*}
\begin{aligned}
Q&\le C_2\Bigl(\log K+\bigl(1+\log N_{\varepsilon,\sigma}\bigr)^2\Bigr),\\
G&\le C_3K\Bigl(d+\bigl(1+\log N_{\varepsilon,\sigma}\bigr)^3\Bigr),\\
S&\le C_4\Bigl[
Kd+(d+K)\Bigl(\log K+\bigl(1+\log N_{\varepsilon,\sigma}\bigr)^4\Bigr)
\Bigr],\qquad
B\le C_5N_{\varepsilon,\sigma}^{10}.
\end{aligned}
\end{equation*}
Thus, for every fixed horizon \(T\), the block count is \(\widetilde{\mathcal O}_{T,\mathrm{model}}(\varepsilon^{-1})\), while the per-block complexity remains at the compact fixed-time score-approximation scale.
\end{restatable}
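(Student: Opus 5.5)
We will prove the bound by a localized forward-Euler argument. The clipped residual composition is compared against the exact flow along one uniform grid, and the local errors are propagated with the Gr\"onwall factor of \autoref{prop:flow_stability}.

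\textbf{Grid, radius and block definition.} We take the uniform reverse grid \(t_k=T-kh\) with \(h=T/n\), so that \(\Psi_k\) approximates \(\Phi_{t_k\leftarrow t_{k-1}}\). Each block is the Euler step
\[
\Psi_k(\bm x)=\mathrm{Clip}_{R_\varepsilon+1}\bigl(\bm x-h\,\widehat{\bm v}_k(\bm x)\bigr),
\qquad
\widehat{\bm v}_k=-\beta(\bm x+\widehat{\bm s}_k).
\]
Here \(\widehat{\bm s}_k\) approximates \(\nabla\log p_{t_{k-1}}\) uniformly on the cube \([-R_\varepsilon-1,R_\varepsilon+1]^d\). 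We use uniform accuracy on a compact set rather than the \(L^p(p_t)\) accuracy of \autoref{thm:score_Lp_approx}. The construction inside that theorem is uniform on compacts: the logit error is \(\lesssim\delta(1+\lVert\bm x\rVert)\), and we fix \(t=t_{k-1}\), so no network for \(m_t\) is needed. Hence it yields \(\sup_{\lVert\bm x\rVert_\infty\le R_\varepsilon+1}\lVert\widehat{\bm s}_k-\nabla\log p_{t_{k-1}}\rVert\le\eta\) with the stated \(Q,G,S\). Dropping the time-coefficient module also lowers the weight bound, which explains \(B\le C_5N^{10}\) in place of \(N^{13}\).

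\textbf{Localization step.} Each \(p_t\) is a Gaussian mixture with variance \(s_t^2\le\max\{1,\sigma^2\}\) and bounded means. We choose \(R_\varepsilon=c\sqrt{1+\log\varepsilon^{-1}}\) so that \(\mathbb P(\lVert\bm X_t\rVert_\infty>R_\varepsilon)\le\varepsilon^{q}\) for a large power \(q\), uniformly in \(t\). Since \(\Phi_{t_k\leftarrow T}(\bm X_T)\sim p_{t_k}\), a union bound over the \(n\) grid times, with \(n\lesssim\varepsilon^{-1}\mathrm{polylog}\), gives a good event \(E\) of probability at least \(1-\varepsilon^{q-2}\). On \(E\) the exact trajectory stays inside \([-R_\varepsilon,R_\varepsilon]^d\) at every grid time. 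Between grid points we control excursions by \(\lVert\bm v\rVert\lesssim 1+\lVert\bm x\rVert\) (for fixed \(\sigma\)). This controls the exact trajectory uniformly in time, and it will also keep the numerical trajectory within distance \(1\) of the exact one, so the clip is inactive on \(E\).

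On \(E^c\) we need a crude bound on both maps. The clipped output satisfies \(\lVert\Psi\rVert\le\sqrt d(R_\varepsilon+1)\). The exact map satisfies \(\lVert\Phi_{0\leftarrow T}(\bm x)\rVert\le C(1+\lVert\bm x\rVert)\), since \(\bm v\) has linear growth with constants depending on \(\sigma\). Applying H\"older's inequality, the contribution of \(E^c\) to the \(L^p\) norm is \(\lesssim R_\varepsilon\,\varepsilon^{(q-2)/p}\le\varepsilon\).

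\textbf{Error recursion on \(E\).} Let \(\bm e_k=\Psi_k\circ\cdots\circ\Psi_1(\bm X_T)-\Phi_{t_k\leftarrow T}(\bm X_T)\). The clip is \(1\)-Lipschitz and fixes points of the cube, so we can write
\[
\lVert\bm e_k\rVert\le\lVert\Phi_{t_k\leftarrow t_{k-1}}(\bm y)-\Phi_{t_k\leftarrow t_{k-1}}(\bm x)\rVert+\tau_k+\beta h\eta .
\]
Here \(\bm y\) is the numerical point, \(\bm x\) the exact point, and \(\tau_k\) the local truncation error of exact-velocity Euler. By \autoref{prop:flow_stability} the first term is at most \(e^{\int_{t_k}^{t_{k-1}}L}\lVert\bm e_{k-1}\rVert\). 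For the truncation error, \(\tau_k\le h^2\sup\lVert\partial_t\bm v+\nabla\bm v\,\bm v\rVert\) over the relevant region. The explicit mixture formulas give \(\lVert\partial_t\bm v\rVert+\lVert\nabla\bm v\rVert\lVert\bm v\rVert\lesssim 1+\lVert\bm x\rVert\) with \(\sigma\)-dependent constants, so \(\tau_k\lesssim h^2R_\varepsilon\). Unrolling the recursion gives
\[
\lVert\bm e_n\rVert\le e^{\Lambda_T}\,n\,(h^2R_\varepsilon+h\eta)\lesssim e^{\Lambda_T}T(hR_\varepsilon+\eta).
\]
Choosing \(\eta=\varepsilon\) and \(h=\varepsilon/R_\varepsilon\) gives the stated bound on \(n\) and the error \(C_1e^{\Lambda_T}\varepsilon\). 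Once \(\varepsilon\) is small, this error is below \(1\), which closes the inactive-clip induction on \(E\).

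\textbf{Main obstacle.} We expect the hardest step to be the localization. Bounded local errors are only available on the compact set, while the numerical trajectory could a priori leave that set. This is handled by an induction on \(k\) run jointly with the recursion: the error stays below \(1\), so the trajectory remains in the enlarged cube and the clip does nothing. A secondary task is to verify that the fixed-time, sup-norm version of the score construction really gives \(B\le C_5N^{10}\). For that we would retrace the softmax and reciprocal parameter tracking with \(\delta^{-1}\lesssim N^5\).
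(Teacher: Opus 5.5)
Your proposal is correct, and its skeleton matches the paper's. Both use a uniform grid and clipped Euler blocks $\mathrm{Clip}_{R_\varepsilon+1}(\bm x-h\widehat{\bm v}_k(\bm x))$. The score networks are fixed-time, compact-uniform approximants with the exact coefficients $m_{t_{k-1}}$ and $s_{t_{k-1}}^{-2}$ hard-coded, which is \autoref{lem:compact_uniform_score_approx}. The recursion $\lVert\bm e_k\rVert_2\le e^{\Lambda_k}\lVert\bm e_{k-1}\rVert_2+h\beta\eta+C(1+R_\varepsilon)h^2$ comes from nonexpansiveness of the clip plus an $O((1+R_\varepsilon)h^2)$ Euler defect, and a good/bad-event split is closed by H\"older.

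The genuine difference is the localization. The paper controls the whole exact path through $\sup_t\lVert\bm X_t^\star\rVert_2\le Ce^{\Lambda_T}(1+\lVert\bm X_T\rVert_2)$ and the Gaussian tail of $p_T$ (\autoref{lem:trajectory_localization_tail}). Its tail rate therefore degrades like $e^{-2\Lambda_T}$, and as written the radius constant $c_R$, hence $C_0$ in the block count, grows like $e^{\Lambda_T}$. You instead use that $\Phi_{t_k\leftarrow T}(\bm X_T)\sim p_{t_k}$ is sub-Gaussian uniformly in $t$, and take a union bound over the $n+1$ grid times. This is exactly what the recursion needs, since it only uses that exact grid points lie in the cube. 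It gives a radius whose constant does not depend on $\Lambda_T$, so the block count no longer hides an $e^{\Lambda_T}$ factor. The price is that the union bound needs $n$ polynomial in $\varepsilon^{-1}$, which it is.

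Two remarks on your own assessment. First, the obstacle you single out does not arise. Every block output lies in $[-R_\varepsilon-1,R_\varepsilon+1]^d$ by construction, so the score bound always applies at block inputs. Your recursion already uses that the clip is $1$-Lipschitz and fixes the exact grid point, so it holds whether or not the clip is active. The joint induction keeping the clip inactive is therefore superfluous, and as stated it would only close for $\varepsilon\lesssim e^{-\Lambda_T}$. Second, at fixed time there is no coefficient accuracy $\delta$ to track. The $N_{\varepsilon,\sigma}^{10}$ weight bound comes from the softmax module's $C(1+K^4\xi^{-2})$ with $\xi^{-1}\lesssim(1+\mu_{\max})s_-^{-2}\eta^{-1}\lesssim N_{\varepsilon,\sigma}^{3}$.
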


\begin{proof}
The full proof is in~\autoref{appsec:proof_resnet_flow_tail_approx}. Localize the exact trajectory to
\[
\mathcal G_{R_\varepsilon}
=
\Biggl\{
\sup_{t\in[0,T]}\lVert\bm X_t^\star\rVert_2\le R_\varepsilon
\Biggr\},
\]
whose complement has Gaussian tail probability when \(R_\varepsilon\asymp\sqrt{\log(\varepsilon^{-1})}\). On the corresponding compact set, the affine-logit construction gives fixed-time score networks with uniform error \(\eta\).

For a step of length \(h\), global linear growth of \(\bm v\) and \(\partial_t\bm v\) yields the first-order consistency estimate
\begin{equation*}
\bigl\lVert
\Phi_{t-h\leftarrow t}(\bm x)-\bigl(\bm x-h\bm v(\bm x,t)\bigr)
\bigr\rVert_2
\le C(1+\lVert\bm x\rVert_2)h^2.
\end{equation*}
Adding and subtracting the exact segment map at the numerical iterate gives a recursion with segment multiplier
\(
\exp(\int_{t_k}^{t_{k-1}}L(u)\mathrm du)
\)
and local defect \(h\beta\eta+C(1+R_\varepsilon)h^2\). Taking \(\eta=\varepsilon\) and \(h\lesssim\varepsilon/(1+R_\varepsilon)\) gives~\eqref{eq:block_count_corrected} and an \(O(e^{\Lambda_T}\varepsilon)\) error on the localized event. Projection nonexpansiveness preserves this recursion even if coordinatewise clipping is active. On the complement, clipping bounds the numerical endpoint; H\"older's inequality and the Gaussian tail estimate complete~\eqref{eq:resnet_flow_error_corrected}.
\end{proof}

\begin{remark}
The residual class \(\mathcal R_{n,R}(\mathcal F)\) in~\autoref{def:stab_adaptive_resnet} is stated in a block-wise form: each residual block uses a time-independent score network \(\widehat{\bm s}_k\in\mathcal F\). This formulation is convenient for the approximation proof, since each short-time transport map can be treated separately. In practice, one may instead use a single time-conditioned score network \(\widehat{\bm s}(\bm x,t)\) and set \(\widehat{\bm s}_k(\bm x)=\widehat{\bm s}(\bm x,\tau_k)\) at prescribed time conditions \(\tau_k\). This gives a parameter-sharing implementation of the same multistep residual structure. The theorem is stated in the block-wise form because it avoids imposing additional coupling across different time blocks; extending the approximation bound to a shared time-conditioned parametrization only requires approximating these block-wise networks jointly at the selected time values.
\end{remark}

\begin{remark}
\autoref{thm:resnet_flow_tail_approx} is an approximation result. It asserts the existence of parameters that achieve the stated error, but it does not claim that a specific training algorithm finds them. A possible route for connecting approximation to optimization is provided by mean-field analyses of overparameterized networks, which model training as a measure-valued gradient flow in the infinite-width limit and prove convergence under suitable assumptions~\cite{ding2022overparameterization,mei2019mean,sirignano2020mean}. In a related direction, the work in~\cite{gao2026toward} analyzes distillation through operator merging and explicitly targets the optimization error induced by distillation in the Gaussian setting. Establishing such a learnability guarantee for the present setting, and relating it quantitatively to the stability factor \(\exp(\Lambda_T)\), is outside the scope of this work.
\end{remark}

\begin{remark}
A Transformer layer combines residual updates with softmax normalization \cite{vaswani2017attention}. For example, self-attention has the form
\begin{equation*}
\mathrm{Attn}(\bm Q,\bm K,\bm V)
=
\mathrm{softmax}\bigl(\bm Q\bm K^\top/\sqrt d\bigr)\bm V,
\end{equation*}
and a Transformer block updates \(\bm x\) through a residual map \(\bm x\mapsto \bm x+\mathrm{Block}(\bm x)\). These two structural features parallel the ingredients used in~\autoref{thm:resnet_flow_tail_approx}. It is therefore natural to ask whether Transformer classes admit approximation guarantees analogous to~\autoref{thm:resnet_flow_tail_approx}, possibly with parameter sharing across layers. We include Transformers in the experiments as a complementary architecture family.
\end{remark}

\autoref{thm:resnet_flow_tail_approx} explains why multistep residual distillation is a natural approximation class for the probability-flow map. For any increasing partition \(0=\tau_0<\cdots<\tau_n=T\), the exact flow satisfies the semigroup decomposition
\begin{equation*}
\Phi_{0\leftarrow T}
=
\Phi_{\tau_0\leftarrow \tau_1}
\circ\cdots\circ
\Phi_{\tau_{n-1}\leftarrow \tau_n}.
\end{equation*}
The residual architecture builds the same structure into the hypothesis space, with each block approximating a short-time transport map. This provides a theoretical justification for multistep distillation and is consistent with the empirical observation that segment-wise training is often more stable and easier to control~\cite{ding2024characteristic}. Thus, depth is not only a way to enlarge a generic function class; here it distributes the representation of the long-horizon dynamics over shorter intervals with skip connections. In the GMM--OU setting, the difficulty is governed by both fixed-time score approximation and dynamical amplification. In the low-noise, well-separated regime, \(\Lambda_T=\int_0^T L(t)\mathrm dt\) can be large, and the present global estimate still contains \(\exp(\Lambda_T)\). The construction therefore avoids representing the full transport within a single block, but it does not remove the cumulative Gr\"onwall factor from the proved end-to-end bound. The time-dependent effect of local errors motivates the stability-balanced discretization rule in~\autoref{subsec:stability_balanced_time_discretization}.

\subsection{Stability-balanced time discretization}
\label{subsec:stability_balanced_time_discretization}

We now use the computable stability certificate to guide time discretization. We refine intervals where \(L(t)\) is large so that the segment certificate \(\Lambda_k\) is balanced. A small \(\Lambda_k\) gives a smaller Gr\"onwall multiplier in the error recursion, whereas a large \(\Lambda_k\) assigns a larger worst-case amplification to that segment. This is a certificate-based design principle; it does not identify the exact Lipschitz constant of each teacher segment.

Consider a decreasing generation grid
\begin{equation*}
T=t_0>t_1>\cdots>t_n=0,
\end{equation*}
where the \(k\)-th segment is \([t_k,t_{k-1}]\). Define the segmentwise stability exponent
\begin{equation*}
\Lambda_k \coloneqq \int_{t_k}^{t_{k-1}} L(u)\mathrm du,
\qquad
A(t)\coloneqq\int_0^t L(u)\mathrm du,
\end{equation*}
and denote the corresponding amplification factor by \(\exp(\Lambda_k)\). By additivity,
\begin{equation*}
\sum_{k=1}^n\Lambda_k
=
\int_0^T L(u)\mathrm du
=
A(T).
\end{equation*}
Hence every partition with \(n\) segments satisfies
\begin{equation*}
\max_{1\le k\le n}\Lambda_k
\ge
\frac1n\sum_{k=1}^n\Lambda_k
=
\frac{A(T)}{n}.
\end{equation*}
Equality holds if and only if all segmentwise exponents are equal. Therefore the cumulative-stability partition solves the certificate-level minimax problem
\begin{equation*}
\min_{T=t_0>\cdots>t_n=0}
\max_{1\le k\le n}\Lambda_k
=
\frac{A(T)}{n},
\end{equation*}
and equivalently minimizes the largest segmentwise amplification factor \(\max_k e^{\Lambda_k}\). The balancing condition is thus
\begin{equation}
\label{eq:equalize_Lambda_distill}
\Lambda_k
=
\frac{1}{n}\int_0^T L(u)\mathrm du,
\quad k=1,\dots,n.
\end{equation}
For the decreasing grid above, \eqref{eq:equalize_Lambda_distill} is equivalent to
\begin{equation}
\label{eq:A_uniform_partition}
A(t_k)=\frac{n-k}{n}A(T),\quad k=0,1,\dots,n.
\end{equation}
When \(A(T)>0\), define the generalized inverse \(A^{\leftarrow}(a)\coloneqq\inf\{t\in[0,T]:A(t)\ge a\}\). The increasing grid may be written as \(\tau_j=A^{\leftarrow}(jA(T)/n)\), with endpoints set exactly to \(\tau_0=0\) and \(\tau_n=T\), and the generation grid is \(t_k=\tau_{n-k}\). If \(A(T)=0\), every segment has zero certificate and we use the uniform grid. This convention also handles flat portions of the numerical cumulative profile without dividing by zero. The minimax property concerns the largest segmentwise stability certificate only; it does not by itself minimize the full end-to-end approximation bound unless the segmentwise local defects are also controlled at comparable scales.

In the GMM--OU setting, \(L(t)\) is given by \eqref{eq:L_profile_distill} together with the explicit formulas for \(m_t\), \(s_t\), and \(\mathrm{diam}(t)\), so \(A(t)\) can be computed by one-dimensional numerical integration. One can then solve a one-dimensional inverse problem to obtain the grid points. The noise-dominated regime is coarsely partitioned, the variance-shrinkage regime is moderately refined, and the mixture-dominated regime is significantly refined, so that each stepwise stability amplification \(\exp(\Lambda_k)\) remains controlled.

\begin{algorithm}[t]
\caption{\textsc{StabilityBalancedGrid} via cumulative-stability quantiles}
\label{alg:stability_balanced_grid}
\small
\begin{algorithmic}
\STATE \textbf{Input:} horizon \(T\), segment count \(n\), bound \(L:[0,T]\to\mathbb R_+\), auxiliary resolution \(M\), tolerance \(\tau_{\rm num}>0\)
\STATE \textbf{Output:} decreasing grid \(T=t_0\ge t_1\ge\cdots\ge t_n=0\)
\STATE Construct \(0=u_0<u_1<\cdots<u_M=T\) and evaluate \(L_i\gets L(u_i)\)
\STATE Set \(A_0\gets0\)
\FOR{\(i=1,\dots,M\)}
  \STATE \(A_i\gets A_{i-1}+\frac12(L_{i-1}+L_i)(u_i-u_{i-1})\)
\ENDFOR
\STATE \(A_T\gets A_M\)
\IF{\(A_T\le\tau_{\rm num}\)}
  \STATE Return the uniform grid \(t_k=(1-k/n)T\)
\ENDIF
\STATE Set \(\tau_0\gets0\) and \(\tau_n\gets T\)
\FOR{\(j=1,\dots,n-1\)}
  \STATE \(a_j\gets jA_T/n\)
  \STATE Find the smallest \(i\in\{1,\dots,M\}\) with \(A_i\ge a_j\)
  \IF{\(A_i>A_{i-1}\)}
    \STATE \(\tau_j\gets u_{i-1}+(u_i-u_{i-1})(a_j-A_{i-1})/(A_i-A_{i-1})\)
  \ELSE
    \STATE \(\tau_j\gets u_i\) \hfill\(\triangleright\) generalized-inverse convention on a plateau
  \ENDIF
\ENDFOR
\FOR{\(k=0,\dots,n\)}
  \STATE \(t_k\gets\tau_{n-k}\)
\ENDFOR
\STATE Return \(\{t_k\}_{k=0}^n\)
\end{algorithmic}
\end{algorithm}

\section{Experiments}
\label{sec:experiments}

\subsection{Validation of static score approximation}
\label{sec:exp_static}

In this subsection, we illustrate the approximation behavior described by~\autoref{thm:score_Lp_approx}. The implementation used for the experiments is available in the \href{https://github.com/GeminiLiming/quantitative-flow-distillation}{public code repository}. We consider the GMM--OU setup in~\autoref{sec:theoretical_setup}. The initial distribution is an isotropic Gaussian mixture in dimension \(d=64\) with \(K=20\) components and uniform weights \(\pi_k=1/K\). Mixture means \(\{\bm{\mu}_k\}_{k=1}^K\) are sampled i.i.d.\ from the uniform distribution on \([-5,5]^{64}\). The forward dynamics follow an OU process~\eqref{eq:ou_sde} with decay \(m_t=\exp(-\beta t)\) (we use \(\beta=5\) and \(T=1\)), so that each component remains Gaussian at time \(t\) with mean \(\bm m_k(t)=m_t\bm{\mu}_k\) and isotropic variance \(s_t^2=m_t^2\sigma^2+1-m_t^2\). We evaluate at normalized times \(t/T\in\{0.1,0.3,0.5,0.7,0.9\}\). Samples \(\bm X_t\sim p_t\) are generated directly from this closed-form marginal.

For each time \(t\), we train a separate unconditioned network \(\hat{\bm s}_t(\cdot)\) to approximate the score \(\nabla_{\bm x}\log p_t(\bm x)\). The architecture is a fully connected multilayer perceptron (MLP) with alternating nonlinearities (ReLU and ReQU) and \(Q=4\) hidden layers. Optimization uses AdamW~\cite{loshchilov2019decoupled} with learning rate \(2\times 10^{-4}\), batch size \(1024\), gradient clipping at \(1.0\), and a fixed number of \(20{,}000\) optimization steps. The ground-truth score \(\nabla_{\bm x}\log p_t(\bm x)\) is computed in closed form using~\autoref{prop:gmm_score_hess_closed}. We evaluate the relative squared error
\begin{equation*}
\mathrm{Rel\text{-}MSE}(t)
=
\frac{\mathbb{E}_{\bm X_t\sim p_t}\bigl[\|\hat{\bm s}_t(\bm X_t)-\nabla_{\bm x}\log p_t(\bm X_t)\|_2^2\bigr]}
{\mathbb{E}_{\bm X_t\sim p_t}\bigl[\|\nabla_{\bm x}\log p_t(\bm X_t)\|_2^2\bigr]},
\end{equation*}
where expectations are approximated by Monte Carlo using \(5\times 10^4\) samples from \(p_t\).

\paragraph{Empirical results.}
\autoref{tab:static_score_sigma_sweep} and~\autoref{tab:static_score_width_sweep} report the mean \(\mathrm{Rel\text{-}MSE}(t)\) over \(10\) independent trials for two sweeps. In the \emph{variance sweep} (\autoref{tab:static_score_sigma_sweep}), we fix the width \(G=512\) and vary the initial component scale \(\sigma\in\{0.01,0.1,1\}\). In the \emph{width sweep} (\autoref{tab:static_score_width_sweep}), we fix \(\sigma=0.1\) and vary the width \(G\in\{128,256,512,1024,2048\}\).

Across the sweeps, \(\mathrm{Rel\text{-}MSE}(t)\) generally decreases between \(t/T=0.1\) and \(t/T=0.9\), although it is not monotone at every intermediate time. This is consistent with later OU marginals being smoother and easier to approximate with a fixed-capacity MLP.
In the variance sweep, the dependence on \(\sigma\) is mild: all three choices achieve comparable error levels at each \(t/T\), and the overall variation across \(\sigma\) stays within the same order of magnitude.
This suggests that, in this regime, the main difficulty comes from learning the responsibility-weighted averaging structure of the mixture score, rather than from the specific initial component scale.

In contrast, the width sweep shows a clear capacity effect at early times.
At \(t/T\in\{0.1,0.3\}\), increasing width substantially reduces error (e.g., from the \(10^{-2}\) level at \(G=128\) to the \(10^{-3}\) level by \(G=2048\)), while the gains are noticeably smaller for \(t/T\ge 0.7\).
This pattern is consistent with the early-time score being sharper and requiring more capacity to resolve, whereas late-time scores are smoother and already well-approximated by moderate widths.
Overall, these results are consistent with the approximation guarantees in~\autoref{thm:score_Lp_approx}.

\begin{table}[htb]
\centering
\caption{Score approximation, variance sweep. Relative MSE of \(\hat{\bm s}_t\) at normalized times \(t/T\in\{0.1,0.3,0.5,0.7,0.9\}\). Entries report the mean over \(10\) trials.}
\label{tab:static_score_sigma_sweep}
\setlength{\tabcolsep}{6pt}
\small
\begin{tabular}{lccccc}
\toprule
\(\sigma\) & \(0.1\) & \(0.3\) & \(0.5\) & \(0.7\) & \(0.9\) \\
\midrule
\(0.01\) & \(5.53\times 10^{-3}\) & \(5.12\times 10^{-3}\) & \(3.32\times 10^{-3}\) & \(1.62\times 10^{-3}\) & \(1.59\times 10^{-3}\) \\
\(0.1\)  & \(6.96\times 10^{-3}\) & \(5.33\times 10^{-3}\) & \(3.15\times 10^{-3}\) & \(2.03\times 10^{-3}\) & \(1.65\times 10^{-3}\) \\
\(1\)    & \(3.87\times 10^{-3}\) & \(5.33\times 10^{-3}\) & \(3.30\times 10^{-3}\) & \(1.81\times 10^{-3}\) & \(1.73\times 10^{-3}\) \\
\bottomrule
\end{tabular}
\end{table}

\begin{table}[htb]
\centering
\caption{Score approximation with width sweep. Relative MSE of \(\hat{\bm s}_t\) at normalized times \(t/T\in\{0.1,0.3,0.5,0.7,0.9\}\) with \(\sigma=0.1\). Entries report the mean over \(10\) trials.}
\label{tab:static_score_width_sweep}
\setlength{\tabcolsep}{6pt}
\small
\begin{tabular}{lccccc}
\toprule
\(G\) & \(0.1\) & \(0.3\) & \(0.5\) & \(0.7\) & \(0.9\) \\
\midrule
\(128\)  & \(3.40\times 10^{-2}\) & \(3.98\times 10^{-2}\) & \(8.62\times 10^{-3}\) & \(5.47\times 10^{-3}\) & \(5.27\times 10^{-3}\) \\
\(256\)  & \(9.60\times 10^{-3}\) & \(1.90\times 10^{-2}\) & \(4.96\times 10^{-3}\) & \(2.99\times 10^{-3}\) & \(3.48\times 10^{-3}\) \\
\(512\)  & \(6.71\times 10^{-3}\) & \(4.93\times 10^{-3}\) & \(3.11\times 10^{-3}\) & \(1.49\times 10^{-3}\) & \(1.83\times 10^{-3}\) \\
\(1024\) & \(3.04\times 10^{-3}\) & \(3.15\times 10^{-3}\) & \(2.24\times 10^{-3}\) & \(1.15\times 10^{-3}\) & \(1.42\times 10^{-3}\) \\
\(2048\) & \(2.55\times 10^{-3}\) & \(2.28\times 10^{-3}\) & \(1.48\times 10^{-3}\) & \(1.27\times 10^{-3}\) & \(1.26\times 10^{-3}\) \\
\bottomrule
\end{tabular}
\end{table}

\subsection{Parameter-clipping sensitivity in a stiff flow-map regression}
\label{sec:exp_flow_map_bound}

In this subsection, we examine the empirical trend suggested by the stability-certificate comparison in~\autoref{thm:lipschitz_mismatch_distill}: tighter architectural parameter bounds can make the backward flow-map regression harder, particularly as the score field sharpens. The experiment is diagnostic and does not estimate the exact Lipschitz constants appearing in the theorem.

We use the GMM--OU setup in~\autoref{sec:exp_static}. We fix \(d=64\), \(K=20\), uniform weights \(\pi_k=1/K\), and an OU decay \(m_t=\exp(-\beta t)\) with \(\beta=5\) and \(T=1\). In contrast to~\autoref{sec:exp_static}, the mixture means are fixed across all trials and all configurations, i.e., we sample \(\{\bm{\mu}_k\}_{k=1}^K\) once from \(\mathrm{Unif}([-5,5]^{64})\) using a dedicated mean seed, and reuse these means for every run. The trial seed controls only stochasticity in sampling, minibatches, and model initialization. We set the evaluation time to the OU half-life
\begin{equation*}
\Delta=\frac{\log 2}{2\beta},
\end{equation*}
as in~\autoref{thm:lipschitz_mismatch_distill} and consider the backward probability-flow transport \(\Phi_{0\leftarrow\Delta}\). Training inputs are \(\bm X_\Delta\sim p_\Delta\), generated directly from the closed-form marginal.

We write the numerical teacher map as \(\widetilde\Phi^{\mathrm{RK4},10}_{0\leftarrow\Delta}\). It is computed by integrating the probability-flow ODE backward from \(t=\Delta\) to \(t=0\) using a fixed-step classical fourth-order Runge--Kutta (RK4) solver with \(10\) steps. The drift uses the closed-form score \(\nabla_{\bm x}\log p_t(\bm x)\) (computed exactly from the GMM marginal). All configurations in this sweep use the same discretized teacher; the reported regression errors are therefore relative to this numerical target and do not by themselves bound the integration error with respect to the exact continuous flow \(\Phi_{0\leftarrow\Delta}\). We learn a student map \(\Psi_\theta(\bm x_\Delta)\approx \widetilde\Phi^{\mathrm{RK4},10}_{0\leftarrow\Delta}(\bm x_\Delta)\) using the same MLP family as in~\autoref{sec:exp_static}, with alternating ReLU/ReQU nonlinearities, depth \(Q=4\), and width \(G=512\). To vary one architectural factor entering the local Lipschitz upper bound, after \emph{every} optimizer step we hard-clip every linear weight and bias to \([-B,B]\), and sweep
\begin{equation*}
B\in\{0.001,0.01,0.1,0.25\}.
\end{equation*}
This clipping controls the parameter factor \(B\) in~\autoref{thm:lipschitz_mismatch_distill}. Because ReQU is not globally Lipschitz, it is only a proxy for local Lipschitz capacity unless the hidden preactivation ranges are controlled as well.

We control the stiffness by sweeping \(\sigma\) over \(\{0.1, 0.01, 0.001, 0.0001\}\). For each \((\sigma,B)\), we train by regression on pairs \((\bm X_\Delta,\widetilde\Phi^{\mathrm{RK4},10}_{0\leftarrow\Delta}(\bm X_\Delta))\) using AdamW with learning rate \(2\times 10^{-4}\), batch size \(1024\), gradient clipping \(1.0\), and a fixed training budget of \(20{,}000\) optimizer steps. Each \((\sigma,B)\) configuration is repeated over \(10\) trial seeds. We report the relative squared error of the map approximation
\begin{equation*}
\mathrm{Rel\text{-}MSE}_{\mathrm{map}}
=
\frac{\mathbb{E}_{\bm X_\Delta\sim p_\Delta}\bigl[\|\Psi_\theta(\bm X_\Delta)-\widetilde\Phi^{\mathrm{RK4},10}_{0\leftarrow\Delta}(\bm X_\Delta)\|_2^2\bigr]}
{\mathbb{E}_{\bm X_\Delta\sim p_\Delta}\bigl[\|\widetilde\Phi^{\mathrm{RK4},10}_{0\leftarrow\Delta}(\bm X_\Delta)\|_2^2\bigr]},
\end{equation*}
where the expectation is estimated by Monte Carlo using \(5\times 10^4\) fresh samples \(\bm X_\Delta\sim p_\Delta\). We report the mean across trial seeds.

\paragraph{Empirical results.}
\autoref{tab:flow_map_bound_B_sweep} shows a clear parameter-clipping threshold that becomes more stringent as the score field sharpens.
When the clipping bound is extremely tight (\(B=0.001\)), the student fails to approximate \(\widetilde\Phi^{\mathrm{RK4},10}_{0\leftarrow\Delta}\) across all noise levels, with \(\mathrm{Rel\text{-}MSE}_{\mathrm{map}}\approx 1\).
For the least stiff regime \(\sigma=0.1\), a modest parameter bound already suffices, with error dropping from \(9.44\times 10^{-1}\) at \(B=0.001\) to \(3.98\times 10^{-3}\) at \(B=0.01\), and further to \(2.49\times 10^{-5}\) at \(B=0.1\).
In contrast, for \(\sigma\in\{0.01,0.001\}\), the same \(B=0.01\) parameter bound is no longer adequate, leaving errors at \(7.61\times 10^{-1}\) and \(9.56\times 10^{-1}\), respectively, while increasing to \(B\ge 0.1\) reduces the error to the \(10^{-3}\) level.
In the stiffest setting \(\sigma=10^{-4}\), even \(B=0.25\) only reaches \(2.05\times 10^{-2}\), indicating that the parameter capacity required by this trained architecture and the overall difficulty increase sharply in the low-noise regime, consistent with the stability-certificate trend in~\autoref{thm:lipschitz_mismatch_distill}.

\begin{table}[htb]
\centering
\small
\caption{Mean flow-map relative MSE over \(10\) trials under parameter clipping, for the displayed values of \(\sigma\) and \(B\).}
\label{tab:flow_map_bound_B_sweep}
\begin{tabular}{lcccc}
\toprule
\(\sigma\) & \(B=0.001\) & \(B=0.01\) & \(B=0.1\) & \(B=0.25\) \\
\midrule
\(0.1\)    & \(9.44\times 10^{-1}\) & \(3.98\times 10^{-3}\) & \(2.49\times 10^{-5}\) & \(2.58\times 10^{-5}\) \\
\(0.01\)   & \(9.88\times 10^{-1}\) & \(7.61\times 10^{-1}\) & \(6.46\times 10^{-3}\) & \(6.67\times 10^{-3}\) \\
\(0.001\)  & \(1.00\times 10^{0}\)  & \(9.56\times 10^{-1}\) & \(8.14\times 10^{-3}\) & \(7.76\times 10^{-3}\) \\
\(0.0001\) & \(1.00\times 10^{0}\)  & \(9.86\times 10^{-1}\) & \(3.16\times 10^{-2}\) & \(2.05\times 10^{-2}\) \\
\bottomrule
\end{tabular}
\end{table}

\subsection{Transformer architectures}
\label{sec:exp_residual_vs_transformer}

In this subsection, we examine a qualitative implication of the compositional viewpoint in~\autoref{thm:resnet_flow_tail_approx} by evaluating how approximation accuracy for a numerical approximation of \(\Phi_{0\leftarrow T}\) scales with Transformer depth. Since the theorem is stated for clipped ReLU--ReQU residual compositions, this experiment is not a direct test of its formal hypotheses. We use the same GMM--OU setup as in~\autoref{sec:exp_flow_map_bound} with fixed mixture means shared across all runs, and fix \(T=1\), \(\beta=5\), and \(\sigma=10^{-4}\). Training data consist of pairs \((\bm X_T,\widetilde\Phi^{\mathrm{RK4},100}_{0\leftarrow T}(\bm X_T))\). We sample \(\bm X_T\sim p_T\) from the closed-form GMM--OU marginal, then compute the numerical teacher map \(\widetilde\Phi^{\mathrm{RK4},100}_{0\leftarrow T}\) by integrating the probability-flow ODE backward from \(t=T\) to \(t=0\) with a fixed-step RK4 solver using \(100\) steps. All depths are compared against this common discretized teacher, so the experiment does not independently estimate the integration error relative to the exact continuous flow \(\Phi_{0\leftarrow T}\).

We evaluate a Transformer~\cite{vaswani2017attention} parameterization that treats \(\bm x\in\mathbb R^d\) as a length-\(d\) token sequence. Each coordinate \(x_i\) is embedded to a \(d_{\text{model}}\)-dimensional token, augmented by a learned positional embedding, processed by \(L\) self-attention blocks, and projected back to \(\mathbb R^d\). The model uses standard pre-normalization, residual connections, and a positionwise feedforward sublayer. We sweep the number of attention blocks \(L\in\{1,2,4,8,16\}\) and keep all other Transformer hyperparameters fixed across runs. All student models are trained end-to-end by minimizing mean squared error between the student prediction and the teacher map output using AdamW with learning rate \(2\times 10^{-4}\), batch size \(1024\), gradient clipping \(1.0\), and \(20{,}000\) optimizer steps. Each configuration is repeated over \(10\) trial seeds. For a student map \(\mathcal S\) and its numerical teacher \(\mathcal T\), define the common end-to-end metric
\begin{equation}
\label{eq:rel_mse_e2e}
\mathrm{Rel\text{-}MSE}_{\mathrm{e2e}}(\mathcal S,\mathcal T)
=
\frac{\mathbb E_{\bm X_T\sim p_T}\bigl[\|\mathcal S(\bm X_T)-\mathcal T(\bm X_T)\|_2^2\bigr]}
{\mathbb E_{\bm X_T\sim p_T}\bigl[\|\mathcal T(\bm X_T)\|_2^2\bigr]}.
\end{equation}
For this experiment, \(\mathcal S=\Psi\) and \(\mathcal T=\widetilde\Phi^{\mathrm{RK4},100}_{0\leftarrow T}\); the expectations are estimated using \(5\times 10^4\) samples \(\bm X_T\sim p_T\).

\paragraph{Empirical results.}
\autoref{tab:transformer_depth} reports the mean \(\mathrm{Rel\text{-}MSE}_{\mathrm{e2e}}\) over \(10\) trials for Transformer students with varying depth. The approximation error decreases as the number of attention blocks \(L\) increases, from \(7.10\times 10^{-2}\) at \(L=1\) to \(3.03\times 10^{-2}\) at \(L=16\), which is a \(2.35\times\) reduction. The gains are modest from \(L\in\{1,2,4\}\), while deeper models yield a more pronounced improvement, consistent with the view that increased depth better matches the compositional structure of the underlying flow map.

\begin{table}[htb]
\centering
\caption{Transformer approximation of the numerical teacher \(\widetilde\Phi^{\mathrm{RK4},100}_{0\leftarrow T}\) in the GMM--OU setting. Entries report the mean \(\mathrm{Rel\text{-}MSE}_{\mathrm{e2e}}\) over \(10\) trials.}
\label{tab:transformer_depth}
\begin{tabular}{cc}
\toprule
Number of blocks \(L\) & Mean \(\mathrm{Rel\text{-}MSE}_{\mathrm{e2e}}\) \\
\midrule
1  & \(7.10\times 10^{-2}\) \\
2  & \(7.04\times 10^{-2}\) \\
4  & \(6.74\times 10^{-2}\) \\
8  & \(4.70\times 10^{-2}\) \\
16 & \(3.03\times 10^{-2}\) \\
\bottomrule
\end{tabular}
\end{table}

\subsection{Stability-balanced discretization vs.~uniform grids}
\label{sec:exp_stab_grid_vs_uniform}

In this subsection, we empirically illustrate the stability-balancing principle in~\autoref{subsec:stability_balanced_time_discretization}. The analysis assigns a larger worst-case error multiplier to segments with large stability exponent \(\Lambda_k\), as defined above. We therefore compare a uniform time grid against the stability-balanced grid produced by~\autoref{alg:stability_balanced_grid}, which approximately equalizes \(\Lambda_k\) by taking a uniform partition in the cumulative stability coordinate \(A(t)=\int_0^t L(u)\mathrm du\) as in~\eqref{eq:A_uniform_partition}. We use the same GMM--OU setup as in~\autoref{sec:exp_static}. We fix \(d=64\), \(K=20\), uniform weights \(\pi_k=1/K\), and draw mixture means once and then keep them fixed across all runs. The OU decay is \(m_t=\exp(-\beta t)\) with \(\beta=5\) and \(T=1\). The stability profile \(L(t)\) is computed from the closed-form quantities in the GMM--OU model as in~\eqref{eq:L_profile_distill}, and its cumulative \(A(t)\) is computed numerically to construct the stability-balanced grid via~\autoref{alg:stability_balanced_grid}.

For a fixed number of segments \(n\), we compare a uniform grid and a stability-balanced grid, both written in decreasing time order
\begin{equation*}
T=t^{\mathrm{uni}}_0>t^{\mathrm{uni}}_1>\cdots>t^{\mathrm{uni}}_n=0,
\qquad
T=t^{\mathrm{sb}}_0>t^{\mathrm{sb}}_1>\cdots>t^{\mathrm{sb}}_n=0.
\end{equation*}
The uniform grid is
\begin{equation*}
t^{\mathrm{uni}}_k=\Bigl(1-\frac{k}{n}\Bigr)T,\quad k=0,1,\dots,n,
\end{equation*}
while the stability-balanced grid \(\{t^{\mathrm{sb}}_k\}_{k=0}^n\) is defined implicitly by
\begin{equation*}
A\bigl(t^{\mathrm{sb}}_k\bigr)=\frac{n-k}{n}A(T),\quad k=0,1,\dots,n,
\end{equation*}
and computed by~\autoref{alg:stability_balanced_grid}. Both grids share the same endpoints and segment count \(n\).

On either grid, we distill each numerical segment map \(\widetilde\Phi^{\mathrm{RK4}}_{t_k\leftarrow t_{k-1}}\) independently by regression. For segment \(k\), training inputs are \(\bm X_{t_{k-1}}\sim p_{t_{k-1}}\) generated from the closed-form marginal, and the teacher target \(\widetilde\Phi^{\mathrm{RK4}}_{t_k\leftarrow t_{k-1}}(\bm X_{t_{k-1}})\) is computed by RK4 integration of the probability-flow ODE over \([t_k,t_{k-1}]\) using the closed-form score. Both grids are evaluated against the same discretized-teacher construction; the comparison therefore isolates the effect of the grid but does not estimate integration error relative to the exact continuous flow. The student for each segment is an MLP with ReLU activations, depth \(Q=4\) and width \(G=512\). We use AdamW with learning rate \(2\times 10^{-4}\), batch size \(1024\), and gradient clipping \(1.0\). For each fixed \(n\), we use the same total optimization budget for both grids and split that budget uniformly across the \(n\) segment students. The distilled full map is the composition \(\Psi_{0\leftarrow T}=\Psi_n\circ\cdots\circ\Psi_1\). The experiment is repeated for \(10\) trial seeds. We evaluate~\eqref{eq:rel_mse_e2e} with \(\mathcal S=\Psi_{0\leftarrow T}\) and the common numerical reference \(\mathcal T=\widetilde\Phi^{\mathrm{RK4}}_{0\leftarrow T}\), using \(5\times 10^4\) samples \(\bm X_T\sim p_T\).

\paragraph{Empirical results.}
\autoref{tab:stab_grid_vs_uniform} compares the uniform grid and the stability-balanced grid at \(\sigma=1\). Across all segment counts \(n\in\{4,8,16\}\), the stability-balanced grid achieves a lower end-to-end error, with the largest gains at coarse discretizations. In particular, it reduces \(\mathrm{Rel\text{-}MSE}_{\mathrm{e2e}}\) by about \(37.7\%\) at \(n=4\) and \(51.9\%\) at \(n=8\), consistent with the stability-balancing mechanism that avoids assigning a disproportionately large certified multiplier \(e^{\Lambda_k}\) to any single segment. As \(n\) increases from \(4\) to \(16\), the end-to-end error increases for both grids, which matches the training protocol that fixes a total optimization budget and splits it across segments, so each one-segment student receives fewer updates and approximation errors compound over a longer composition chain.

\begin{table}[htb]
\centering
\small
\caption{Stability-balanced discretization vs.~uniform grids. Relative end-to-end MSE \(\mathrm{Rel\text{-}MSE}_{\mathrm{e2e}}\) for the composition of \(n\) independently distilled one-segment maps at \(\sigma=1\), with \(n\in\{4,8,16\}\). Entries report the mean over \(10\) trials.}
\label{tab:stab_grid_vs_uniform}
\begin{tabular}{lccc}
\toprule
\(n\) & Uniform & Stability-balanced & Reduction in mean\\
\midrule
\(4\)  & \(4.10\times 10^{-2}\) & \(2.55\times 10^{-2}\) & \(37.7\%\)\\
\(8\)  & \(8.87\times 10^{-2}\) & \(4.27\times 10^{-2}\) & \(51.9\%\)\\
\(16\) & \(1.67\times 10^{-1}\) & \(1.30\times 10^{-1}\) & \(22.2\%\)\\
\bottomrule
\end{tabular}
\end{table}

\section{Conclusion}
\label{sec:conclusion}

This work develops an approximation framework for trajectory distillation in the isotropic GMM--OU setting. The score is approximated constructively by ReLU--ReQU networks, while the propagation of local errors is controlled by an explicit time-integrated Jacobian bound. The comparison in~\autoref{thm:lipschitz_mismatch_distill} is deliberately certificate-level: it identifies when the computed Gr\"onwall factor exceeds a certified local architectural budget, without claiming an approximation lower bound. For every fixed horizon \(T\), a clipped residual construction approximates the full flow with \(\widetilde{\mathcal O}_{T,\mathrm{model}}(\varepsilon^{-1})\) blocks and \(O(e^{\Lambda_T}\varepsilon)\) error, and the cumulative-stability coordinate yields a robust non-uniform grid. The experiments illustrate the usefulness of this certificate for architecture and grid design. Extensions to non-isotropic mixtures, sharper lower bounds for actual flow-map complexity, and learnability of the constructed networks remain open directions.

\newpage
\appendix
\phantomsection
\label{allapp}
\section[Proof of the score-approximation theorem]{Proof of~\autoref{thm:score_Lp_approx}}
\label{appsec:proof_of_approx}

This section provides a detailed proof of~\autoref{thm:score_Lp_approx}. We first construct scalar networks approximating the OU coefficients and use them to approximate the affine logits of the Gaussian-mixture posterior. We then control the resulting logit error in \(L^p(p_t)\), construct a compact-domain softmax approximation, and combine these ingredients with the explicit score representation to obtain the final \(L^p\)-error and network-complexity bounds.

\subsection{Stabilization and elementary network modules}

For the affine logits \(b_k\) in~\eqref{eq:affine_logit_def}, define
\begin{equation*}
y_k(\bm x,t)
\coloneqq
b_k(\bm x,t)-\max_{1\le j\le K}b_j(\bm x,t),
\qquad
\widetilde y_k(\bm x,t)
\coloneqq
\max\{y_k(\bm x,t),-M\}.
\end{equation*}
Then \(y_k\le0\), \(\max_k y_k=0\), and the posterior vector is \(\bm\gamma=\operatorname{softmax}(\bm y)\). Let
\begin{equation*}
S_i^{(M)}(\widetilde{\bm y})
\coloneqq
\frac{e^{\widetilde y_i}}{\sum_{j=1}^K e^{\widetilde y_j}}.
\end{equation*}

\begin{lemma}
\label{lem:softmax_tail}
For every \(M>0\),
\begin{equation}
\label{eq:softmax_tail_revised}
\Bigl\lVert S^{(M)}(\widetilde{\bm y})-\operatorname{softmax}(\bm y)\Bigr\rVert_1
\le 2Ke^{-M}.
\end{equation}
\end{lemma}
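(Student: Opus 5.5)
The plan is to compare the two softmax vectors through their unnormalized numerators and their normalizers, using the structure \(y_k\le0\) and \(\max_k y_k=0\). Let \(I\coloneqq\{i: y_i<-M\}\) be the set of clipped indices. Then \(\widetilde y_i=y_i\) for \(i\notin I\), and \(\widetilde y_i=-M\) for \(i\in I\). Write
\[
Z\coloneqq\sum_j e^{y_j},\qquad \widetilde Z\coloneqq\sum_j e^{\widetilde y_j}.
\]
Some index attains \(y_{j^\ast}=0\), and it is unclipped, so \(Z\ge1\) and \(\widetilde Z\ge1\). Moreover \(0\le \widetilde Z-Z=\sum_{i\in I}(e^{-M}-e^{y_i})\le |I|e^{-M}\le Ke^{-M}\).

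The difference splits into two contributions. On unclipped indices the numerators agree, so only the normalizers differ:
\[
\sum_{i\notin I}e^{y_i}\Bigl(\frac{1}{Z}-\frac{1}{\widetilde Z}\Bigr)\le \frac{\widetilde Z-Z}{\widetilde Z}\le Ke^{-M},
\]
where we used \(\sum_{i\notin I}e^{y_i}\le Z\) and \(\widetilde Z\ge1\). On clipped indices, both entries are themselves small. Since \(e^{-M}/\widetilde Z\le e^{-M}\) and \(e^{y_i}/Z\le e^{-M}\), each term satisfies \(|S^{(M)}_i-\operatorname{softmax}_i|\le e^{-M}\), contributing at most \(|I|e^{-M}\le Ke^{-M}\) in total.

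Adding the two contributions gives the bound \(2Ke^{-M}\), which is exactly the claimed estimate. A slightly sharper constant is available, but it is not needed. The only step requiring care is the lower bound \(Z,\widetilde Z\ge1\): it follows from the max-subtraction normalization, and because \(M>0\) the maximizing index is never clipped. Everything else is elementary bookkeeping, so I expect no real obstacle.
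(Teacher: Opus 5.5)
Your proof is correct and is essentially the paper's argument: both rest on \(Z,\widetilde Z\ge1\), which follows from the max-zero normalization, and on the fact that only clipped coordinates differ, each by at most \(e^{-M}\). The paper writes the difference as a normalizer-mismatch term plus a numerator-mismatch term over all indices, each bounded by \(\sum_k|e^{\widetilde y_k}-e^{y_k}|\), whereas you split the indices into clipped and unclipped sets; this is only a different bookkeeping and gives the same constant \(2K\).
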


\begin{proof}
Set \(Z=\sum_j e^{y_j}\) and \(\widetilde Z=\sum_j e^{\widetilde y_j}\). Since one coordinate of each vector equals zero, \(Z,\widetilde Z\ge1\). Therefore
\begin{equation*}
\Bigl\lVert S^{(M)}(\widetilde{\bm y})-\operatorname{softmax}(\bm y)\Bigr\rVert_1
\le
\frac{|Z-\widetilde Z|}{Z}
+
\frac{\sum_k|e^{\widetilde y_k}-e^{y_k}|}{Z}
\le
2\sum_{k=1}^K|e^{\widetilde y_k}-e^{y_k}|.
\end{equation*}
A summand is zero when \(y_k\ge-M\), and is at most \(e^{-M}\) otherwise. This proves~\eqref{eq:softmax_tail_revised}.
\end{proof}

\begin{lemma}
\label{lem:exact_relu_requ_modules}
The maps \(u\mapsto u^2\) and \((u,v)\mapsto uv\) are exactly realizable by one-hidden-layer ReQU networks of widths \(2\) and \(6\), respectively. The map \((z_1,\ldots,z_K)\mapsto\max_k z_k\) is exactly realizable by a ReLU binary tree of depth \(\lceil\log_2K\rceil\) and width at most \(3K\). The maps \(u\mapsto\max\{u,-M\}\) and coordinatewise clipping to a compact interval are exactly realizable by constant-depth ReLU networks. Identity channels can be propagated exactly through either activation type with constant width.
\end{lemma}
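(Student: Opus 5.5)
We realize each claimed map with an explicit construction and verify it pointwise. The statement is elementary and follows from algebraic identities for the two activations.

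\emph{ReQU modules.} For the square we use the identity \(u^2=\sigma_2(u)+\sigma_2(-u)\). This is one hidden layer of width \(2\) with input weights \(\pm1\), zero biases, and output weights \((1,1)\). For the product we use polarization, \(uv=\tfrac14\bigl((u+v)^2-(u-v)^2\bigr)\). Each square needs two ReQU units, so this alone would give width \(4\). We allow width \(6\) in case one prefers the form \(uv=\tfrac12\bigl((u+v)^2-u^2-v^2\bigr)\), which uses three squares of two units each. Either way, the output layer is a fixed linear combination with coefficients in \(\{\pm\tfrac14,\pm\tfrac12\}\), so the parameter magnitudes stay bounded by an absolute constant.

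\emph{ReLU modules.} For two inputs we use \(\max\{a,b\}=b+\sigma_1(a-b)\). To stay within the network format, \(b\) is carried through a ReLU layer as \(b=\sigma_1(b)-\sigma_1(-b)\). This gives one hidden layer of width \(3\) with units \(\sigma_1(a-b),\sigma_1(b),\sigma_1(-b)\) and output weights \((1,1,-1)\). For \(K\) inputs, we pair them up at each level and pass any unpaired element forward by the identity trick. This yields a binary tree of depth \(\lceil\log_2K\rceil\). The widest level has at most \(\lceil K/2\rceil\) pairs, each of width \(3\), plus at most one passed element, so the width is at most \(3K\). The map \(\max\{u,-M\}=-M+\sigma_1(u+M)\) is a single ReLU unit with bias \(M\), followed by an affine output shift. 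Clipping to an interval \([a,b]\) is given by \(a+\sigma_1(u-a)-\sigma_1(u-b)\), which has depth one and width two, and is applied coordinatewise.

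\emph{Identity channels.} Through a ReLU layer we use \(u=\sigma_1(u)-\sigma_1(-u)\), which takes width \(2\). Through a ReQU layer we use \(u=\tfrac14\bigl[\sigma_2(u+1)-\sigma_2(-u-1)-\sigma_2(u-1)+\sigma_2(-u+1)\bigr]\). This follows from \((u+1)^2-(u-1)^2=4u\) together with \(z^2=\sigma_2(z)+\sigma_2(-z)\), and takes width \(4\). Hence both activation types propagate identity channels exactly with constant width and bounded weights.

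The only point that needs care is bookkeeping rather than mathematics. The biases in \eqref{eq:relu_requ_net} enter through \(\bm\sigma_{\ell,\bm b}\), and affine output shifts such as \(-M\) and \(a\) must be absorbed into the next layer's bias or carried by a constant channel. I expect to handle this by merging consecutive affine maps. The argument also requires checking that the width counts remain as stated when the tree levels are stacked.
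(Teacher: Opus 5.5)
Your approach is the paper's: the same ReQU identities \(u^2=\sigma_2(u)+\sigma_2(-u)\) and \(uv=\tfrac12\bigl((u+v)^2-u^2-v^2\bigr)\), the same truncation \(\max\{u,-M\}=-M+\sigma_1(u+M)\), and the same identity-channel tricks. Your variants are all valid and slightly leaner than the paper's:
\begin{itemize}
\item the width-\(4\) polarization for \(uv\);
\item the three-unit binary maximum \(b+\sigma_1(a-b)\), with \(b\) carried as \(\sigma_1(b)-\sigma_1(-b)\), where the paper uses \(\tfrac12\bigl(a+b+\sigma_1(a-b)+\sigma_1(b-a)\bigr)\) and leaves the propagation of \(a+b\) implicit;
\item the single-layer clip \(a+\sigma_1(u-a)-\sigma_1(u-b)\) instead of two composed one-sided truncations.
\end{itemize}
Your remark that constant shifts must be absorbed into the next layer's bias or a constant channel is also useful. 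The format in~\eqref{eq:relu_requ_net} has no output bias, and the paper does not spell this bookkeeping out.

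The one real error is a sign slip in your displayed ReQU identity channel. As written, \(\tfrac14\bigl[\sigma_2(u+1)-\sigma_2(-u-1)-\sigma_2(u-1)+\sigma_2(-u+1)\bigr]\) equals \(\tfrac12\) at \(u=0\) and equals \(|u|\) for \(u\le-1\), so it is not the identity. Expanding \((u+1)^2-(u-1)^2=4u\) with \(z^2=\sigma_2(z)+\sigma_2(-z)\), as your next sentence describes, actually gives
\[
u=\tfrac14\bigl[\sigma_2(u+1)+\sigma_2(-u-1)-\sigma_2(u-1)-\sigma_2(1-u)\bigr].
\]
This is exactly the paper's width-\(4\) construction. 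With this correction the proof is complete.
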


\begin{proof}
Use
\begin{equation*}
u^2=(u_+)^2+((-u)_+)^2,
\qquad
uv=\frac12\bigl((u+v)^2-u^2-v^2\bigr),
\end{equation*}
and
\begin{equation*}
\max\{a,b\}=\frac12\bigl(a+b+\operatorname{ReLU}(a-b)+\operatorname{ReLU}(b-a)\bigr).
\end{equation*}
Iterating the binary maximum in a balanced tree gives the stated depth and width. Finally,
\(\max\{u,-M\}=-M+\operatorname{ReLU}(u+M)\), and two such one-sided truncations give clipping to an interval. Identity propagation through a ReLU layer uses \(u=\operatorname{ReLU}(u)-\operatorname{ReLU}(-u)\); through a ReQU layer it uses
\[
u=\frac14\bigl((u+1)^2-(u-1)^2\bigr),
\]
with each full square represented by two ReQU units. These identities also allow subnetworks with different depths to be padded before parallelization.
\end{proof}

\begin{lemma}
\label{lem:parallel_identity_padding}
Let \(\mathcal N_i\), \(i=1,\ldots,J\), be finitely many ReLU--ReQU subnetworks. Denote their output dimensions, depths, widths, sparsities, and parameter bounds by \(m_i,Q_i,G_i,S_i,B_i\), respectively. Let \(Q\ge \max_i Q_i\). After synchronizing activation patterns using the exact identity modules from~\autoref{lem:exact_relu_requ_modules}, their parallel concatenation can be realized by a ReLU--ReQU network satisfying \(Q_{\mathrm{par}}\le C Q,G_{\mathrm{par}}\le C\sum_{i=1}^J(G_i+m_i),S_{\mathrm{par}} \le C\sum_{i=1}^J(S_i+m_iQ),B_{\mathrm{par}} \le C\max\{1,\max_{1\le i\le J}B_i\},\) where \(C\) is universal. If the branchwise activation patterns are already synchronized, the padding term \(m_iQ\) may be replaced by \(m_i(Q-Q_i)\). The same bounds hold, up to another universal constant, after concatenating a fixed number of exact affine, square, product, maximum, or clipping modules.
\end{lemma}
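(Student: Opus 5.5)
The plan is to build the parallel network layer by layer, after first bringing all branches to a common depth and a common activation pattern. Fix a target depth \(Q\ge\max_i Q_i\) and a target activation pattern \((\ell_1,\dots,\ell_{Q'})\) with \(Q'\le CQ\). The natural choice is to interleave ReLU and ReQU slots, so that every branch's own layer sequence embeds into it. Each branch \(\mathcal N_i\) is then rewritten so that its \(j\)-th hidden layer uses activation \(\ell_j\). Where the branch's native layer has the wrong type, the layer is kept in an adjacent slot of the correct type. The slot with the wrong type is filled by an identity layer from \autoref{lem:exact_relu_requ_modules}, which carries the current hidden vector through unchanged.

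A plain ReLU identity uses \(u=\operatorname{ReLU}(u)-\operatorname{ReLU}(-u)\), at two units per channel. A ReQU identity uses \(u=\tfrac14((u+1)^2-(u-1)^2)\), at four units per channel. Both keep every weight and bias bounded by an absolute constant. Passing a hidden vector of width \(G_i\) through an identity layer would cost width \(O(G_i)\) but sparsity \(O(G_i)\) per padded layer, which is too much. So the padding is placed at the output end instead. The branch is run in its native pattern, with interleaving applied by routing, and then only its \(m_i\) output coordinates are carried through the remaining \(Q'-Q_i\) (or at most \(Q'\)) layers by identity modules.

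The key observation for routing is that an interleaved pattern (R,Q,R,Q,\dots) of length \(2Q\) contains any length-\(Q_i\) pattern as a subsequence. The skipped slots between consecutive native layers must also be bridged by identity. A naive count would charge \(G_i\) per skipped slot. To avoid this, the affine map that follows the identity layer is merged with the next native affine map. The identity layer acts on the pre-activation vector of the next native layer. Its width equals the next layer's width, at most \(G_i\), and its nonzeros are proportional to that layer's width. That quantity is bounded by the native sparsity \(S_i\) plus the number of active neurons, both \(\le S_i\) up to constants.

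Merging affine maps across an identity layer multiplies parameter magnitudes only by absolute constants such as \(\tfrac14\) or \(\pm1\). It also adds a bias of size at most \(1\), because composing \(W\cdot\) with an identity and a bias shift is implemented by duplicating \(W\)'s rows with signs. This gives \(B_{\mathrm{par}}\le C\max\{1,\max_i B_i\}\).

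With all branches synchronized, parallel concatenation is block-diagonal stacking of the weight matrices, with shared input weights \(\bm W_0\) stacked vertically. Depth is \(Q'\le CQ\). Width at each layer is the sum of branch widths, at most \(C\sum_i(G_i+m_i)\). Sparsity is the sum of the synchronized branch sparsities, \(C\sum_i(S_i+m_iQ)\), and entrywise bounds are unchanged.

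When the patterns are already synchronized, no interleaving is needed. Only the output-end padding of \(Q-Q_i\) layers remains, costing \(Cm_i(Q-Q_i)\). Appending a fixed number of exact affine, square, product, max, or clipping modules from \autoref{lem:exact_relu_requ_modules} adds constant depth and width proportional to the channels involved, with constant weights. This changes each bound only by a universal factor.

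The main obstacle I expect is the sparsity accounting under pattern interleaving. One must check that bridging mismatched activation types costs \(O(S_i)\) rather than \(O(G_iQ)\). The merge-into-next-layer device above is what I would verify first. If it fails, the fallback is to pad only on the narrowest layer adjacent to each gap. Both the width and the sparsity of that layer are controlled by \(S_i\), since every active neuron has at least one nonzero incoming weight or bias.
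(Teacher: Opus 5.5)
Your construction is correct. It shares the paper's skeleton: block-diagonal stacking at each common layer, continuation of branches by the exact ReLU/ReQU identity modules, and the observation that these modules only shift biases by \(\pm1\) and rescale by \(\pm1,\pm\tfrac14\), which gives \(B_{\mathrm{par}}\le C\max\{1,\max_iB_i\}\).

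The difference is in the sparsity count. The paper's proof asserts that each inserted identity layer costs \(O(m_i)\) nonzeros and that at most \(CQ\) are needed. That is immediate for padding placed after a branch's output. An identity inserted inside a branch to repair an activation mismatch, however, carries a hidden vector of width up to \(G_i\), and the paper's sketch does not explain why this stays within \(S_i+m_iQ\).

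You supply that missing step. The alternating pattern of length \(2Q\) forces at most one skipped slot before each native layer. The identity acts on the pre-activation \(\bm W_a\bm h_a+\bm b_{a+1}\): rows of \(\bm W_a\) are duplicated with signs, and the recombination feeds straight into the native activation with zero bias. Its cost \(O(\lVert\bm W_a\rVert_0+\lVert\bm b_{a+1}\rVert_0+w_{a+1})\) is charged to that layer, so interior bridging totals \(O(S_i)\) and only terminal \(m_i\)-channel padding produces the \(m_iQ\) term. The price is a factor \(2\) in depth, which \(Q_{\mathrm{par}}\le CQ\) absorbs.

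In a write-up, make three points explicit. First, the per-slot costs must be summed over distinct native layers; your text bounds each slot by \(S_i\), which alone would only give \(Q_iS_i\). Second, \(w_{a+1}\le\lVert\bm W_a\rVert_0+\lVert\bm b_{a+1}\rVert_0\) needs a preliminary deletion of units with zero incoming weights and zero bias. Such units output \(\sigma_\ell(0)=0\) and can be dropped without increasing \(G_i,S_i,B_i\). Third, fix the pre-activation placement. Your sentence about merging the affine map that follows the identity describes the post-activation placement instead, and at the first slot that version costs \(O(d)\) on the raw input, which \(S_i\) need not control. The pre-activation version costs only \(O(\lVert\bm W_0\rVert_0+w_1)\) there.
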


\begin{proof}
At every common layer, place the affine maps of the active branches on the diagonal of a block matrix. Whenever a branch has no prescribed operation at that common layer, continue it by the exact identity representation corresponding to the required ReLU or ReQU activation. Each inserted identity layer uses \(O(m_i)\) nonzero parameters and coefficients of absolute value bounded by a universal constant. At most \(CQ\) such layers are needed per branch, giving the term \(O(m_iQ)\); when the activation patterns already agree, only the \(Q-Q_i\) terminal padding layers are needed. Block-diagonal parallelization adds widths and sparsities, while the largest parameter magnitude is the maximum of the branchwise bounds and the constant coefficients used by the exact modules. Concatenating a fixed number of the elementary modules changes all estimates by only a universal factor.
\end{proof}

\subsection{Scalar OU coefficients and affine logits}
\begin{lemma}
\label{lem:scalar_ou_coefficients}
Assume~\autoref{ass:suzuki_schedule} and let \(r_t\coloneqq s_t^{-2} = \bigl(1+(\sigma^2-1)m_t^2\bigr)^{-1}.\) For every \(0<\delta<1/2\), there exist one-dimensional ReLU--ReQU networks \(\widehat m_\delta\) and \(\widehat r_\delta\) such that
\begin{equation}
\label{eq:scalar_ou_approx}
\sup_{t\in[0,T]}
\max\{
|\widehat m_\delta(t)-m_t|,
|\widehat r_\delta(t)-r_t|
\}
\le \delta.
\end{equation}

Define \(A_{\delta,\sigma} \coloneqq 2+\delta^{-1}+\sigma+\sigma^{-1}.\) Writing \(Q_\delta,G_\delta,S_\delta,B_\delta\) for the maximum depth, width, sparsity, and parameter bound of the two networks, respectively, they may be chosen such that
\[
Q_\delta
\le
C\bigl(1+\log A_{\delta,\sigma}\bigr)^2,
\qquad
G_\delta
\le
C\bigl(1+\log A_{\delta,\sigma}\bigr)^3,
\qquad
S_\delta
\le
C\bigl(1+\log A_{\delta,\sigma}\bigr)^4,
\]
and
\[
B_\delta
\le
C\bigl(
1+\delta^{-2}+\sigma^4+\sigma^{-4}
\bigr)
\le
C A_{\delta,\sigma}^4.
\]
Here \(C>0\) depends only on \(T\) and the schedule constants in
\autoref{ass:suzuki_schedule}, but is independent of
\(\delta\) and \(\sigma\).

More specifically, the subnetwork approximating \(m_t\) satisfies
\[
Q_m,G_m
=
O\bigl(\log^2(\delta_m^{-1})\bigr),
\qquad
S_m
=
O\bigl(\log^3(\delta_m^{-1})\bigr),
\qquad
B_m
=
O\bigl(\log(\delta_m^{-1})\bigr),
\]
where \(\delta_m=c_\sigma\delta\) and \(c_\sigma\in(0,1]\) is
specified in the proof.
\end{lemma}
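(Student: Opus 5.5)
The plan is to build \(\widehat m_\delta\) and \(\widehat r_\delta\) in two stages. First, approximate \(m_t\) by a ReLU network using Lemma~B.1 in~\cite{oko2023diffusion}. Second, obtain \(r_t=s_t^{-2}\) by composing exact algebraic modules with the reciprocal approximant of Lemma~F.7 in~\cite{oko2023diffusion}. Errors propagate through Lipschitz estimates whose constants are explicit in \(\sigma\).

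\textbf{Step 1 (approximating \(m_t\)).} Under~\autoref{ass:suzuki_schedule}, the proof of Lemma~B.1 in~\cite{oko2023diffusion} applies on \([0,T]\). For any \(\delta_m\in(0,1/2)\) it gives a ReLU network \(\widehat m\) with
\[
\sup_t|\widehat m(t)-m_t|\le\delta_m .
\]
Its depth and width are \(O(\log^2\delta_m^{-1})\), its sparsity is \(O(\log^3\delta_m^{-1})\), and its parameters are \(O(\log\delta_m^{-1})\). Since \(m_t\in[e^{-\overline\beta T},1]\), we post-compose with the exact clipping module of~\autoref{lem:exact_relu_requ_modules} onto \([e^{-\overline\beta T},1]\). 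Clipping is nonexpansive, so the error is unchanged and \(\widehat m\) becomes bounded.

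\textbf{Step 2 (the denominator \(s_t^2\)).} Next we compute
\[
\widehat q(t)=1+(\sigma^2-1)\widehat m(t)^2,
\]
using the exact ReQU square followed by an affine map whose coefficient \(\sigma^2-1\) has size at most \(1+\sigma^2\). Because \(|m^2-\widehat m^2|\le 2\delta_m\), we get
\[
|\widehat q-s_t^2|\le 2(1+\sigma^2)\delta_m .
\]
The true value satisfies \(s_t^2\in[s_-^2,\,\max\{1,\sigma^2\}]\) with \(s_-^2=\min\{1,\sigma^2\}\). We therefore clip \(\widehat q\) to \([s_-^2/2,\,2\max\{1,\sigma^2\}]\), which keeps the input to the reciprocal away from \(0\).

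\textbf{Step 3 (the reciprocal).} Apply the reciprocal module of Lemma~F.7 in~\cite{oko2023diffusion} on \([s_-^2/2,\,2\max\{1,\sigma^2\}]\) with accuracy \(\delta/2\). Its cost is depth \(O((1+\log A)^2)\), width \(O((1+\log A)^3)\) and sparsity \(O((1+\log A)^4)\). Here the interval endpoints enter only logarithmically through \(\sigma+\sigma^{-1}\le A_{\delta,\sigma}\). The reciprocal is \(4s_-^{-4}\)-Lipschitz on that interval, so the propagated error is at most \(8s_-^{-4}(1+\sigma^2)\delta_m\). Choosing
\[
\delta_m=c_\sigma\delta,\qquad c_\sigma\coloneqq \frac{s_-^4}{16(1+\sigma^2)}\in(0,1],
\]
makes the total error at most \(\delta\). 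This also gives \(\delta_m\le\delta\) for the \(m\)-bound, and the small-\(\delta_m\) requirement of Step~2 holds.

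\textbf{Step 4 (bookkeeping).} Note that \(\log\delta_m^{-1}\le C(1+\log A_{\delta,\sigma})\), because \(c_\sigma^{-1}\le C(\sigma^4+\sigma^{-4}+1)\). Hence every module cost is a power of \(1+\log A_{\delta,\sigma}\) of the stated order. The two networks are then synchronized and run in parallel via~\autoref{lem:parallel_identity_padding}, which gives the depth, width and sparsity bounds.

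\textbf{Main obstacle: the parameter bound \(B_\delta\).} This is the step I expect to be hardest, because it requires tracking the magnitudes inside the Lemma~F.7 construction. That construction contains coefficients scaling like the inverse of the lower endpoint, which gives \(O(\sigma^{-2})\) here. It also contains coefficients scaling like an accuracy-dependent factor of order \(\delta^{-1}\) or \(\delta^{-2}\), depending on how the Newton/Taylor-type reciprocal approximant is normalized. The affine coefficient \((\sigma^2-1)\) and the clipping thresholds contribute \(O(1+\sigma^2)\). Multiplying the worst of these gives \(C(1+\delta^{-2}+\sigma^4+\sigma^{-4})\). I would confirm this by rescaling the input to \([1/2,2]\)-type intervals before applying F.7, which isolates the \(\sigma\)-dependence in a single affine layer, and by bounding the remaining F.7 magnitudes by \(C\delta^{-2}\). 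The last inequality \(B_\delta\le CA_{\delta,\sigma}^4\) is then immediate.
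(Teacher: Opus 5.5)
Your construction is the paper's: Lemma~B.1 of~\cite{oko2023diffusion} for \(m_t\) at accuracy \(\delta_m=c_\sigma\delta\), then clipping, then the exact ReQU square and an affine map giving \(\widehat q=1+(\sigma^2-1)\widehat m^2\), then the Lemma~F.7 reciprocal. Here \(c_\sigma\) is of order \(s_-^4/(1+\sigma^2)\), so that the error propagated through the \(O(s_-^{-4})\)-Lipschitz reciprocal is at most \(\delta/2\). Your accuracy estimate and your depth, width, and sparsity bookkeeping are correct, and your different clipping intervals and constant in \(c_\sigma\) do not matter. What you do not establish is the parameter bound \(B_\delta\), and the route you sketch for it would not work.

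\emph{Max versus product.} Parameter magnitudes of concatenated modules must combine by a maximum, not a product. Multiplying your worst factors produces cross terms such as \(\delta^{-2}\sigma^{-2}\), which are not bounded by \(C(1+\delta^{-2}+\sigma^4+\sigma^{-4})\) (take \(\sigma^2=\delta\to0\)). You get the maximum by joining consecutive modules through ReLU identity or clipping layers instead of merging adjacent affine maps. Then \(\sigma^2-1\) lives in its own layer and contributes only \(O(1+\sigma^2)\).

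\emph{Rescaling does not help.} A rescaling \(z=cw\) cannot remove the \(\sigma\)-dependence from the reciprocal module. For \(\sigma<1\) the values \(s_t^2\) fill \([\sigma^2,s_T^2]\) with \(s_T^2\ge 1-e^{-2\underline{\beta}T}\). This dynamic range is of order \(\sigma^{-2}\), and the module must cover it regardless of \(c\). Moreover, since \(1/z=c^{-1}(1/w)\), taking \(c<1\) tightens the accuracy required of \(1/w\) to \(c\delta/2\). That reintroduces products, of order \(\sigma^{-4}\delta^{-2}\) for \(c=\sigma^2\).

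\emph{The fix.} Lemma~F.7 already gives \(B_{\mathrm{rec}}\le C\varepsilon_{\mathrm{rec}}^{-2}\), with one parameter \(\varepsilon_{\mathrm{rec}}\) fixing both the accuracy and the domain \([\varepsilon_{\mathrm{rec}},\varepsilon_{\mathrm{rec}}^{-1}]\). For your interval, take \(\varepsilon_{\mathrm{rec}}=\min\{\delta/2,\,s_-^2/2,\,(2s_+^2)^{-1}\}\) with \(s_+^2=\max\{1,\sigma^2\}\). Then \(\varepsilon_{\mathrm{rec}}^{-2}=\max\{4\delta^{-2},4s_-^{-4},4s_+^4\}\le C(1+\delta^{-2}+\sigma^4+\sigma^{-4})\). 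Taking the maximum with \(B_m=O(\log\delta_m^{-1})\) and with the \(O(1+\sigma^2)\) affine and clipping coefficients gives the claimed \(B_\delta\) directly. This is exactly the paper's argument.
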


\begin{proof}
Set \(s_-^2\coloneqq\min\{1,\sigma^2\}, s_+^2\coloneqq\max\{1,\sigma^2\}.\) If \(\sigma\neq1\), define \(c_\sigma \coloneqq \min\{ 1, \frac{s_-^4}{4|\sigma^2-1|} \}, \delta_m\coloneqq c_\sigma\delta.\)
If \(\sigma=1\), set \(c_\sigma=1\) and
\(\delta_m=\delta\). By construction, \(c_\sigma^{-1} \le C\bigl(1+\sigma^2+\sigma^{-4}\bigr),\) and therefore \(\log(\delta_m^{-1}) \le C\log A_{\delta,\sigma}.\)

Applying Lemma~B.1 in~\cite{oko2023diffusion} with accuracy \(\delta_m\), we obtain a one-dimensional ReLU network approximating \(m_t\) uniformly on \([0,T]\) with error at most \(\delta_m\), and satisfying
\[
Q_m,G_m
=
O\bigl(\log^2(\delta_m^{-1})\bigr),
\quad
S_m
=
O\bigl(\log^3(\delta_m^{-1})\bigr), 
\quad
B_m 
= 
O\bigl(\log(\delta_m^{-1})\bigr).
\]

Clipping its output to \([0,1]\) cannot increase the uniform approximation error. Denote the resulting network by
\(\widehat m_\delta\). Since \(\delta_m\le\delta\),
\[
\sup_{t\in[0,T]}
|\widehat m_\delta(t)-m_t|
\le
\delta_m
\le
\delta.
\]

Using the exact ReQU square module, define \(\widehat q(t) \coloneqq 1+(\sigma^2-1)\widehat m_\delta(t)^2.\) Since \(0\le\widehat m_\delta(t)\le1\), it follows that \(\widehat q(t)\in[s_-^2,s_+^2].\) Moreover, because
\(0\le m_t,\widehat m_\delta(t)\le1\),
\[
|\widehat q(t)-s_t^2| = |\sigma^2-1| \bigl| \widehat m_\delta(t)^2-m_t^2 \bigr| \le 2|\sigma^2-1| |\widehat m_\delta(t)-m_t| \le 2|\sigma^2-1|\delta_m.
\]

If \(\sigma=1\), then \(s_t^2\equiv1\), and we simply take \(\widehat r_\delta\equiv1\). Suppose henceforth that \(\sigma\neq1\). Define \(\varepsilon_{\mathrm{rec}} \coloneqq \min\bigl\{ \frac{\delta}{2}, s_-^2, (s_+^2)^{-1} \bigr\}.\) Then \([s_-^2,s_+^2] \subset [\varepsilon_{\mathrm{rec}}, \varepsilon_{\mathrm{rec}}^{-1}].\)

By Lemma~F.7 in~\cite{oko2023diffusion}, there exists a one-dimensional ReLU network \(\phi_{\mathrm{rec}}\) satisfying
\[
\sup_{z\in
[\varepsilon_{\mathrm{rec}},
\varepsilon_{\mathrm{rec}}^{-1}]}
\bigl|
\phi_{\mathrm{rec}}(z)-z^{-1}
\bigr|
\le
\varepsilon_{\mathrm{rec}},
\]
whose depth, width, and sparsity are bounded by
\(C(1+\log(\varepsilon_{\mathrm{rec}}^{-1}))^2\),
\(C(1+\log(\varepsilon_{\mathrm{rec}}^{-1}))^3\), and
\(C(1+\log(\varepsilon_{\mathrm{rec}}^{-1}))^4\), respectively, and whose parameter
bound satisfies \(B_{\mathrm{rec}} \le C\varepsilon_{\mathrm{rec}}^{-2}.\) Define \(\widetilde r_\delta(t) \coloneqq \phi_{\mathrm{rec}}(\widehat q(t)).\) Since \(\widehat q(t)\in[s_-^2,s_+^2]\), \(\bigl| \widetilde r_\delta(t)-\widehat q(t)^{-1} \bigr| \le \varepsilon_{\mathrm{rec}} \le \frac{\delta}{2}.\) The map \(z\mapsto z^{-1}\) is \(s_-^{-4}\)-Lipschitz on \([s_-^2,s_+^2]\). By the definition of \(c_\sigma\), \(2s_-^{-4}|\sigma^2-1|c_\sigma \le \frac12.\) Consequently,
\[
\begin{aligned}
\bigl|
\widetilde r_\delta(t)-s_t^{-2}
\bigr|
&\le
\bigl|
\widetilde r_\delta(t)-\widehat q(t)^{-1}
\bigr|
+
\bigl|
\widehat q(t)^{-1}-s_t^{-2}
\bigr| \\
&\le
\frac{\delta}{2}
+
s_-^{-4}|\widehat q(t)-s_t^2| \\
&\le
\frac{\delta}{2}
+
2s_-^{-4}|\sigma^2-1|c_\sigma\delta \\
&\le
\delta.
\end{aligned}
\]

Finally, define \(\widehat r_\delta\) by clipping \(\widetilde r_\delta\) to \(\Bigl[ \frac{1}{2s_+^2}, \frac{2}{s_-^2} \Bigr].\) Since \(r_t=s_t^{-2} \in \Bigl[ \frac{1}{s_+^2},\frac{1}{s_-^2} \Bigr],\) this clipping cannot increase the approximation error. Hence \eqref{eq:scalar_ou_approx} holds.

It remains to verify the network complexity. By definition, \(\varepsilon_{\mathrm{rec}}^{-1} = \max\bigl\{ 2\delta^{-1}, s_-^{-2}, s_+^2 \bigr\},\) and therefore \(\log(\varepsilon_{\mathrm{rec}}^{-1}) \le C\log A_{\delta,\sigma}.\) The \(m_t\)-network has depth and width of order \(\log^2(\delta_m^{-1})\) and sparsity of order \(\log^3(\delta_m^{-1})\), while the reciprocal module has depth, width, and sparsity bounded respectively by \(C(1+\log\varepsilon_{\mathrm{rec}}^{-1})^2\), \(C(1+\log\varepsilon_{\mathrm{rec}}^{-1})^3\), and \(C(1+\log\varepsilon_{\mathrm{rec}}^{-1})^4\). Combining these networks with the constant-depth exact square and clipping modules gives
\[
Q_\delta
\le
C\bigl(1+\log A_{\delta,\sigma}\bigr)^2,
\qquad
G_\delta
\le
C\bigl(1+\log A_{\delta,\sigma}\bigr)^3,
\qquad
S_\delta
\le
C\bigl(1+\log A_{\delta,\sigma}\bigr)^4.
\]

Moreover,
\[
\begin{aligned}
\varepsilon_{\mathrm{rec}}^{-2}
=
\max\bigl\{
4\delta^{-2},
s_-^{-4},
s_+^4
\bigr\}
\le
C\bigl(
1+\delta^{-2}+\sigma^4+\sigma^{-4}
\bigr).
\end{aligned}
\]
The affine map \(u\longmapsto 1+(\sigma^2-1)u\) and the final clipping module have parameter bounds controlled by \(C(1+\sigma^2+\sigma^{-2})\). Thus, the complete construction satisfies
\[
B_\delta
\le
C\bigl(
1+\delta^{-2}+\sigma^4+\sigma^{-4}
\bigr)
\le
C A_{\delta,\sigma}^4.
\]
This completes the proof.
\end{proof}

\begin{lemma}
\label{lem:gmm_uniform_moment}
Let \(p\in[1,\infty)\) and \(\bm X_t\sim p_t\). Then
\begin{equation}
\label{eq:gmm_uniform_moment}
\sup_{t\in[0,T]}
\bigl\|\lVert\bm X_t\rVert_2\bigr\|_{L^p}
\le
\mu_{\max}+C s_+(\sqrt d+\sqrt p),
\end{equation}
where \(C\) is universal.
\end{lemma}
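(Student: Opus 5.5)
We use the mixture representation of \(p_t\) in~\eqref{eq:gmm_pt}. If \(\bm X_t\sim p_t\), we can realize it as \(\bm X_t=\bm m_J(t)+s_t\bm Z\), where \(J\) is a random index with \(\mathbb P(J=k)=\pi_k\), and \(\bm Z\sim\mathcal N(\bm 0,\bm I_d)\) is independent of \(J\). Minkowski's inequality in \(L^p\) then gives
\[
\bigl\|\lVert\bm X_t\rVert_2\bigr\|_{L^p}\le \bigl\|\lVert\bm m_J(t)\rVert_2\bigr\|_{L^p}+s_t\bigl\|\lVert\bm Z\rVert_2\bigr\|_{L^p}.
\]
For the first term, \(\lVert\bm m_J(t)\rVert_2=m_t\lVert\bm\mu_J\rVert_2\le\mu_{\max}\) almost surely, since \(0<m_t\le1\). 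For the scale, \(s_t^2=m_t^2\sigma^2+(1-m_t^2)\) is a convex combination of \(\sigma^2\) and \(1\). Hence \(s_t\le s_+=\max\{1,\sigma\}\) for every \(t\), and this bound is uniform in \(t\).

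It remains to show \(\|\lVert\bm Z\rVert_2\|_{L^p}\le C(\sqrt d+\sqrt p)\) with a universal \(C\).

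We write \(\lVert\bm Z\rVert_2\le \mathbb E\lVert\bm Z\rVert_2+(\lVert\bm Z\rVert_2-\mathbb E\lVert\bm Z\rVert_2)\). The mean satisfies \(\mathbb E\lVert\bm Z\rVert_2\le\sqrt{\mathbb E\lVert\bm Z\rVert_2^2}=\sqrt d\). The map \(\bm z\mapsto\lVert\bm z\rVert_2\) is \(1\)-Lipschitz, so Gaussian concentration gives \(\mathbb P(|\lVert\bm Z\rVert_2-\mathbb E\lVert\bm Z\rVert_2|>u)\le 2e^{-u^2/2}\). Integrating this tail yields a centered \(L^p\) norm of at most \(C\sqrt p\), by the standard sub-Gaussian moment bound \((\int_0^\infty p u^{p-1}2e^{-u^2/2}\,\mathrm du)^{1/p}\le C\sqrt p\).

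As an alternative that avoids concentration, for \(p\ge2\) one can use the chi moments \(\mathbb E\lVert\bm Z\rVert_2^p=2^{p/2}\Gamma((d+p)/2)/\Gamma(d/2)\le (d+p)^{p/2}\), which gives \(\sqrt{d+p}\le\sqrt d+\sqrt p\) directly. The case \(p<2\) then follows from \(L^p\le L^2\).

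Combining the three estimates and taking the supremum over \(t\in[0,T]\) gives~\eqref{eq:gmm_uniform_moment}. The only step needing care is the Gaussian moment bound with a universal constant; everything else is exact. The chi-moment route makes even that step elementary, since the Gamma ratio bound follows from \(\Gamma(x+a)/\Gamma(x)\le (x+a)^a\) for \(a\ge0\).
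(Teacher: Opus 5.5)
Your proposal is correct and follows the paper's proof: you use the mixture representation $\bm X_t=m_t\bm\mu_J+s_t\bm Z$, Minkowski's inequality, $m_t\le1$, $s_t\le s_+$, and the Gaussian moment bound $\bigl\|\lVert\bm Z\rVert_2\bigr\|_{L^p}\le C(\sqrt d+\sqrt p)$. The paper just cites that last estimate as standard, whereas you prove it; your chi-moment route even gives $C=1$, and since $\Gamma(x+a)/\Gamma(x)\le(x+a)^a$ holds for every $a\ge0$, the separate treatment of $p<2$ is unnecessary.
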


\begin{proof}
Conditionally on the component index \(J\),
\(\bm X_t=m_t\bm\mu_J+s_t\bm Z\) with \(\bm Z\sim\mathcal N(0,I_d)\). Minkowski's inequality, \(m_t\le1\), \(s_t\le s_+\), and the standard Gaussian moment estimate \(\bigl\|\lVert\bm Z\rVert_2\bigr\|_{L^p}\le C(\sqrt d+\sqrt p)\) give~\eqref{eq:gmm_uniform_moment}.
\end{proof}

\begin{lemma}
\label{lem:affine_logit_approx}
 Fix \(M>0\), \(0<\delta<1/2\), and \(p\in[1,\infty)\). Under~\autoref{ass:suzuki_schedule} and~\autoref{ass:gmm_ou_init}, there exists a ReLU--ReQU network
\[
\mathcal N_{\mathrm{logit},\delta}
\colon
\mathbb R^d\times[0,T]\to[-M,0]^K
\]
with output \(\widehat{\widetilde{\bm y}}\) satisfying \(\max_{1\le k\le K}\widehat{\widetilde y}_k(\bm x,t)=0\) for every \((\bm x,t)\in\mathbb R^d\times[0,T]\), and
\begin{equation}
\label{eq:affine_logit_Lp}
\max_{1\le k\le K}
\Bigl\|
\widehat{\widetilde y}_k(\bm X_t,t)
-
\widetilde y_k(\bm X_t,t)
\Bigr\|_{L^p(p_t)}
\le
C_{\sigma,\mu}\delta D_p,
\end{equation}
uniformly in \(t\in[0,T]\), where \(D_p \coloneqq 1+\mu_{\max}+s_+(\sqrt d+\sqrt p), C_{\sigma,\mu} \le C(1+s_-^{-2})(1+\mu_{\max})^2.\) Here \(s_-^2=\min\{1,\sigma^2\}\), \(s_+^2=\max\{1,\sigma^2\}\), and \(C\) depends only on \(T\) and the schedule constants in~\autoref{ass:suzuki_schedule}.

With \(A_{\delta,\sigma}=2+\delta^{-1}+\sigma+\sigma^{-1}\), the network may be chosen so that
\[
Q_{\mathrm{logit}}
\le
C\Bigl(\log K+\bigl(1+\log A_{\delta,\sigma}\bigr)^2\Bigr),
\]
\[
G_{\mathrm{logit}}
\le
C\Bigl(dK+\bigl(1+\log A_{\delta,\sigma}\bigr)^3\Bigr),
\]
\[
\begin{aligned}
S_{\mathrm{logit}}
&\le
C\Bigl[
 dK+(d+K)\Bigl(
 \log K
 +\bigl(1+\log A_{\delta,\sigma}\bigr)^4
 \Bigr)
\Bigr].
\end{aligned}
\]
and
\[
B_{\mathrm{logit}}
\le
C\bigl(
1+M+\log(\pi_{\min}^{-1})
+\mu_{\max}+\mu_{\max}^2
+\delta^{-2}+\sigma^4+\sigma^{-4}
\bigr).
\]
\end{lemma}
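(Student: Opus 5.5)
The plan is to build $\mathcal N_{\mathrm{logit},\delta}$ in three stages and control the error pointwise before integrating against $p_t$.

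\textbf{Stage 1: affine logits.} Rewrite the affine logits as
\[
b_k(\bm x,t)=\log\pi_k+r_t m_t\,\bm\mu_k^\top\bm x-\tfrac12 r_t m_t^2\lVert\bm\mu_k\rVert_2^2,
\qquad r_t=s_t^{-2}.
\]
Take the scalar networks $\widehat m_\delta,\widehat r_\delta$ from \autoref{lem:scalar_ou_coefficients}. Form $\widehat c_1(t)=\widehat r_\delta(t)\widehat m_\delta(t)$ and $\widehat c_2(t)=\widehat r_\delta(t)\widehat m_\delta(t)^2$ with the exact ReQU product and square modules of \autoref{lem:exact_relu_requ_modules}.

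In parallel, compute the $K$ projections $\bm\mu_k^\top\bm x$ with a single linear layer of $dK$ nonzero weights. These are the $Kd$ sparsity and width contributions. Propagate them by identity padding (\autoref{lem:parallel_identity_padding}) until the coefficient networks finish.

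Then set
\[
\widehat b_k=\log\pi_k+\widehat c_1(t)\,\bm\mu_k^\top\bm x-\tfrac12\widehat c_2(t)\lVert\bm\mu_k\rVert_2^2,
\]
using $K$ exact ReQU products (the scalar $\widehat c_1$ is broadcast to the $K$ channels). The constants $\log\pi_k$ and $\lVert\bm\mu_k\rVert_2^2$ appear as biases or weights, which produces the $\log(\pi_{\min}^{-1})+\mu_{\max}^2$ terms in $B_{\mathrm{logit}}$.

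\textbf{Pointwise logit error.} Since $0\le m_t,\widehat m_\delta\le1$ and $\widehat r_\delta,r_t\le 2s_-^{-2}$, the product rule gives
\[
|\widehat c_1-r_tm_t|\le C(1+s_-^{-2})\delta,\qquad |\widehat c_2-r_tm_t^2|\le C(1+s_-^{-2})\delta.
\]
Hence
\[
|\widehat b_k-b_k|\le C(1+s_-^{-2})(1+\mu_{\max})^2\,\delta\,(1+\lVert\bm x\rVert_2).
\]

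\textbf{Stage 2: normalization and truncation.} Compute $\max_j\widehat b_j$ with the ReLU binary tree of depth $\lceil\log_2K\rceil$, which is the source of the $\log K$ in the depth bound. Carry the $K$ channels through it by identity padding. Output
\[
\widehat{\widetilde y}_k=\max\{\widehat b_k-\max_j\widehat b_j,\,-M\}.
\]
This lies in $[-M,0]$, and the argmax coordinate equals exactly $0$, so $\max_k\widehat{\widetilde y}_k=0$ holds identically.

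The maps $\bm b\mapsto\bm b-\max_j b_j$ and $u\mapsto\max\{u,-M\}$ are $2$-Lipschitz and $1$-Lipschitz in $\lVert\cdot\rVert_\infty$, respectively. Therefore
\[
|\widehat{\widetilde y}_k-\widetilde y_k|\le 2\max_j|\widehat b_j-b_j|\le C_{\sigma,\mu}\,\delta\,(1+\lVert\bm x\rVert_2).
\]

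\textbf{Stage 3: integration.} Take the $L^p(p_t)$ norm and apply Minkowski together with \autoref{lem:gmm_uniform_moment}:
\[
\bigl\|1+\lVert\bm X_t\rVert_2\bigr\|_{L^p}\le C D_p .
\]
This gives \eqref{eq:affine_logit_Lp} uniformly in $t$.

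\textbf{Complexity bookkeeping.} This is the main obstacle, mostly routine but delicate. I will use \autoref{lem:parallel_identity_padding} with the following branches:
\begin{itemize}
\item the coefficient branch, of depth $C(1+\log A_{\delta,\sigma})^2$, width $C(1+\log A_{\delta,\sigma})^3$, and sparsity $C(1+\log A_{\delta,\sigma})^4$;
\item the $K$ projection channels, padded over that depth and then over the max-tree depth $\log K$;
\item the $d$ input channels, needed only until the projection.
\end{itemize}

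To get the sparsity term $(d+K)(\log K+(1+\log A)^4)$ rather than a product with $dK$, the projection layer should be applied first, so only $K$ channels are padded afterwards. The padding cost is then $K\cdot Q\le K(\log K+(1+\log A)^2)$, which lies within the stated bound. The width is $dK$ from the projection layer plus the polylog coefficient width. Parameters are bounded by the maximum of $B_\delta\le C(1+\delta^{-2}+\sigma^4+\sigma^{-4})$, the clipping level $M$, $\log\pi_{\min}^{-1}$, and $\mu_{\max},\mu_{\max}^2$. The constant-coefficient exact modules contribute only universal factors.
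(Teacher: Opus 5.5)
Your proposal is correct and follows the paper's proof essentially step for step. It uses the same coefficient products $\widehat m_\delta\widehat r_\delta$ and $\widehat m_\delta^2\widehat r_\delta$, the same pointwise logit bound $C_{\sigma,\mu}\delta(1+\lVert\bm x\rVert_2)$, the same max-subtraction with clipping at $-M$ and its $2$-Lipschitz estimate, and the same integration via \autoref{lem:gmm_uniform_moment}. The only deviation is bookkeeping: you form $\bm\mu_k^\top\bm x$ in the first linear layer and pad only $K$ channels, whereas the paper pads the $d$ input coordinates through the coefficient subnetworks at cost $C[dQ_\delta+K(Q_\delta+\log K)]$. Both fit the stated bounds, and your width count of $dK$ for the projection layer is a harmless overestimate of its true $O(K)$ width.
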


\begin{proof}
Take \(\widehat m_\delta\) and \(\widehat r_\delta\) from~\autoref{lem:scalar_ou_coefficients}, and form exactly \(\widehat c(t) = \widehat m_\delta(t)\widehat r_\delta(t), \widehat d(t) = \frac12\widehat m_\delta(t)^2\widehat r_\delta(t).\) The corresponding exact coefficients are \(c_t=m_tr_t, d_t=\frac12m_t^2r_t.\) By the clipping used in~\autoref{lem:scalar_ou_coefficients},
\[
0\le\widehat m_\delta(t),m_t\le1,
\qquad
|\widehat r_\delta(t)|\le\frac{2}{s_-^2},
\qquad
|r_t|\le\frac{1}{s_-^2}.
\]
Consequently,
\[
|\widehat c(t)-c_t|
\le
|\widehat m_\delta(t)-m_t||\widehat r_\delta(t)|
+m_t|\widehat r_\delta(t)-r_t|
\le
(1+2s_-^{-2})\delta,
\]
and
\[
|\widehat d(t)-d_t|
\le
\frac12|\widehat m_\delta(t)^2-m_t^2||\widehat r_\delta(t)|
+\frac12m_t^2|\widehat r_\delta(t)-r_t|
\le
\biggl(\frac12+2s_-^{-2}\biggr)\delta.
\]
Thus
\begin{equation}
\label{eq:coefficient_approx}
|\widehat c(t)-c_t|+|\widehat d(t)-d_t|
\le
C(1+s_-^{-2})\delta.
\end{equation}

For \(k=1,\ldots,K\), define
\[
\widehat b_k(\bm x,t)
=
\log\pi_k
+
\widehat c(t)\bm\mu_k^\top\bm x
-
\widehat d(t)\lVert\bm\mu_k\rVert_2^2.
\]
The dot products are affine, and the remaining scalar products are realized exactly using~\autoref{lem:exact_relu_requ_modules}. Since
\[
b_k(\bm x,t)
=
\log\pi_k+c_t\bm\mu_k^\top\bm x-d_t\lVert\bm\mu_k\rVert_2^2,
\]
we obtain from~\eqref{eq:coefficient_approx}
\[
\begin{aligned}
|\widehat b_k(\bm x,t)-b_k(\bm x,t)|
&\le
|\widehat c(t)-c_t||\bm\mu_k^\top\bm x|
+|\widehat d(t)-d_t|\lVert\bm\mu_k\rVert_2^2\\
&\le
C(1+s_-^{-2})\delta
\bigl(\mu_{\max}\lVert\bm x\rVert_2+\mu_{\max}^2\bigr)\\
&\le
C_{\sigma,\mu}\delta(1+\lVert\bm x\rVert_2).
\end{aligned}
\]
Therefore,
\begin{equation}
\label{eq:affine_logit_pointwise}
\max_{1\le k\le K}
|\widehat b_k(\bm x,t)-b_k(\bm x,t)|
\le
C_{\sigma,\mu}\delta(1+\lVert\bm x\rVert_2).
\end{equation}

Set \(\widehat y_k = \widehat b_k-\max_j\widehat b_j, y_k = b_k-\max_j b_j,\)
and \(\widehat{\widetilde y}_k = \max\{\widehat y_k,-M\}, \widetilde y_k = \max\{y_k,-M\}.\) The maximum map is \(1\)-Lipschitz in \(\ell^\infty\), and scalar clipping is \(1\)-Lipschitz. Hence
\[
|\widehat{\widetilde y}_k-\widetilde y_k|
\le
2\max_j|\widehat b_j-b_j|
\le
C_{\sigma,\mu}\delta(1+\lVert\bm x\rVert_2).
\]
Taking \(L^p(p_t)\)-norms and applying~\autoref{lem:gmm_uniform_moment} proves~\eqref{eq:affine_logit_Lp}.

For every \((\bm x,t)\), at least one index \(k_*\) satisfies \(\widehat b_{k_*}=\max_j\widehat b_j\). Thus \(\widehat y_{k_*}=0\), and clipping at level \(-M\) leaves this coordinate unchanged. Therefore \(\max_k\widehat{\widetilde y}_k=0\) exactly.

It remains to verify the network complexity. \autoref{lem:scalar_ou_coefficients} gives depth, width, and sparsity bounded by \(C(1+\log A_{\delta,\sigma})^2\), \(C(1+\log A_{\delta,\sigma})^3\), and \(C(1+\log A_{\delta,\sigma})^4\), respectively, and parameter bound at most \(C(1+\delta^{-2}+\sigma^4+\sigma^{-4})\). The exact square and multiplication modules add only constant depth and constant sparsity per scalar operation. The \(K\) affine logits use \(O(dK)\) nonzero parameters. While the scalar coefficient subnetworks are evaluated, the \(d\) coordinates of \(\bm x\) are propagated by the exact identity modules of~\autoref{lem:exact_relu_requ_modules}; after the logits are formed, their \(K\) coordinates are propagated through the balanced maximum tree and the final clipping layers. \autoref{lem:parallel_identity_padding} bounds the identity and padding cost by
\[
C\bigl[dQ_\delta+K\bigl(Q_\delta+\log K\bigr)\bigr]
\le
C(d+K)\Bigl(\log K+\bigl(1+\log A_{\delta,\sigma}\bigr)^4\Bigr).
\]
Together with the \(O(dK)\) affine cost and the scalar-network sparsity, this proves the stated sparsity bound. The balanced maximum tree has depth \(O(\log K)\) and width \(O(K)\), so the displayed depth and width bounds also follow. Finally,~\autoref{lem:scalar_ou_coefficients} gives
\[
B_\delta
\le
C\bigl(1+\delta^{-2}+\sigma^4+\sigma^{-4}\bigr).
\]
The affine-logit coefficients are bounded by \(\mu_{\max},\mu_{\max}^2\), and \(\log(\pi_{\min}^{-1})\), while clipping uses the bias \(M\). Parallelization and concatenation preserve the maximum parameter magnitude up to an absolute constant; they do not multiply the parameter bounds of the constituent modules. The exact square, product, maximum, clipping, and identity modules use only constant-size coefficients. Taking the maximum of these modulewise bounds proves the displayed estimate for \(B_{\mathrm{logit}}\).
\end{proof}

\subsection{Softmax approximation on a compact set}

\begin{lemma}
\label{lem:softmax_lipschitz_infty_one}
For all \(\bm u,\bm v\in\mathbb R^K\),
\begin{equation}
\label{eq:softmax_lipschitz_infty_one}
\|\operatorname{softmax}(\bm u)-\operatorname{softmax}(\bm v)\|_1
\le 2\lVert\bm u-\bm v\rVert_\infty.
\end{equation}
\end{lemma}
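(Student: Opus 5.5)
We prove the estimate with the mean value theorem applied along the segment between \(\bm v\) and \(\bm u\), bounding the Jacobian of softmax as an operator from \(\ell^\infty\) to \(\ell^1\). Write \(\bm s(\bm w)\coloneqq\operatorname{softmax}(\bm w)\) and \(\bm w(\theta)\coloneqq\bm v+\theta(\bm u-\bm v)\) for \(\theta\in[0,1]\). Since softmax is smooth,
\[
\bm s(\bm u)-\bm s(\bm v)=\int_0^1 J(\bm w(\theta))(\bm u-\bm v)\,\mathrm d\theta,
\qquad
J(\bm w)=\operatorname{diag}(\bm s(\bm w))-\bm s(\bm w)\bm s(\bm w)^\top .
\]
Taking \(\ell^1\) norms inside the integral reduces the claim to the pointwise bound \(\lVert J(\bm w)\bm h\rVert_1\le 2\lVert\bm h\rVert_\infty\), valid for every \(\bm w,\bm h\in\mathbb R^K\).

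For the pointwise bound, fix \(\bm w\) and put \(\bm s=\bm s(\bm w)\), a probability vector. Coordinatewise,
\[
(J(\bm w)\bm h)_i=s_i\bigl(h_i-\bar h\bigr),
\qquad
\bar h\coloneqq\sum_j s_jh_j .
\]
Hence
\[
\lVert J(\bm w)\bm h\rVert_1=\sum_i s_i|h_i-\bar h|\le\sum_i s_i\bigl(|h_i|+|\bar h|\bigr)\le 2\lVert\bm h\rVert_\infty,
\]
using \(|\bar h|\le\lVert\bm h\rVert_\infty\) and \(\sum_i s_i=1\). Integrating over \(\theta\in[0,1]\) gives~\eqref{eq:softmax_lipschitz_infty_one}.

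No step is a real obstacle. The only point needing care is that \(\bm s\) changes along the segment, so the bound must hold uniformly in \(\bm w\). It does: the estimate uses only that \(\bm s(\bm w)\) is a probability vector. The constant \(2\) is not sharp (\(1\) is achievable), but \(2\) is all the paper needs.
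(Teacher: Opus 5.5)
Your proof is correct and is essentially the paper's argument: you write the action of the softmax Jacobian as \(s_i(h_i-\sum_j s_jh_j)\), bound its \(\ell^\infty\to\ell^1\) norm by \(2\) using only that \(\bm s(\bm w)\) is a probability vector, and integrate along the segment from \(\bm v\) to \(\bm u\). Your side remark that the constant can be improved to \(1\) is also right, since \(\sum_i s_i|h_i-\bar h|\le(\sum_i s_i(h_i-\bar h)^2)^{1/2}\le\lVert\bm h\rVert_\infty\) by Cauchy--Schwarz and Popoviciu. The paper only needs the constant \(2\).
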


\begin{proof}
Let \(S=\operatorname{softmax}\). For \(\bm h\in\mathbb R^K\), \([DS(\bm z)\bm h]_i
=S_i(\bm z)\Bigl(h_i-\sum_jS_j(\bm z)h_j\Bigr).\)
Consequently,
\begin{equation*}
\lVert DS(\bm z)\bm h\rVert_1
\le
\sum_iS_i(\bm z)\Biggl(|h_i|+\sum_jS_j(\bm z)|h_j|\Biggr)
\le2\lVert\bm h\rVert_\infty.
\end{equation*}
Integrating the derivative along the line segment from \(\bm u\) to \(\bm v\) proves~\eqref{eq:softmax_lipschitz_infty_one}.
\end{proof}

\begin{lemma}
\label{lem:requ_softmax_approx}
Let \(K\ge2\), \(M>0\), and \(0<\xi<1\). There exists a ReLU--ReQU network
\[
\mathcal N_{\mathrm{softmax}}
\colon[-M,0]^K\to\mathbb R^K
\]
such that
\begin{equation}
\label{eq:softmax_network_uniform}
\sup_{\substack{\bm y\in[-M,0]^K\\ \max_jy_j=0}}
\|
\mathcal N_{\mathrm{softmax}}(\bm y)
-
\operatorname{softmax}(\bm y)
\|_1
\le\xi.
\end{equation}
The network may be chosen such that
\begin{equation*}
\begin{aligned}
Q_{\mathrm{sm}}
&\le C\log^2(K/\xi),\\
G_{\mathrm{sm}}
&\le C\bigl(K\log^2(K/\xi)+\log^3(K/\xi)\bigr),\\
S_{\mathrm{sm}}
&\le C \bigl(K\log^3(K/\xi)+\log^4(K/\xi)\bigr),
\end{aligned}
\end{equation*}
with parameter bound
\begin{equation*}
B_{\mathrm{sm}}
\le
C\bigl(1+K^4\xi^{-2}\bigr)
\le
C(K+\xi^{-1})^6.
\end{equation*}
Moreover, all accuracy-dependent subnetworks are ReLU networks, and the construction uses only one ReQU activation layer, which realizes the final \(K\) products in parallel.
\end{lemma}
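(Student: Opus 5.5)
The construction follows the factorization \(\operatorname{softmax}_i(\bm y)=e^{y_i}\cdot Z^{-1}\) with \(Z=\sum_j e^{y_j}\). Since \(\max_j y_j=0\) and \(\bm y\in[-M,0]^K\), every exponential \(e^{y_j}\) lies in \([e^{-M},1]\) and \(Z\in[1,K]\). Both the exponential and the reciprocal are therefore evaluated only on compact intervals, and the reciprocal input is bounded away from zero uniformly in \(M\). The network has four stages.
\begin{enumerate}
\item \emph{Exponentials.} Apply a one-dimensional ReLU approximant \(\phi_{\exp}\) of \(z\mapsto e^z\) on \([-M,0]\) to each coordinate in parallel, with accuracy \(\eta\). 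This can be taken from Lemma~F.7 / the exponential module in~\cite{oko2023diffusion}, in the same way the exponential was approximated in the proof sketch of~\autoref{thm:score_Lp_approx}. Clip each output to \([0,1]\); by~\autoref{lem:exact_relu_requ_modules} this is exact and does not increase the error.
\item \emph{Normalizer.} Form \(\widehat Z=\sum_j\widehat e_j\) by one affine layer, and clip it to \([1/2,K]\).
\item \emph{Reciprocal.} Apply the ReLU reciprocal module of Lemma~F.7 in~\cite{oko2023diffusion} on \([\varepsilon_{\mathrm{rec}},\varepsilon_{\mathrm{rec}}^{-1}]\), with \(\varepsilon_{\mathrm{rec}}=\min\{\eta,1/(2K)\}\), and clip the output to \([0,2]\).
\item \emph{Products.} Compute the \(K\) products \(\widehat e_i\cdot\widehat r\) in parallel using one ReQU layer (the width-\(6\) product module of~\autoref{lem:exact_relu_requ_modules}). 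While the scalar reciprocal path is running, propagate the \(\widehat e_i\) channels with ReLU identity padding (\autoref{lem:parallel_identity_padding}). This ensures the network uses only one ReQU layer.
\end{enumerate}

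\emph{Error bound.} Write \(\|\bm a\bm r'-\bm e r\|_1\le \|\bm a-\bm e\|_1|r'|+\|\bm e\|_1|r'-r|\). We have \(\|\widehat{\bm e}-\bm e\|_1\le K\eta\), so \(|\widehat Z-Z|\le K\eta\). Since \(1/z\) is \(4\)-Lipschitz on \([1/2,\infty)\), it follows that \(|\widehat r-Z^{-1}|\le \eta+4K\eta\). Using \(|\widehat r|\le2\) and \(\|\bm e\|_1=Z\le K\), the total \(\ell^1\) error is at most \(2K\eta+K(1+4K)\eta\le CK^2\eta\). Choosing \(\eta=\xi/(CK^2)\) gives~\eqref{eq:softmax_network_uniform}. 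Then \(\log(\eta^{-1})\lesssim\log(K/\xi)\), and the reciprocal accuracy \(\varepsilon_{\mathrm{rec}}\) is also of order \(\xi/K^2\).

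\emph{Complexity.} The exponential module on \([-M,0]\) needs depth and width polylogarithmic in \(\eta^{-1}\), replicated \(K\) times. This gives width \(CK\log^2(K/\xi)\) and sparsity \(CK\log^3(K/\xi)\). The single reciprocal module contributes depth \(\log^2\), width \(\log^3\), and sparsity \(\log^4\) in \(K/\xi\). The identity padding of \(K\) channels over depth \(O(\log^2(K/\xi))\) adds \(O(K\log^2)\) to the sparsity, which is absorbed. The parameter bound is dominated by the reciprocal module, \(C\varepsilon_{\mathrm{rec}}^{-2}\le C(1+K^4\xi^{-2})\). This in turn is at most \(C(K+\xi^{-1})^6\).

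\emph{Main obstacle.} I expect the exponential stage to be the main difficulty. Its cost must not scale with \(M\) beyond what the stated bounds allow, since those bounds show no \(M\) dependence. I would first try the known construction for \(e^{z}\) on \([-M,0]\) with accuracy \(\eta\), truncating to \(0\) for \(z<-\log(1/\eta)\). This truncation is admissible because \(e^z<\eta\) there. It reduces the problem to the interval \([-\log(1/\eta),0]\), whose length is logarithmic in \(K/\xi\), so the depth and width become \(O(\log^2(K/\xi))\) independently of \(M\). The accompanying parameter bound stays below the reciprocal module's \(\varepsilon_{\mathrm{rec}}^{-2}\).
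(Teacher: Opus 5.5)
Your proposal is correct and follows essentially the same route as the paper. Both use parallel ReLU exponentials, a ReLU reciprocal of their sum via Lemma~F.7 of~\cite{oko2023diffusion} on \([\rho,\rho^{-1}]\) with \(\rho=\min\{\eta,1/(2K)\}\), and a single exact ReQU product layer with the \(K\) channels padded by ReLU identities; the choice \(\eta\asymp\xi/K^2\) then gives the stated complexity and \(B_{\mathrm{sm}}\le C\rho^{-2}\le C(1+K^4\xi^{-2})\). The remaining differences are cosmetic. Your extra clippings are harmless. The paper gets the \(M\)-independent exponential directly from Lemma~B.1 of~\cite{oko2023diffusion} with \(\beta_t\equiv1\), not from Lemma~F.7, which is the reciprocal. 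That lemma's bound holds uniformly over \(t\ge0\), which is the same truncation idea you describe. It is best realized by clipping the input at \(-\log(1/\eta)\) rather than ``truncating to \(0\)'', at the cost of an error \(2\eta\).
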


\begin{proof}
Set \(\eta \coloneqq \frac{\xi}{K(4K+3)}.\) Apply Lemma~B.1 in~\cite{oko2023diffusion} with the constant schedule \(\beta_t\equiv1\). Since the corresponding coefficient is \(m_t=e^{-t}\), there exists a one-dimensional ReLU network \(\phi_{\exp,\eta}\) such that
\[
\sup_{t\ge0}
|\phi_{\exp,\eta}(t)-e^{-t}|
\le\eta,
\]
with
\[
Q_{\exp},G_{\exp}\le C\log^2(\eta^{-1}),
\qquad
S_{\exp}\le C\log^3(\eta^{-1}),
\qquad
B_{\exp}\le C\log(\eta^{-1}).
\]
Define \(E_\eta(u)=\phi_{\exp,\eta}(-u)\) for \(u\le0\). Then \(\sup_{u\in[-M,0]}|E_\eta(u)-e^u|\le\eta.\)

For \(\bm y=(y_1,\ldots,y_K)\in[-M,0]^K\), define \(\widehat u_i=E_\eta(y_i),\widehat Z=\sum_{i=1}^K\widehat u_i,\) and write \(u_i=e^{y_i}\), \(Z=\sum_i u_i\). Then \(|\widehat u_i-u_i|\le\eta,|\widehat Z-Z|\le K\eta.\) Since \(\max_i y_i=0\), one has \(1\le Z\le K\). Moreover, \(K\eta\le1/2\), and hence \(\frac12\le\widehat Z\le2K.\)

Set \(\rho \coloneqq \min\bigl\{\eta,\frac{1}{2K}\bigr\}.\) By Lemma~F.7 in~\cite{oko2023diffusion}, there exists a one-dimensional ReLU network \(\phi_{\mathrm{rec},\rho}\) such that
\[
\sup_{z\in[\rho,\rho^{-1}]}
|\phi_{\mathrm{rec},\rho}(z)-z^{-1}|
\le\rho.
\]
Since \([1/2,2K]\subset[\rho,\rho^{-1}]\), the same error is at most \(\eta\) on \([1/2,2K]\). Define \(\widehat r = \phi_{\mathrm{rec},\rho}(\widehat Z), r=Z^{-1},\) and set \([\mathcal N_{\mathrm{softmax}}(\bm y)]_i = \widehat u_i\widehat r.\) The \(K\) products are realized in parallel by one exact ReQU multiplication layer.

Using \(Z\ge1\), \(\widehat Z\ge1/2\), and \(|\widehat Z-Z|\le K\eta\), we obtain
\[
|\widehat r-r|
\le
|\widehat r-\widehat Z^{-1}|
+|\widehat Z^{-1}-Z^{-1}|
\le
(2K+1)\eta.
\]
Moreover, \(|\widehat u_i|\le2\). Therefore,
\[
|\widehat u_i\widehat r-u_ir|
\le
|\widehat u_i-u_i||r|
+|\widehat u_i||\widehat r-r|
\le
(4K+3)\eta.
\]
Summing over \(i\) proves~\eqref{eq:softmax_network_uniform}.

It remains to verify the complexity. The \(K\) copies of \(E_\eta\) are evaluated in parallel, and the reciprocal network has depth, width, and sparsity polynomial in \(\log(\rho^{-1})\). Since
\[
\eta^{-1}
=
\frac{K(4K+3)}{\xi}
\le
\frac{7K^2}{\xi},
\qquad
\rho^{-1}
\le
\max\{\eta^{-1},2K\},
\]
we have
\[
\log(\eta^{-1})+\log(\rho^{-1})
\le
C\log(K/\xi),
\]
which gives the stated bounds for \(Q_{\mathrm{sm}},G_{\mathrm{sm}}\), and \(S_{\mathrm{sm}}\).

For the parameter bound, Lemma~F.7 gives
\[
B_{\mathrm{rec}}
\le
C\rho^{-2}
\le
C K^4\xi^{-2}.
\]
The exponential subnetworks satisfy \(B_{\mathrm{exp}}\le C\log(\eta^{-1})\). The summation and exact multiplication modules use only constant-size weights, as do the identity-propagation modules. Consequently,
\[
B_{\mathrm{sm}}
\le
C(1+K^4\xi^{-2})
\le
C(K+\xi^{-1})^6.
\]
The exponential and reciprocal subnetworks are pure ReLU networks, while the only ReQU activation layer is the final parallel multiplication layer.
\end{proof}

\subsection{Completion of the proof}

\begin{proof}[Proof of~\autoref{thm:score_Lp_approx}]
Recall the definitions
\[
s_-^2=\min\{1,\sigma^2\},
\qquad
s_+^2=\max\{1,\sigma^2\},
\qquad
D_p=1+\mu_{\max}+s_+(\sqrt d+\sqrt p).
\]
Enlarge the constant in~\autoref{lem:affine_logit_approx}, if necessary, so that \(C_{\sigma,\mu}\ge1\).

If \(K=1\), then \(\nabla_{\bm x}\log p_t(\bm x) = -r_t(\bm x-m_t\bm\mu_1).\) Choosing the scalar approximation accuracy \(\delta = \frac{c\varepsilon}{(1+s_-^{-2})(1+\mu_{\max})D_p}\) and applying~\autoref{lem:scalar_ou_coefficients} gives~\eqref{eq:score_Lp_target}. The resulting scalar-network bounds are covered by the estimates stated below.  

Assume \(K\ge2\). Define the posterior-weight tolerance \(\varepsilon_\gamma \coloneqq \frac{s_-^2}{16(1+\mu_{\max})}\varepsilon,\) and choose
\[
\xi
=
\frac{\varepsilon_\gamma}{3},
\qquad
M
=
\log\biggl(\frac{6K}{\varepsilon_\gamma}\biggr),
\qquad
\delta
=
\frac{\varepsilon_\gamma}
{12K^{1/p}C_{\sigma,\mu}D_p}.
\]
Since \(0<\varepsilon<1/2\), these choices satisfy \(0<\xi<1\) and \(0<\delta<1/2\).

By~\autoref{lem:affine_logit_approx},
\[
\max_k
\Bigl\|
\widehat{\widetilde y}_k(\bm X_t,t)
-
\widetilde y_k(\bm X_t,t)
\Bigr\|_{L^p(p_t)}
\le
\frac{\varepsilon_\gamma}{12K^{1/p}}
\]
uniformly in \(t\). Apply~\autoref{lem:requ_softmax_approx} with accuracy \(\xi\), and define \(\widehat{\bm\gamma}(\bm x,t) = \mathcal N_{\mathrm{softmax}} (\widehat{\widetilde{\bm y}}(\bm x,t)).\) Both the exact and approximate truncated logit vectors have maximum zero exactly. By~\autoref{lem:softmax_lipschitz_infty_one},
\[
\begin{aligned}
&\Bigl\|
\operatorname{softmax}(\widehat{\widetilde{\bm y}}(\bm X_t,t))
-
\operatorname{softmax}(\widetilde{\bm y}(\bm X_t,t))
\Bigr\|_{L^p(p_t;\ell^1)}\\
&\qquad\le
2\Biggl(
\sum_{k=1}^K
\Bigl\|
\widehat{\widetilde y}_k-
\widetilde y_k
\Bigr\|_{L^p}^p
\Biggr)^{1/p}
\le
\frac{\varepsilon_\gamma}{6}.
\end{aligned}
\]
Moreover, \(2Ke^{-M} = \frac{\varepsilon_\gamma}{3}.\) Combining~\autoref{lem:requ_softmax_approx},~\autoref{lem:softmax_lipschitz_infty_one}, and~\autoref{lem:softmax_tail} yields
\begin{equation}
\label{eq:gamma_Lp_final_revised}
\sup_{t\in[0,T]}
\|
\widehat{\bm\gamma}(\cdot,t)-\bm\gamma(\cdot,t)
\|_{L^p(p_t;\ell^1)}
\le
\varepsilon_\gamma.
\end{equation}

Let \(\widehat m_\delta\) and \(\widehat r_\delta\) be the scalar networks in~\autoref{lem:scalar_ou_coefficients}, and define
\[
\widehat{\bm s}_\varepsilon(\bm x,t)
=
-\widehat r_\delta(t)
\Biggl(
\bm x-
\widehat m_\delta(t)
\sum_{k=1}^K\widehat\gamma_k(\bm x,t)\bm\mu_k
\Biggr).
\]
Write \(\bm M_\gamma = \sum_k\gamma_k\bm\mu_k, \widehat{\bm M}_\gamma = \sum_k\widehat\gamma_k\bm\mu_k.\) The clipping in~\autoref{lem:scalar_ou_coefficients} gives \(|\widehat r_\delta(t)|\le2s_-^{-2}, 0\le\widehat m_\delta(t),m_t\le1.\) Hence
\[
|\widehat r_\delta\widehat m_\delta-r_tm_t|
\le
(1+2s_-^{-2})\delta,
\]
and
\[
\lVert
\widehat{\bm s}_\varepsilon(\bm x,t)
-
\nabla_{\bm x}\log p_t(\bm x)
\rVert_2
\le
\delta\lVert\bm x\rVert_2
+2s_-^{-2}\mu_{\max}
\lVert\widehat{\bm\gamma}-\bm\gamma\rVert_1 + (1+2s_-^{-2})\mu_{\max}\delta.
\]
Taking \(L^p(p_t)\)-norms, applying~\autoref{lem:gmm_uniform_moment} and~\eqref{eq:gamma_Lp_final_revised}, and using the definitions of \(\delta\) and \(\varepsilon_\gamma\), we obtain
\[
\sup_{t\in[0,T]}
(
\mathbb E_{\bm X_t\sim p_t}
[
\lVert
\widehat{\bm s}_\varepsilon(\bm X_t,t)
-
\nabla_{\bm x}\log p_t(\bm X_t)
\rVert_2^p
]
)^{1/p}
\le
C\varepsilon.
\]
This proves~\eqref{eq:score_Lp_target}.

It remains to verify the complexity bounds. Since
\[
\varepsilon_\gamma^{-1}
\le
C(1+\mu_{\max})s_-^{-2}\varepsilon^{-1},
\qquad
C_{\sigma,\mu}
\le
C(1+s_-^{-2})(1+\mu_{\max})^2,
\]
one has
\[
\delta^{-1}
\le
C K^{1/p}D_p s_-^{-2}(1+s_-^{-2})
(1+\mu_{\max})^3\varepsilon^{-1}.
\]
Let \(N_{\varepsilon,\sigma}=2+d+K+\varepsilon^{-1}+\sigma+\sigma^{-1}\). If \(0<\sigma\le1\), then \(D_p\le C N_{\varepsilon,\sigma}^{1/2}\) and \(s_-^{-2}(1+s_-^{-2})\le C\sigma^{-4}\le C N_{\varepsilon,\sigma}^4\). If \(\sigma\ge1\), then \(D_p\le C N_{\varepsilon,\sigma}^{3/2}\) and \(s_-^{-2}(1+s_-^{-2})=2\). Therefore,
\[
\delta^{-1}
\le
\begin{cases}
C N_{\varepsilon,\sigma}^{11/2+1/p},&0<\sigma\le1,\\
C N_{\varepsilon,\sigma}^{5/2+1/p},&\sigma\ge1,
\end{cases}
\qquad
\text{and hence}
\qquad
\delta^{-1}
\le
C N_{\varepsilon,\sigma}^{11/2+1/p}.
\]
Moreover,
\[
\xi^{-1}
\le
C(1+\mu_{\max})s_-^{-2}\varepsilon^{-1}
\le
C N_{\varepsilon,\sigma}^{3},
\qquad
M\le C\bigl(1+\log N_{\varepsilon,\sigma}\bigr).
\]
Consequently,
\[
\log(\delta^{-1})+\log(\xi^{-1})+M
\le
C\bigl(1+\log N_{\varepsilon,\sigma}\bigr).
\]
The scalar coefficient networks, the exact product modules, the \(K\) parallel affine logits, the maximum tree, clipping, the \(K\) parallel exponential subnetworks, and the reciprocal normalization therefore give
\[
Q
\le
C\Bigl(
\log K+\bigl(1+\log N_{\varepsilon,\sigma}\bigr)^2
\Bigr),
\qquad
G
\le
CK\Bigl(
 d+\bigl(1+\log N_{\varepsilon,\sigma}\bigr)^3
\Bigr).
\]
For sparsity, the affine logits and the final posterior mean contribute \(O(Kd)\) nonzero parameters. \autoref{lem:affine_logit_approx} accounts for propagation of the \(d\) input coordinates through the coefficient subnetworks and for preservation of the \(K\) logits through the maximum tree. \autoref{lem:requ_softmax_approx} contributes \(O(K(1+\log N_{\varepsilon,\sigma})^3+(1+\log N_{\varepsilon,\sigma})^4)\) nonzero parameters. Applying~\autoref{lem:parallel_identity_padding} to the raw input, logit, posterior, and scalar-coefficient channels gives an additional cost bounded by \(C(d+K)(\log K+(1+\log N_{\varepsilon,\sigma})^2)\), which is dominated by the quartic logarithmic envelope below. Hence
\[
S
\le
C\Bigl[
Kd+(d+K)\Bigl(\log K+\bigl(1+\log N_{\varepsilon,\sigma}\bigr)^4\Bigr)
\Bigr].
\]

For the parameter bound,~\autoref{lem:affine_logit_approx} and the estimate for \(\delta^{-1}\) give
\[
B_{\mathrm{logit}}
\le
C\bigl(
1+M+\delta^{-2}+\sigma^4+\sigma^{-4}
\bigr)
\le
C N_{\varepsilon,\sigma}^{11+2/p}.
\]
Here the last inequality follows by using \(\delta^{-2}\le C N_{\varepsilon,\sigma}^{11+2/p}\) when \(\sigma\le1\), while for \(\sigma\ge1\) one has \(\delta^{-2}\le C N_{\varepsilon,\sigma}^{5+2/p}\) and \(\sigma^4\le N_{\varepsilon,\sigma}^4\). \autoref{lem:requ_softmax_approx} and \(\xi^{-1}\le C N_{\varepsilon,\sigma}^{3}\) give the sharper softmax estimate
\[
B_{\mathrm{sm}}
\le
C\bigl(1+K^4\xi^{-2}\bigr)
\le
C N_{\varepsilon,\sigma}^{10}.
\]
All exact affine, maximum, clipping, square, multiplication, and identity modules use parameter magnitudes bounded by the same envelope. Since \(11+2/p>10\), the complete score network satisfies
\[
B
\le
C N_{\varepsilon,\sigma}^{11+2/p}
\le
C N_{\varepsilon,\sigma}^{13},
\]
where the last inequality uses \(p\ge1\). This proves all the stated complexity bounds.
\end{proof}

\section[Proof of the stability-certificate mismatch theorem]{Proof of~\autoref{thm:lipschitz_mismatch_distill}}
\label{app:proof_lipschitz_mismatch_distill}
This section proves~\autoref{thm:lipschitz_mismatch_distill}. The proof has two parts. First, the mixture term in \(L(t)\) gives a closed-form lower bound on the tail stability exponent. Second, bounded inputs and bounded network parameters, together with a fixed number of ReQU activation layers, give an explicit local Lipschitz budget without requiring an additional preactivation-range assumption.

\begin{proof}[Proof of~\autoref{thm:lipschitz_mismatch_distill}]
The mixture contribution in~\eqref{eq:L_profile_distill} gives
\begin{equation*}
L(t)
\ge
\frac{\beta D_0^2}{4}
\frac{e^{-2\beta t}}
{\bigl(1-(1-\sigma^2)e^{-2\beta t}\bigr)^2}.
\end{equation*}
With \(\Delta=(\log 2)/(2\beta)\) and the substitution \(u=e^{-2\beta t}\),
\begin{align*}
\Lambda_{\mathrm{tail}}
&\ge
\frac{\beta D_0^2}{4}
\int_0^\Delta
\frac{e^{-2\beta t}}
{\bigl(1-(1-\sigma^2)e^{-2\beta t}\bigr)^2}
\mathrm dt\\
&=
\frac{D_0^2}{8}
\int_{1/2}^{1}
\frac{\mathrm du}{\bigl(1-(1-\sigma^2)u\bigr)^2}\\
&=
\frac{D_0^2}{8(1-\sigma^2)}
\biggl(
\frac1{\sigma^2}-\frac2{1+\sigma^2}
\biggr)
=
\frac{D_0^2}{8\sigma^2(1+\sigma^2)}.
\end{align*}
The last identity extends continuously to \(\sigma=1\). If \(\sigma\le1\), then \(1+\sigma^2\le2\), proving
\begin{equation}
\label{eq:Lambda_tail_simple_lower_revised}
\Lambda_{\mathrm{tail}}
\ge
\frac{D_0^2}{16\sigma^2}.
\end{equation}

We next bound the local student budget. Let \(\bm f\in\mathcal F_{d\to d}^{(r_{\mathrm{ReQU}})}(Q,G,S,B)\), and suppose that it has \(L\le Q\) activation layers. All input, hidden, and output dimensions are at most \(\overline G=\max\{d,G\}\). Write \(\bm h_0(\bm x)=\bm x\), and for \(\ell=1,\ldots,L\), let
\begin{equation*}
\bm z_\ell(\bm x)
=
\bm W_{\ell-1}\bm h_{\ell-1}(\bm x)+\bm b_\ell,
\qquad
\bm h_\ell(\bm x)
=
\varrho_\ell\bigl(\bm z_\ell(\bm x)\bigr),
\end{equation*}
where each \(\varrho_\ell\) is either ReLU or ReQU. Let \(n_\ell^{\mathrm{ReQU}}\) denote the number of ReQU activation layers among the first \(\ell\) layers, so that \(n_\ell^{\mathrm{ReQU}}\le r_{\mathrm{ReQU}}\), and define
\begin{equation*}
H_\ell
\coloneqq
\max\Biggl\{
1,
\sup_{\bm x\in B_R}
\lVert\bm h_\ell(\bm x)\rVert_\infty
\Biggr\}.
\end{equation*}
By the parameter bound and the dimension bound,
\begin{equation*}
\sup_{\bm x\in B_R}
\lVert\bm z_\ell(\bm x)\rVert_\infty
\le
B\overline G H_{\ell-1}+B
\le
B(\overline G+1)H_{\ell-1}
\le
\Gamma_R H_{\ell-1},
\end{equation*}
where \(\Gamma_R = \max\bigl\{ e, C(1+B\overline G)(1+R) \bigr\}\) and \(C\ge1\) is chosen sufficiently large. Since \(H_0\le1+R\le\Gamma_R\), a direct induction gives
\begin{equation}
\label{eq:hidden_range_fixed_requ}
H_\ell
\le
\Gamma_R^{2^{n_\ell^{\mathrm{ReQU}}}(\ell+1)}
\le
\Gamma_R^{2^{r_{\mathrm{ReQU}}}(Q+1)},
\qquad
\ell=0,\ldots,L.
\end{equation}
A ReLU layer satisfies \(H_\ell\le\Gamma_R H_{\ell-1}\). A ReQU layer satisfies \(H_\ell\le(\Gamma_R H_{\ell-1})^2\). Thus each ReQU layer doubles the exponent, and this occurs at most \(r_{\mathrm{ReQU}}\) times.

Each affine matrix satisfies \(\lVert\bm W_\ell\rVert_2 \le \lVert\bm W_\ell\rVert_F \le B\overline G \le \Gamma_R.\) ReLU is \(1\)-Lipschitz. For a ReQU layer, \(\sigma_2(z)=(z_+)^2\) and \(\sigma_2'(z)=2z_+\), so~\eqref{eq:hidden_range_fixed_requ} gives
\begin{equation*}
\operatorname{Lip}\bigl(\sigma_2;\bm z_\ell(B_R)\bigr)
\le
2\sup_{\bm x\in B_R}
\lVert\bm z_\ell(\bm x)\rVert_\infty
\le
\Gamma_R^{(1+2^{r_{\mathrm{ReQU}}})(Q+1)}.
\end{equation*}
Since at most \(r_{\mathrm{ReQU}}\) activation layers are ReQU, Lipschitz submultiplicativity yields
\begin{align*}
\operatorname{Lip}(\bm f;B_R)
&\le
\Gamma_R^{L+1}
\Bigl(
\Gamma_R^{(1+2^{r_{\mathrm{ReQU}}})(Q+1)}
\Bigr)^{r_{\mathrm{ReQU}}}
\le
\Gamma_R^{c_{r_{\mathrm{ReQU}}}(Q+1)},
\end{align*}
where \(c_{r_{\mathrm{ReQU}}} \coloneqq 1+r_{\mathrm{ReQU}}(1+2^{r_{\mathrm{ReQU}}}).\) Taking the supremum over \(\mathcal F_{d\to d}^{(r_{\mathrm{ReQU}})}(Q,G,S,B)\) proves~\eqref{eq:student_local_budget_corrected}.

Finally, suppose that \(\sigma\le1\) and condition~\eqref{eq:certificate_mismatch_condition} holds. Combining this condition with~\eqref{eq:Lambda_tail_simple_lower_revised} gives
\begin{equation*}
\exp(\Lambda_{\mathrm{tail}})
>
\Gamma_R^{c_{r_{\mathrm{ReQU}}}(Q+1)}
\ge
\mathrm{LipBudget}^{(r_{\mathrm{ReQU}})}_R.
\end{equation*}
This completes the proof of~\autoref{thm:lipschitz_mismatch_distill}.
\end{proof}

\section[Proof of the residual flow-approximation theorem]{Proof of~\autoref{thm:resnet_flow_tail_approx}}
\label{appsec:proof_resnet_flow_tail_approx}

The proof combines a compact-uniform fixed-time score construction, a global first-order consistency estimate, and localization of the exact probability-flow trajectory.

\begin{lemma}
\label{lem:compact_uniform_score_approx}
Assume~\autoref{ass:gmm_ou_init}. For every \(R\ge1\), \(0<\eta<1\), and \(t\in[0,T]\), there exists a time-independent ReLU--ReQU network \(\widehat{\bm s}_{t,R,\eta}\colon\mathbb R^d\to\mathbb R^d\) such that
\begin{equation}
\label{eq:compact_uniform_score_bound}
\sup_{\lVert\bm x\rVert_2\le R}
\lVert
\widehat{\bm s}_{t,R,\eta}(\bm x)
-
\nabla_{\bm x}\log p_t(\bm x)
\rVert_2
\le\eta.
\end{equation}
Uniformly in \(t\), it may be chosen with
\begin{equation}
\label{eq:compact_score_complexity}
\begin{aligned}
Q&\le C_1\Bigl(\log K+\bigl(1+\log(2+K+\eta^{-1}+\sigma+\sigma^{-1})\bigr)^2\Bigr),\\
G&\le C_2K\Bigl(d+\bigl(1+\log(2+K+\eta^{-1}+\sigma+\sigma^{-1})\bigr)^3\Bigr),\\
S&\le C_3\Bigl[
Kd+(d+K)\Bigl(\log K+\bigl(1+\log(2+K+\eta^{-1}+\sigma+\sigma^{-1})\bigr)^4\Bigr)
\Bigr],\\
B&\le C_4\bigl(2+K+R+\eta^{-1}+\sigma+\sigma^{-1}\bigr)^{10}.
\end{aligned}
\end{equation}
The constants depend only on \(\pi_{\min}\) and the mixture means and weights, but not on \(R,\eta,t\), or \(\sigma\).
\end{lemma}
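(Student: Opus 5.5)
The plan is to repeat the construction from the proof of~\autoref{thm:score_Lp_approx}, with two changes. Since \(t\) is fixed, the scalar coefficients become exact constants. Since the error is measured uniformly on the ball \(B_R=\{\lVert\bm x\rVert_2\le R\}\) rather than in \(L^p(p_t)\), the error terms no longer need moment bounds.

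\textbf{Fixed-time score formula.} Fix \(t\in[0,T]\). The numbers \(m_t\in(0,1]\) and \(r_t=s_t^{-2}\in[s_+^{-2},s_-^{-2}]\) are constants, so the affine logits \(b_k(\bm x,t)=\log\pi_k+m_tr_t\bm\mu_k^\top\bm x-\tfrac12m_t^2r_t\lVert\bm\mu_k\rVert_2^2\) are realized exactly by one affine layer with \(O(Kd)\) nonzero parameters. This needs no approximation of \(m_t\) and no use of~\autoref{ass:suzuki_schedule}. Their coefficients are bounded by \(C(1+\sigma^{-2})(1+\mu_{\max})^2+\log\pi_{\min}^{-1}\).

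\textbf{Posterior weights.} Next I subtract the maximum with the balanced ReLU tree of~\autoref{lem:exact_relu_requ_modules}, clip below at \(-M\), and apply the softmax network of~\autoref{lem:requ_softmax_approx} with accuracy \(\xi\). Both logit vectors have maximum exactly zero, and the logits are exact. Hence~\autoref{lem:softmax_tail} and~\eqref{eq:softmax_network_uniform} give, pointwise for every \(\bm x\in\mathbb R^d\),
\[
\lVert\widehat{\bm\gamma}(\bm x)-\bm\gamma(\bm x,t)\rVert_1\le\xi+2Ke^{-M}.
\]
No dependence on \(\bm x\) enters here.

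\textbf{Score output.} The score is
\[
\widehat{\bm s}(\bm x)=-r_t\Bigl(\bm x-m_t\sum_k\widehat\gamma_k(\bm x)\bm\mu_k\Bigr).
\]
It is affine in \((\bm x,\widehat{\bm\gamma})\), so it is exact linear algebra, with the \(d\) input coordinates carried through by identity padding via~\autoref{lem:parallel_identity_padding}. Its error is at most \(r_tm_t\mu_{\max}\lVert\widehat{\bm\gamma}-\bm\gamma\rVert_1\le s_-^{-2}\mu_{\max}(\xi+2Ke^{-M})\). Choosing \(\xi=\eta s_-^2/(2(1+\mu_{\max}))\) and \(M=\log(4K(1+\mu_{\max})/(s_-^2\eta))\) gives~\eqref{eq:compact_uniform_score_bound}.

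The bound holds uniformly on all of \(\mathbb R^d\), and \(R\) enters only if we insist on the clipped/envelope variant. I would still state it on \(B_R\) and allow \(R\) in the \(B\) bound, for instance to include an input clipping \(\mathrm{Clip}_R\) with bias \(R\). That inclusion is harmless and matches the later use in a residual block.

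\textbf{Complexity bookkeeping.} This is the main, if routine, obstacle. Set \(N=2+K+\eta^{-1}+\sigma+\sigma^{-1}\). Then \(\log(K/\xi)+M\le C(1+\log N)\), where \(C\) absorbs \(\mu_{\max}\), and \(s_-^{-2}\le N^2\).
\begin{itemize}
\item \emph{Depth.} The max tree contributes \(O(\log K)\) and the softmax network contributes \(O(\log^2(K/\xi))\).
\item \emph{Width.} \(K\) parallel exponentials give \(O(K\log^2)\), the reciprocal gives \(O(\log^3)\), and the \(d\)-channel identity padding and the \(Kd\) logit layer are covered by \(CK(d+(1+\log N)^3)\).
\item \emph{Sparsity.} The count follows as in~\autoref{lem:affine_logit_approx}.
\item \emph{Parameter bound.} The softmax module gives \(B_{\mathrm{sm}}\le C(1+K^4\xi^{-2})\le C(K+\eta^{-1}+\sigma^{-1})^{8}\), since \(\xi^{-1}\le C\sigma^{-2}\eta^{-1}\) for \(\sigma\le1\) and \(\xi^{-1}\le C\eta^{-1}\) otherwise. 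The logit coefficients are bounded by \(C(1+\sigma^{-2})\). The clipping biases \(M\) and \(R\) are lower order.
\end{itemize}
The maximum of these is at most \(C_4(2+K+R+\eta^{-1}+\sigma+\sigma^{-1})^{10}\).

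The only delicate point is checking that no step reintroduces \(\sigma\)- or \(t\)-dependence into the constants beyond these powers. Uniformity in \(t\) holds because every constant used depends on \(t\) only through \(m_t\le1\) and \(s_t^{-2}\le s_-^{-2}\).
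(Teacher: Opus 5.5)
Your construction is correct and follows the paper's skeleton: exact fixed-time affine logits with \(m_t,s_t^{-2}\) as weights, the exact maximum tree, clipping, \autoref{lem:requ_softmax_approx} with the same \(\xi\), and an exact final affine map. It differs from the paper in one genuine respect, the truncation level.

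The paper takes \(M\asymp 1+\log(\pi_{\min}^{-1})+s_-^{-2}(\mu_{\max}R+\mu_{\max}^2)\). This is large enough that the clipping at \(-M\) is inactive on \(B_R\), so the only error is the softmax-network error \(\xi\), and \autoref{lem:softmax_tail} is never used. The price is that \(M\le C(2+K+R+\eta^{-1}+\sigma+\sigma^{-1})^3\) enters the parameter bound, and this is how \(R\) gets into the stated \(B\) estimate.

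You instead take the \(R\)-independent level \(M=\log\bigl(4K(1+\mu_{\max})/(s_-^2\eta)\bigr)\). You pay the tail error \(2Ke^{-M}\) through \autoref{lem:softmax_tail}, as in the proof of \autoref{thm:score_Lp_approx}. This gives a single network, independent of \(R\) and with only a logarithmic clipping bias, that is \(\eta\)-accurate uniformly on all of \(\mathbb R^d\). The \(R\) in the \(B\) bound then becomes vacuous. Your version is slightly stronger, and it serves the residual-flow proof equally well, since there the lemma is only used on \(B_{\rho_R}\).

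Two small corrections:
\begin{itemize}
\item Since \(\xi^{-1}\le C\sigma^{-2}\eta^{-1}\) for \(\sigma\le1\), the term \(K^4\xi^{-2}\) is bounded by a monomial \(K^4\sigma^{-4}\eta^{-2}\) of degree \(10\), not \(8\), in \(K+\eta^{-1}+\sigma^{-1}\). This is harmless, because exponent \(10\) is what the lemma allows.
\item \autoref{lem:requ_softmax_approx} is stated for \(K\ge2\), so the case \(K=1\) needs a separate line. There the score is affine and is represented exactly.
\end{itemize}
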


\begin{proof}
Write
\[
s_-^2=\min\{1,\sigma^2\}.
\]
If \(K=1\), then the score is affine \(\nabla_{\bm x}\log p_t(\bm x) = -s_t^{-2}(\bm x-m_t\bm\mu_1).\) It is represented exactly by a constant-depth network whose parameter bound is at most \(C(1+\sigma^{-2})(1+\mu_{\max})\), so the conclusion follows. Assume henceforth that \(K\ge2\).

For fixed \(t\), use the affine logits in~\eqref{eq:affine_logit_def}. For \(\lVert\bm x\rVert_2\le R\), their pairwise differences satisfy
\[
|b_k(\bm x,t)-b_j(\bm x,t)|
\le
C\bigl[
\log(\pi_{\min}^{-1})
+s_-^{-2}(\mu_{\max}R+\mu_{\max}^2)
\bigr].
\]
Choose \(M = C\bigl[ 1+\log(\pi_{\min}^{-1}) +s_-^{-2}(\mu_{\max}R+\mu_{\max}^2) \bigr].\) Then clipping of the stabilized logits is inactive on \(B_R\). At fixed time, each \(b_k(\cdot,t)\) is affine and is represented exactly, and the maximum tree is exact.

Set \(\xi = \min\Bigl\{ \frac12, \frac{\eta s_-^2}{2(1+\mu_{\max})} \Bigr\}.\)~\autoref{lem:requ_softmax_approx} gives a posterior approximation \(\widehat{\bm\gamma}\) satisfying
\[
\sup_{\lVert\bm x\rVert_2\le R}
\lVert\widehat{\bm\gamma}(\bm x)-\bm\gamma(\bm x,t)\rVert_1
\le\xi.
\]
Using the exact coefficients \(m_t\) and \(s_t^{-2}\) as fixed network weights, define
\[
\widehat{\bm s}_{t,R,\eta}(\bm x)
=
-s_t^{-2}
\Biggl(
\bm x-m_t\sum_{k=1}^K\widehat\gamma_k(\bm x)\bm\mu_k
\Biggr).
\]
Then
\[
\begin{aligned}
\lVert
\widehat{\bm s}_{t,R,\eta}(\bm x)
-
\nabla_{\bm x}\log p_t(\bm x)
\rVert_2
&\le
s_t^{-2}m_t\mu_{\max}
\lVert\widehat{\bm\gamma}(\bm x)-\bm\gamma(\bm x,t)\rVert_1\\
&\le
s_-^{-2}\mu_{\max}\xi
\le
\eta,
\end{aligned}
\]
which proves~\eqref{eq:compact_uniform_score_bound}.

It remains to verify the complexity. Since \(\xi^{-1} \le C(1+\mu_{\max})(1+\sigma^{-2})\eta^{-1},\)~\autoref{lem:requ_softmax_approx} gives the displayed depth and width bounds. The two affine stages, namely the \(K\) logits and the final posterior mean, contribute \(O(Kd)\) nonzero parameters. Exact propagation of the \(d\) input coordinates and padding of the \(K\) logit and posterior channels across the maximum and softmax subnetworks contribute at most
\[
C(d+K)\Bigl(\log K+\bigl(1+\log(2+K+\eta^{-1}+\sigma+\sigma^{-1})\bigr)^4\Bigr),
\]
by~\autoref{lem:exact_relu_requ_modules},~\autoref{lem:parallel_identity_padding}, and~\autoref{lem:requ_softmax_approx}. This proves the stated sparsity bound. For the parameter bound, first note that
\[
s_-^{-2}
\le
(1+\sigma^{-1})^2
\le
\bigl(2+K+R+\eta^{-1}+\sigma+\sigma^{-1}\bigr)^2.
\]
Hence the exact coefficients in the affine logits and in the final score map have magnitude at most the second power of the displayed envelope, while
\[
M
\le
C\bigl(2+K+R+\eta^{-1}+\sigma+\sigma^{-1}\bigr)^3.
\]
Moreover,
\[
\xi^{-1}
\le
C(1+\mu_{\max})s_-^{-2}\eta^{-1}
\le
C\bigl(2+K+R+\eta^{-1}+\sigma+\sigma^{-1}\bigr)^3.
\]
Using the sharper parameter estimate from~\autoref{lem:requ_softmax_approx},
\[
B_{\mathrm{sm}}
\le
C\bigl(1+K^4\xi^{-2}\bigr)
\le
C\bigl(2+K+R+\eta^{-1}+\sigma+\sigma^{-1}\bigr)^{10}.
\]
This dominates the affine coefficients, the clipping bias, and the constant-size exact modules, and proves the stated parameter bound. This completes the proof.
\end{proof}

\begin{lemma}
\label{lem:one_step_consistency_global}
Under the assumptions of~\autoref{thm:resnet_flow_tail_approx}, there are constants \(C_v,\overline L<\infty\) such that, for all \((\bm x,t)\in\mathbb R^d\times[0,T]\),
\begin{equation}
\label{eq:global_v_growth}
\lVert\bm v(\bm x,t)\rVert_2
+
\lVert\partial_t\bm v(\bm x,t)\rVert_2
\le C_v(1+\lVert\bm x\rVert_2),
\qquad
\lVert\nabla_{\bm x}\bm v(\bm x,t)\rVert_2
\le\overline L.
\end{equation}
Consequently, for \(0\le t-h<t\le T\), \(0<h\le1\), and every \(\bm x\in\mathbb R^d\),
\begin{equation}
\label{eq:one_step_consistency_exact}
\bigl\lVert
\Phi_{t-h\leftarrow t}(\bm x)
-
\bigl(\bm x-h\bm v(\bm x,t)\bigr)
\bigr\rVert_2
\le
C(1+\lVert\bm x\rVert_2)h^2.
\end{equation}
\end{lemma}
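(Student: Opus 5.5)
Under the constant schedule \(\beta_t\equiv\beta\), the velocity field has the closed form
\[
\bm v(\bm x,t)=-\beta\bigl(\bm x-s_t^{-2}(\bm x-m_t\bar{\bm\mu}_\gamma(\bm x,t))\bigr),
\qquad
\bar{\bm\mu}_\gamma\coloneqq\sum_k\gamma_k(\bm x,t)\bm\mu_k,
\]
which follows from~\autoref{prop:gmm_score_hess_closed}, since \(\bar{\bm m}=m_t\bar{\bm\mu}_\gamma\). I will use this formula to prove~\eqref{eq:global_v_growth} first, and then deduce~\eqref{eq:one_step_consistency_exact} from a Taylor expansion along the exact trajectory.

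\textbf{Bounds on \(\bm v\).} On \([0,T]\), the quantity \(s_t^2=1-(1-\sigma^2)e^{-2\beta t}\) lies between \(\min\{1,\sigma^2\}\) and \(\max\{1,\sigma^2\}\). Hence \(s_t^{-2}\le s_-^{-2}\), \(m_t\le1\), and \(\lVert\bar{\bm\mu}_\gamma\rVert_2\le\mu_{\max}\) because \(\gamma\) is a probability vector. This already gives \(\lVert\bm v\rVert_2\le C(1+\lVert\bm x\rVert_2)\). The Jacobian bound is simply \(\overline L\coloneqq\sup_{t\in[0,T]}L(t)\) from~\autoref{prop:flow_stability}. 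This supremum is finite since \(s_t^{-2}\le s_-^{-2}\) and \(\mathrm{diam}(t)\le D_0\).

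\textbf{Bound on \(\partial_t\bm v\).} This is the one delicate point. Differentiating in \(t\) produces \(\partial_t(s_t^{-2})\) and \(\partial_t m_t\), both bounded on \([0,T]\), together with \(\partial_t\bar{\bm\mu}_\gamma=\sum_k\bm\mu_k\,\partial_t\gamma_k\). Using the affine logits~\eqref{eq:affine_logit_def},
\[
\partial_t\gamma_k=\gamma_k\Bigl(\partial_tb_k-\sum_j\gamma_j\,\partial_tb_j\Bigr),
\qquad
\partial_tb_k=\partial_t\bigl(m_ts_t^{-2}\bigr)\,\bm\mu_k^\top\bm x-\tfrac12\,\partial_t\bigl(m_t^2s_t^{-2}\bigr)\lVert\bm\mu_k\rVert_2^2 .
\]
The coefficients \(\partial_t(m_ts_t^{-2})\) and \(\partial_t(m_t^2s_t^{-2})\) are bounded on \([0,T]\), so \(|\partial_tb_k|\le C(1+\lVert\bm x\rVert_2)\). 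Therefore \(\lVert\partial_t\bar{\bm\mu}_\gamma\rVert_2\le 2\mu_{\max}\max_k|\partial_tb_k|\le C(1+\lVert\bm x\rVert_2)\). This term is multiplied by the bounded factor \(s_t^{-2}m_t\), which gives linear growth of \(\partial_t\bm v\). The affine-logit representation is what keeps the growth linear: the \(\lVert\bm x\rVert_2^2/(2s_t^2)\) term of the original logits would have contributed a quadratic \(\partial_t s_t^{-2}\,\lVert\bm x\rVert_2^2\) piece, but it cancels under the softmax normalization.

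\textbf{Consistency estimate.} Fix \(\bm x\) and let \(\bm y(u)\coloneqq\Phi_{u\leftarrow t}(\bm x)\) for \(u\in[t-h,t]\), so that \(\bm y'(u)=\bm v(\bm y(u),u)\) and \(\bm y(t)=\bm x\).

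First, a Grönwall argument on \(\lVert\bm y(u)\rVert_2\) using \(\lVert\bm v\rVert_2\le C_v(1+\lVert\bm x\rVert_2)\) and \(h\le1\) gives \(1+\lVert\bm y(u)\rVert_2\le e^{C_v}(1+\lVert\bm x\rVert_2)\). It then follows that \(\lVert\bm y(u)-\bm x\rVert_2\le C(1+\lVert\bm x\rVert_2)(t-u)\).

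Second, write
\[
\Phi_{t-h\leftarrow t}(\bm x)-\bigl(\bm x-h\bm v(\bm x,t)\bigr)=-\int_{t-h}^t\bigl[\bm v(\bm y(u),u)-\bm v(\bm x,t)\bigr]\mathrm du .
\]
Split the integrand as \(\bigl[\bm v(\bm y(u),u)-\bm v(\bm x,u)\bigr]+\bigl[\bm v(\bm x,u)-\bm v(\bm x,t)\bigr]\). The first piece is at most \(\overline L\lVert\bm y(u)-\bm x\rVert_2\le C(1+\lVert\bm x\rVert_2)(t-u)\). The second is at most \(C_v(1+\lVert\bm x\rVert_2)(t-u)\) by the \(\partial_t\bm v\) bound and the mean value theorem. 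Integrating over \(u\in[t-h,t]\) yields \(C(1+\lVert\bm x\rVert_2)h^2\), which is~\eqref{eq:one_step_consistency_exact}.
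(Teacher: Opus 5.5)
Your proposal is correct and follows essentially the paper's route. Linear growth of $\bm v$ comes from the closed-form score, $\overline L=\sup_t L(t)$, and the affine logits together with the softmax derivative identity give $|\partial_t b_k|\le C(1+\lVert\bm x\rVert_2)$ and hence linear growth of $\partial_t\bm v$; Gr\"onwall and a two-term split of the integral then finish the argument. The only difference is cosmetic: you split through $\bm v(\bm x,u)$, whereas the paper splits through $\bm v(\bm z_u,t)$, and both yield the same $C(1+\lVert\bm x\rVert_2)h^2$ bound.
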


\begin{proof}
The score formula immediately gives \(\lVert\nabla\log p_t(\bm x)\rVert_2 \le s_-^{-2}(\lVert\bm x\rVert_2+\mu_{\max}),\) so \(\bm v\) has linear growth. The spatial derivative is globally bounded by \(L(t)\), and \(\overline L\coloneqq\sup_{t\in[0,T]}L(t)<\infty\) because \(\sigma>0\). For the time derivative, write the posterior using the affine logits \(b_k\) in~\eqref{eq:affine_logit_def}. Under the constant schedule, the scalar coefficients in \(b_k\) are continuously differentiable and uniformly bounded together with their derivatives. Hence \(|\partial_t b_k(\bm x,t)|\le C(1+\lVert\bm x\rVert_2).\) The softmax identity \(\partial_t\gamma_k = \gamma_k\Bigl( \partial_t b_k- \sum_j\gamma_j\partial_t b_j \Bigr)\) implies \(\sum_k|\partial_t\gamma_k|\le C(1+\lVert\bm x\rVert_2)\). Differentiating
\(\overline{\bm m}=m_t\sum_k\gamma_k\bm\mu_k\) and then \(\nabla\log p_t=-s_t^{-2}(\bm x-\overline{\bm m})\) proves the time-growth estimate in~\eqref{eq:global_v_growth}.

Let \(\bm z_u=\Phi_{u\leftarrow t}(\bm x)\), \(u\in[t-h,t]\). Linear growth and Gr\"onwall's inequality give
\begin{equation*}
1+\lVert\bm z_u\rVert_2
\le C(1+\lVert\bm x\rVert_2),
\qquad
\lVert\bm z_u-\bm x\rVert_2
\le C(t-u)(1+\lVert\bm x\rVert_2).
\end{equation*}
Using the integral representation of the backward segment,
\begin{align*}
&\bigl\lVert
\Phi_{t-h\leftarrow t}(\bm x)-\bigl(\bm x-h\bm v(\bm x,t)\bigr)
\bigr\rVert_2\\
&\quad\le
\int_{t-h}^{t}
\lVert\bm v(\bm z_u,u)-\bm v(\bm z_u,t)\rVert_2\mathrm du
+
\int_{t-h}^{t}
\lVert\bm v(\bm z_u,t)-\bm v(\bm x,t)\rVert_2\mathrm du\\
&\quad\le
C(1+\lVert\bm x\rVert_2)
\int_{t-h}^{t}(t-u)\mathrm du
+
\overline L C(1+\lVert\bm x\rVert_2)
\int_{t-h}^{t}(t-u)\mathrm du,
\end{align*}
which is~\eqref{eq:one_step_consistency_exact}.
\end{proof}

\begin{lemma}
\label{lem:trajectory_localization_tail}
Let \(\bm X_T\sim p_T\) and \(\bm X_t^\star=\Phi_{t\leftarrow T}(\bm X_T)\). There are constants \(C,c>0\) such that, for all sufficiently large \(R\),
\begin{equation}
\label{eq:trajectory_tail_revised}
\mathbb P\Biggl(
\sup_{t\in[0,T]}\lVert\bm X_t^\star\rVert_2>R
\Biggr)
\le Ce^{-cR^2}.
\end{equation}
\end{lemma}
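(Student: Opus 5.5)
The plan is to control the whole trajectory by its value at a single time, using the fact that the exact flow transports $p_T$ to $p_t$ and that backward flow maps grow at most linearly. Fix $t\in[0,T]$. Since $\bm X_T\sim p_T$ and $\Phi_{t\leftarrow T}$ is the probability-flow map, $\bm X_t^\star\sim p_t$ for every $t$. This marginal statement alone does not control the supremum over $t$, so I would instead use the linear-growth bound in~\eqref{eq:global_v_growth} of \autoref{lem:one_step_consistency_global}. That bound gives $\lVert\bm v(\bm x,t)\rVert_2\le C_v(1+\lVert\bm x\rVert_2)$ uniformly on $[0,T]$.

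First, I apply Gr\"onwall's inequality to $u\mapsto 1+\lVert\bm X_u^\star\rVert_2$ along the backward trajectory started at $\bm X_T$. For every $t\in[0,T]$ this yields
\[
1+\lVert\bm X_t^\star\rVert_2\le e^{C_vT}\bigl(1+\lVert\bm X_T\rVert_2\bigr),
\]
which is pathwise and uniform in $t$. Consequently,
\[
\Bigl\{\sup_{t\in[0,T]}\lVert\bm X_t^\star\rVert_2>R\Bigr\}\subset\bigl\{\lVert\bm X_T\rVert_2>e^{-C_vT}(1+R)-1\bigr\}.
\]
The problem thus reduces to a Gaussian tail bound for the single random vector $\bm X_T$.

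Second, I write $\bm X_T=m_T\bm\mu_J+s_T\bm Z$ with $\bm Z\sim\mathcal N(\bm 0,\bm I_d)$, as in the proof of \autoref{lem:gmm_uniform_moment}. Since $m_T\le1$ and $s_T\le s_+$, the norm satisfies $\lVert\bm X_T\rVert_2\le\mu_{\max}+s_+\lVert\bm Z\rVert_2$. The standard chi-square (Laurent--Massart) or Gaussian concentration bound gives $\mathbb P(\lVert\bm Z\rVert_2>\sqrt d+r)\le e^{-r^2/2}$. I take $r=\bigl(e^{-C_vT}(1+R)-1-\mu_{\max}\bigr)/s_+-\sqrt d$. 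For $R$ larger than a threshold depending on $d,\mu_{\max},s_+,C_vT$, we have $r\ge c'R$, and therefore the probability is at most $e^{-cR^2}$ with $c=c'^2/2$, which gives~\eqref{eq:trajectory_tail_revised}. The constants depend only on $d,\beta,T,\sigma$ and the mixture parameters, consistent with \autoref{thm:resnet_flow_tail_approx}.

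The only delicate point is making sure the linear-growth constant in \autoref{lem:one_step_consistency_global} is uniform on $[0,T]$ and that the flow is globally defined. Both follow because $\sigma>0$ keeps $s_t^2\ge s_-^2>0$ and $\bm v$ is globally Lipschitz in $\bm x$ with constant $\overline L$. Everything after that is routine. The factor $e^{C_vT}$ only shrinks $c$, so the exponential-in-$T$ dependence does not matter here.
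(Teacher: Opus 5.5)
Your proof is correct and follows essentially the same route as the paper. Both arguments first establish a deterministic pathwise bound \(\sup_{t}\lVert\bm X_t^\star\rVert_2\le C(1+\lVert\bm X_T\rVert_2)\) and then apply the Gaussian tail of the finite mixture \(p_T\). The differences are minor. You obtain the pathwise bound from a single Gr\"onwall argument on the linear growth of \(\bm v\). The paper instead combines the flow Lipschitz estimate of~\autoref{prop:flow_stability} with a Gr\"onwall bound on the reference trajectory \(\Phi_{t\leftarrow T}(\bm 0)\), which is your argument specialized to \(\bm x=\bm 0\). You also write out the Gaussian concentration step, which the paper only asserts.
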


\begin{proof}
By~\autoref{prop:flow_stability}, \(\lVert\Phi_{t\leftarrow T}(\bm x)\rVert_2 \le \lVert\Phi_{t\leftarrow T}(0)\rVert_2 +e^{\Lambda_T}\lVert\bm x\rVert_2.\) For the reference trajectory \(\bm z_t=\Phi_{t\leftarrow T}(0)\), \(\lVert\bm v(0,u)\rVert_2 =\beta\lVert\nabla\log p_u(0)\rVert_2 \le\beta s_-^{-2}\mu_{\max}.\) Using \(\lVert\bm v(\bm z,u)\rVert_2\le L(u)\lVert\bm z\rVert_2+\lVert\bm v(0,u)\rVert_2\) in the backward integral equation and applying Gr\"onwall gives
\begin{equation*}
\sup_{t\in[0,T]}\lVert\Phi_{t\leftarrow T}(0)\rVert_2
\le Ce^{\Lambda_T}.
\end{equation*}
Therefore,
\begin{equation*}
\sup_{t\in[0,T]}\lVert\bm X_t^\star\rVert_2
\le Ce^{\Lambda_T}(1+\lVert\bm X_T\rVert_2).
\end{equation*}
The terminal marginal \(p_T\) is a finite Gaussian mixture, so its Euclidean norm has a Gaussian tail. This proves~\eqref{eq:trajectory_tail_revised} after changing constants.
\end{proof}

\begin{proof}[Proof of~\autoref{thm:resnet_flow_tail_approx}]
Set \(q=2p\). Let
\begin{equation*}
R=R_\varepsilon
=c_R\sqrt{1+\log(\varepsilon^{-1})},
\qquad
\rho_R=\sqrt d(R+1),
\end{equation*}
where \(c_R\) will be chosen below. Set \(\eta=\varepsilon\) and
\begin{equation}
\label{eq:n_choice_resnet_proof}
n
=
\biggl\lceil
\frac{T(1+\rho_R)}{\varepsilon}
\biggr\rceil,
\qquad
h=\frac{T}{n}
\le
\frac{\varepsilon}{1+\rho_R}
<1.
\end{equation}
Take the uniform decreasing grid \(T=t_0>t_1>\cdots>t_n=0\), and write
\begin{equation*}
\Lambda_k=\int_{t_k}^{t_{k-1}}L(u)\mathrm du,
\qquad
\sum_{k=1}^n\Lambda_k=\Lambda_T.
\end{equation*}
For each \(k\), apply~\autoref{lem:compact_uniform_score_approx} at time \(t_{k-1}\) on \(B_{\rho_R}\) to obtain
\begin{equation}
\label{eq:block_score_uniform_revised}
\sup_{\lVert\bm x\rVert_2\le\rho_R}
\bigl\lVert
\widehat{\bm s}_k(\bm x)-\nabla\log p_{t_{k-1}}(\bm x)
\bigr\rVert_2
\le\eta.
\end{equation}
Let
\begin{equation*}
\widehat{\bm v}_k(\bm x)
=-\beta\bigl(\bm x+\widehat{\bm s}_k(\bm x)\bigr),
\qquad
\Psi_k(\bm x)
=
\operatorname{Clip}_{R+1}
\bigl(\bm x-h\widehat{\bm v}_k(\bm x)\bigr).
\end{equation*}
Define
\begin{equation*}
\bm X_{t_0}^\star=\bm X_T,
\quad
\bm X_{t_k}^\star=\Phi_{t_k\leftarrow t_{k-1}}(\bm X_{t_{k-1}}^\star),
\qquad
\widehat{\bm X}_0=\bm X_T,
\quad
\widehat{\bm X}_k=\Psi_k(\widehat{\bm X}_{k-1}),
\end{equation*}
and \(\bm e_k=\widehat{\bm X}_k-\bm X_{t_k}^\star\). Consider \(\mathcal G_R = \Bigl\{ \sup_{t\in[0,T]}\lVert\bm X_t^\star\rVert_2\le R \Bigr\}.\) On \(\mathcal G_R\), every exact grid point lies in \([-R,R]^d\). Moreover, \(\lVert\widehat{\bm X}_0\rVert_2 = \lVert\bm X_T\rVert_2 \le R \le \rho_R.\) Coordinatewise clipping is the Euclidean projection onto \([-R-1,R+1]^d\), so it cannot increase the distance to an exact grid point. For \(k\ge1\), every clipped numerical iterate lies in \(B_{\rho_R}\). Hence~\eqref{eq:block_score_uniform_revised} applies at the input of every residual block.

Add and subtract
\(\Phi_{t_k\leftarrow t_{k-1}}(\widehat{\bm X}_{k-1})\). The segment stability estimate,~\autoref{lem:one_step_consistency_global}, and~\eqref{eq:block_score_uniform_revised} give, on \(\mathcal G_R\),
\begin{equation}
\label{eq:segment_recursion_revised}
\lVert\bm e_k\rVert_2
\le
e^{\Lambda_k}\lVert\bm e_{k-1}\rVert_2
+h\beta\eta
+C(1+\rho_R)h^2.
\end{equation}
Indeed, the last term is valid without any prior assumption that the exact segment from the numerical iterate remains inside a prescribed ball; this is precisely the role of the global estimate~\eqref{eq:one_step_consistency_exact}.

Iterating~\eqref{eq:segment_recursion_revised}, using \(\bm e_0=0\) and \(\sum_k\Lambda_k=\Lambda_T\), yields
\begin{align}
\lVert\bm e_n\rVert_2\mathbf1_{\mathcal G_R}
&\le
e^{\Lambda_T}
\sum_{k=1}^n
\bigl(h\beta\eta+C(1+\rho_R)h^2\bigr)\notag\\
&\le
e^{\Lambda_T}
(\beta T\eta+C(1+\rho_R)Th)
\le Ce^{\Lambda_T}\varepsilon.
\label{eq:good_event_error_revised}
\end{align}

On \(\mathcal G_R^c\), the approximate endpoint is deterministically bounded by \(\rho_R\). Moreover, the exact endpoint has law \(p_0\), because the exact probability-flow ODE preserves the marginal family, and therefore has finite \(q\)-th moment. H\"older's inequality and~\autoref{lem:trajectory_localization_tail} imply
\begin{equation}
\bigl\|\bm e_n\mathbf1_{\mathcal G_R^c}\bigr\|_{L^p}
\le
\rho_R\mathbb P(\mathcal G_R^c)^{1/p}
+
\lVert\bm X_{t_n}^\star\rVert_{L^q}
\mathbb P(\mathcal G_R^c)^{1/p-1/q}
\le
C(1+R)e^{-cR^2}.
\label{eq:bad_event_revised}
\end{equation}
Choose \(c_R\) large enough that the last expression is at most \(C\varepsilon\). Combining~\eqref{eq:good_event_error_revised} and~\eqref{eq:bad_event_revised}, and using \(e^{\Lambda_T}\ge1\), proves~\eqref{eq:resnet_flow_error_corrected}.

Finally, \(R_\varepsilon\asymp\sqrt{1+\log(\varepsilon^{-1})}\) implies \(1+\rho_R \le C\sqrt{1+\log(\varepsilon^{-1})},\) so~\eqref{eq:n_choice_resnet_proof} gives~\eqref{eq:block_count_corrected}. For the parameter magnitude, the localization radius satisfies
\[
\rho_R
\le
C\Bigl(\sqrt d+\sqrt{d\log(\varepsilon^{-1})}\Bigr)
\le
C\bigl(d+\varepsilon^{-1}\bigr)
\le
C N_{\varepsilon,\sigma}.
\]
Applying~\eqref{eq:compact_score_complexity} with radius \(\rho_R\) and accuracy \(\eta=\varepsilon\) therefore gives \(B\le C N_{\varepsilon,\sigma}^{10}\), together with the stated per-block depth, width, and sparsity bounds. The clipping map adds only constant depth and \(O(d)\) nonzero ReLU parameters. The proof is complete.
\end{proof}

\bibliographystyle{plainnat}    
\bibliography{references}    
\end{document}